\def\equationautorefname~#1\null{(#1)\null}
\def\x{{\mathbf x}}
\def\v{{\mathbf v}}
\def\z{{\mathbf z}}
\def\bz{{\bar{\z}}}
\def\Z{{\mathbf Z}}
\def\I{{\mathbf I}}
\def\D{{\mathbf D}}
\def\W{{\mathbf W}}
\def\w{{\mathbf w}}
\def\alfa{{\boldsymbol \alpha}}
\def\talfa{\tilde{{\boldsymbol \alpha}}}
\def\tD{\tilde{{\D}}}
\def\betta{{\boldsymbol \beta}}
\def\tbetta{\tilde{\boldsymbol \beta}}
\def\enc[#1]{\varphi_{#1}}
\newcommand{\cmnt}[1]{\qquad \text{\textcolor{Gray}{#1}}}
\newtheorem{theorem}{Theorem}[section]
\newtheorem{lemma}[theorem]{Lemma}
\newtheorem{remark}[theorem]{Remark}
\newtheorem{corollary}[theorem]{Corollary}
\title{Adversarial Robustness of \\ Supervised Sparse Coding}
\author{%
  Jeremias Sulam\\
  Johns Hopkins University\\
  \texttt{jsulam1@jhu.edu}
   \And
  Ramchandran Muthukumar\\
  Johns Hopkins University \\
  \texttt{rmuthuk1@jhu.edu}
   \AND
  Raman Arora \\
   Johns Hopkins University \\
   arora@cs.jhu.edu
}
\begin{document}

\maketitle

\begin{abstract}
Several recent results provide theoretical insights into the phenomena of adversarial examples. Existing results, however, are often limited due to a gap between the simplicity of the models studied and the complexity of those deployed in practice. In this work, we strike a better balance by considering a model that involves learning a representation while at the same time giving a precise generalization bound and a robustness certificate. 
We focus on the hypothesis class obtained by combining a sparsity-promoting encoder coupled with a linear classifier, and show an interesting interplay between the expressivity and stability of the (supervised) representation map and a notion of margin in the feature space. We bound the robust risk (to $\ell_2$-bounded perturbations) of hypotheses parameterized by dictionaries that achieve a mild encoder gap on training data. Furthermore, we provide a robustness certificate for end-to-end classification. We demonstrate the applicability of our analysis by computing certified accuracy on real data, and compare with other alternatives for certified robustness.
\end{abstract}

\section{Introduction}

With machine learning applications becoming ubiquitous in modern-day life, there exists an increasing concern about the robustness of the deployed models. Since first reported in \citep{szegedy2013intriguing,goodfellow2014explaining,biggio2013evasion}, these \emph{adversarial attacks} are small perturbations of the input, imperceptible to the human eye, which can nonetheless completely fluster otherwise well-performing systems. Because of clear security implications \citep{darpa}, this phenomenon has sparked an increasing amount of work dedicated to devising defense strategies \citep{metzen2017detecting,gu2014towards,madry2017towards} and correspondingly more sophisticated attacks \citep{carlini2017adversarial,athalye2018obfuscated,tramer2020adaptive}, with each group trying to triumph over the other in an arms-race of sorts.

A different line of research attempts to understand adversarial examples from a theoretical standpoint. Some works have focused 
on giving robustness certificates, thus providing a guarantee to withstand the attack of an adversary under certain assumptions \citep{cohen2019certified,raghunathan2018certified,wong2017provable}. Other works address questions of learnabiltiy \citep{shafahi2018adversarial,cullina2018pac,bubeck2018adversarial,tsipras2018robustness} or sample complexity \citep{schmidt2018adversarially,yin2018rademacher,tu2019theoretical}, in the hope of better characterizing the increased difficulty of learning hypotheses that are robust to adversarial attacks. 
While many of these results are promising, the analysis is often limited to simple models.

Here, we strike a better balance by considering a model that involves learning a representation while at the same time giving a precise generalization bound and a robustness certificate. 
In particular, we focus our attention on the adversarial robustness of the supervised sparse coding model \citep{mairal2011task}, or task-driven dictionary learning, consisting of a linear classifier acting on the representation computed via a supervised sparse encoder. 
We show an interesting interplay between the expressivity and stability of a (supervised) representation map and a notion of margin in the feature space. 
The idea of employing sparse representations as data-driven features for supervised learning goes back to the early days of deep learning \citep{coates2011importance,kavukcuoglu2010fast,zeiler2010deconvolutional,ranzato2007unsupervised}, and has had a significant impact on applications in computer vision and machine learning \citep{wright2010sparse,henaff2011unsupervised,mairal2008discriminative,mairal2007sparse,gu2014projective}. More recently, new connections between deep networks and sparse representations were formalized by~\cite{papyan2018theoretical}, which further helped deriving stability guarantees \citep{papyan2017working}, providing architecture search strategies and analysis \citep{tolooshams2019deep,murdock2020dataless,sulam2019multi}, and other theoretical insights \citep{xin2016maximal,aberdam2019multi,aghasi2020fast,aberdam2020ada,moreau2016understanding}. 
While some recent work has leveraged the stability properties of these latent representations to provide robustness guarantees against adversarial attacks \citep{romano2019adversarial}, these rely on rather stringent generative model assumptions that are difficult to be satisfied and verified in practice. In contrast, our assumptions rely on the existence of a positive \emph{gap} in the encoded features, as proposed originally by  \cite{mehta2013sparsity}. This distributional assumption is significantly milder -- it is directly satisfied by making traditional sparse generative model assumptions -- and can be directly quantified from data.

This work makes two main contributions: The first is a bound on the robust risk of hypotheses that achieve a mild encoder gap assumption, where the adversarial corruptions are bounded in $\ell_2$-norm. 
Our proof technique follows a standard argument based on a minimal $\epsilon$-cover of the parameter space, dating back to \cite{vapnik1971uniform} and adapted for matrix factorization and dictionary learning problems in \cite{gribonval2015sample}. However, the analysis of the Lipschitz continuity of the adversarial loss with respect to the model parameters is considerably more involved. The increase in the sample complexity is mild with adversarial corruptions of size $\nu$ manifesting as an additional term of order $\mathcal{O}\left((1+\nu)^2/m\right)$ in the bound, where $m$ is the number of samples, and a minimal encoder gap of $\mathcal{O}(\nu)$ is necessary. Much of our results extend directly to other supervised learning problems (e.g. regression).
Our second contribution is a robustness certificate that holds for every hypothesis in the function class for $\ell_2$ perturbations for multiclass classification. In a nutshell, this result guarantees that the label produced by the hypothesis will not change if the encoder gap is \emph{large enough} relative to the energy of the adversary, the classifier margin, and properties of the model (e.g. dictionary incoherence).

\section{Preliminaries and Background} \label{sec:Preliminiaries}
In this section, we first describe our notation and the learning problem, and then proceed to situate our contribution in relation to prior work. 

Consider the spaces of inputs, $\mathcal{X}\subseteq {B}_{\mathbb{R}^d}$, i.e. the unit ball in $\mathbb{R}^d$, and labels, $\mathcal{Y}$. Much of our analysis is applicable to a broad class of label spaces, but we will focus on binary and multi-class classification setting in particular. We assume that the data is sampled according to some unknown distribution $P$ over $\mathcal X \times \mathcal Y$.
Let $\mathcal{H} = \{f:\mathcal{X}\to\mathcal{Y}'\}$ denote a hypothesis class mapping inputs into some output space $\mathcal{Y}' \subseteq \mathbb{R}$. 
Of particular interest to us are norm-bounded linear predictors, $f(\cdot)=\langle \w, \cdot \rangle$, parametrized by $d$-dimensional vectors $\w\in \mathcal W = \{\w \in \mathbb{R}^d : \|\w\|_2\leq B\}$.   

From a learning perspective, we have a considerable understanding of the linear hypothesis class, both in a stochastic non-adversarial setting as well as in an adversarial context~ \citep{charles2019convergence,li2019inductive}. However, from an application standpoint, linear predictors are often too limited, and rarely applied directly on input features. Instead, most state-of-the-art systems involve learning a representation. In general, an \emph{encoder} map $\varphi:\mathcal{X}\to\mathcal{Z} \subseteq \mathbb{R}^p$, parameterized by parameters $\theta$, is composed with a linear function so that $f(\x) = \langle \w, \varphi_\theta(\x) \rangle$, for $\w\in\mathbb{R}^p$. This description applies to a large variety of popular models, including kernel-methods, multilayer perceptrons and deep convolutional neural networks.
Herein we focus on an encoder given as the solution to a Lasso problem \citep{tibshirani1996regression}. More precisely, we  consider $\varphi_\D(\x):\mathbb{R}^d\to \mathbb{R}^p$, defined by 
\begin{equation}
    \label{eqn:enc-1}
\varphi_\D(\x) \coloneqq \arg\min_\z \frac{1}{2} \|\x - \D\z\|^2_2 +\lambda \|\z\|_1.
\end{equation}
Note that when $\D$ is overcomplete, i.e. $p>d$, this problem is not strongly convex. Nonetheless, we will assume that that solution to Problem~\ref{eqn:enc-1} is unique\footnote{The solution is unique under mild assumptions \citep{tibshirani2013lasso}, and otherwise our results hold for any solution returned by a deterministic solver.}, and study the hypothesis class given by $ \mathcal{H} = \{ f_{\D,\w}(\x) = \langle \w, \varphi_\D(\x) \rangle : \w\in \mathcal{W}, \D \in \mathcal{D} \}$, where $\mathcal{W} = \{\w \in \mathbb{R}^p : \|\w\|_2\leq B\}$, and $\mathcal{D}$ is the oblique manifold of all matrices with unit-norm columns (or \emph{atoms}); i.e. $\mathcal{D} =\{ \D \in \mathbb{R}^{d\times p} : \|\D_i\|_2 = 1 ~ \forall i \in [p] \}$. 
While not explicit in our notation, $\enc[\D]{(\x)}$ depends on the value of $\lambda$. For notational simplicity, we also suppress subscripts $(\D,\w)$ in $f_{\D,\w}(\cdot)$ and simply write $f(\cdot)$.

We consider a bounded loss function $\ell: \mathcal{Y}\times  \mathcal{Y}' \to [0,b]$, with Lipschitz constant $L_\ell$. 
The goal of learning is to find an $f \in \mathcal{H}$ with  minimal risk, or expected loss, $R(f) = \textstyle\mathbb{E}_{(\x,y)\sim P} \left[ \ell(y,f(\x)) \right]$. Given a sample $S=\{(\x_i,y_i)\}_{i=1}^m$, drawn i.i.d. from $P$, a popular learning algorithm is empirical risk minimization (ERM) which involves finding $f_{\D, \w}$ that solves the following problem: \vspace{-.1cm}
\begin{equation} 
  \min_{\D,\w}~ \frac{1}{m} \sum_{i=1}^{m} \ell(y_i,f_{\D,\w}(\x_i)). \nonumber
\end{equation}\vspace{-.1cm}

\textbf{Adversarial Learning.} In an adversarial setting, we are interested in hypotheses that are robust to adversarial perturbations of inputs. We focus on \emph{evasion attacks}, in which an attack is deployed at test time (while the training samples are not tampered with). As a result, a more appropriate loss that incorporates the robustness to such contamination is the robust loss \citep{madry2017towards},  $\tilde{\ell}_\nu(y,f(\x)) \coloneqq \max_{\v\in\Delta_\nu} ~ \ell(y,f(\x+\v))$,
where $\Delta$ is some subset of $\mathbb{R}^d$ that restricts the {power} of the adversary. Herein we focus on $\ell_2$ norm-bounded corruptions, $\Delta _\nu = \{\v \in \mathbb{R}^d ~ : ~ \|\v\|_2\leq \nu \}$, and denote by $\tilde{R}_S(f) = \frac{1}{m} \sum_{i=1}^{m} \tilde{\ell}_\nu(y_i,f(\x_i))$ the empirical robust risk of $f$ and  $\tilde{R}(f) = \mathbb{E}_{(\x,y)\sim P}[ \tilde{\ell}_\nu(y,f(\x))]$ its population robust risk w.r.t. distribution $P$.

\textbf{Main Assumptions.} We make two general assumptions throughout this work. 
First, we assume that the dictionaries in $\mathcal{D}$ are $s$-incoherent, i.e, they satisfy a restricted isometry property (RIP). More precisely, for any $s$-sparse vector, $\z \in \mathbb R^p$ with $\|\z\|_0 = s$, there exists a minimal constant $\eta_s<1$ so that $\D$ is close to an isometry, i.e.  $(1-\eta_s)\|\z\|^2_2 \leq \|\D\z\|^2_2 \leq (1 + \eta_s) \|\z\|^2_2$. Broad classes of matrices are known to satisfy this property (e.g. sub-Gaussian matrices \citep{foucart2017mathematical}), although empirically computing this constant for a fixed (deterministic) matrix is generally intractable. Nonetheless, this quantity can be upper bounded by the correlation between columns of $\D$, either via mutual coherence \citep{donoho2003optimally} or the Babel function \citep{tropp2003improved}, both easily computed in practice.

Second, we assume that the map $\enc[\D]$ induces a positive \emph{encoder gap} on the computed features. Given a sample $\x\in\mathcal{X}$ and its encoding, $\enc[\D]{(\x)}$, we denote by $\Lambda^{p-s}$ the set of atoms of cardinality $(p-s)$, i.e., $\Lambda^{p-s} = \{ \mathcal{I} \subseteq \{1,\dots,p\} : |\mathcal{I}|=p-s \}$. The encoder gap $\tau_s(\cdot)$ induced by $\enc[\D]$ on any sample $\x$ is defined \citep{mehta2013sparsity} as 
\begin{equation}
   \tau_s(\x) \coloneqq \max_{\mathcal{I}\in\Lambda^{p-s}} \min_{i \in \mathcal{I}} ~\left( \lambda - | \langle \D_i , \x - \D\enc[\D](\x) \rangle | \right).  \nonumber
\end{equation}
{An equivalent and conceptually simpler definition for $\tau_s(\x)$ is the $(s+1)^{th}$ smallest entry in the  vector $\lambda\mathbf{1} - | \langle \D , \x - \D\enc[\D](\x) \rangle |$. Intuitively, this quantity can be viewed as a measure of maximal energy along any dictionary atom that is not in the support of an input vector.} 
More precisely, recall from the optimality conditions of Problem \eqref{eqn:enc-1} that $|\D_i^T(\x-\D\enc[\D]{(\x)})|=\lambda$ if $[\enc[\D]{(\x)}]_i\neq 0$, and $|\D_i^T(\x-\D\enc[\D]{(\x)})|\leq\lambda$ otherwise. Therefore, if $\tau_s$ is large, this indicates that there exist a set $\mathcal{I}$ of $(p-s)$ atoms that are \emph{far} from entering the support of $\enc[\D]{(\x)}$.
If $\enc[\D](\x)$ has exactly $k$ non-zero entries, we may choose some $s>k$ to obtain $\tau_s(\x)$. In general, $\tau_s(\cdot)$ depends on the energy of the residual, $\x-\D\enc[\D]{(\x)}$, the correlation between the atoms, the parameter $\lambda$, and the cardinality $s$. 
In a nutshell, if a dictionary $\D$ provides a quickly decaying approximation error as a function of the cardinality $s$, then a positive encoder gap exists for some $s$.

{We consider dictionaries that induce a positive encoder gap in every input sample from a dataset, and define the minimum such margin as $\tau_s^*:=\min_{i\in[m]} \tau_s(\x_i) > 0.$} 
Such a positive encoder exist for quite general distributions, such as $s$-sparse and approximately sparse signals. However, this definition is more general and it will allow us to avoid making any other stronger distributional assumptions. We now illustrate such the encoder gap with both analytic and numerical examples\footnote{Code to reproduce all of our experiments is made available at \href{https://github.com/Sulam-Group/Adversarial-Robust-Supervised-Sparse-Coding}{our github repository.}}.

\textbf{Approximate $k$-sparse signals} Consider signals $\x$ obtained as $\x = \D\z + \v$, where $\D\in\mathcal D$, $\|\v\|_2\leq\nu$ and $\z$ is sampled from a distribution of sparse vectors with up to $k$ non-zeros, with $k < \frac{1}{3}\left(1+\frac{1}{\mu(\D)}\right)$, where $\mu(\D) = \max_{i\neq j}\langle \D_i , \D_j \rangle$ is the mutual coherence of $\D$. Then, for a particular choice of $\lambda$, we have that $\tau_s(\x) > \lambda - \frac{15\mu\nu}{2}, \forall s>k $. This can be shown using standard results in \citep{tropp2006just}; we defer the proof to the Appendix \ref{supp:EncoderGapExample}. 
Different values of $\lambda$ provide different values of $\tau_s(\x)$. To illustrate this trade-off, we generate synthetic approximately $k$-sparse signals ($k=15$) from a dictionary with 120 atoms in 100 dimensions and contaminate them with Gaussian noise. We then numerically compute the value of $\tau^*_s$ as a function of $s$ for different values of $\lambda$, and present the results in \autoref{fig:encoder_gap_synthetic}.

\textbf{Image data} We now demonstrate that a positive encoder exist for natural images as well. In \autoref{fig:encoder_gap_mnist} we similarly depict the value of $\tau_s(\cdot)$, as a function of $s$, for an encoder computed on MNIST digits and {CIFAR images} (from a validation set) with learned dictionaries (further details in \autoref{Sec:Experiments}).

{In summary, the encoder gap is a measure of the ability of a dictionary to sparsely represent data, and one can induce a larger encoder gap by increasing the regularization parameter or the cardinality $s$. 
As we will shortly see, this will provide us with a a controllable knob in our generalization bound.}

\begin{figure}
\centering
\subcaptionbox{\label{fig:encoder_gap_synthetic}}
{\includegraphics[width = .32\textwidth,trim = 30 0 30 20]{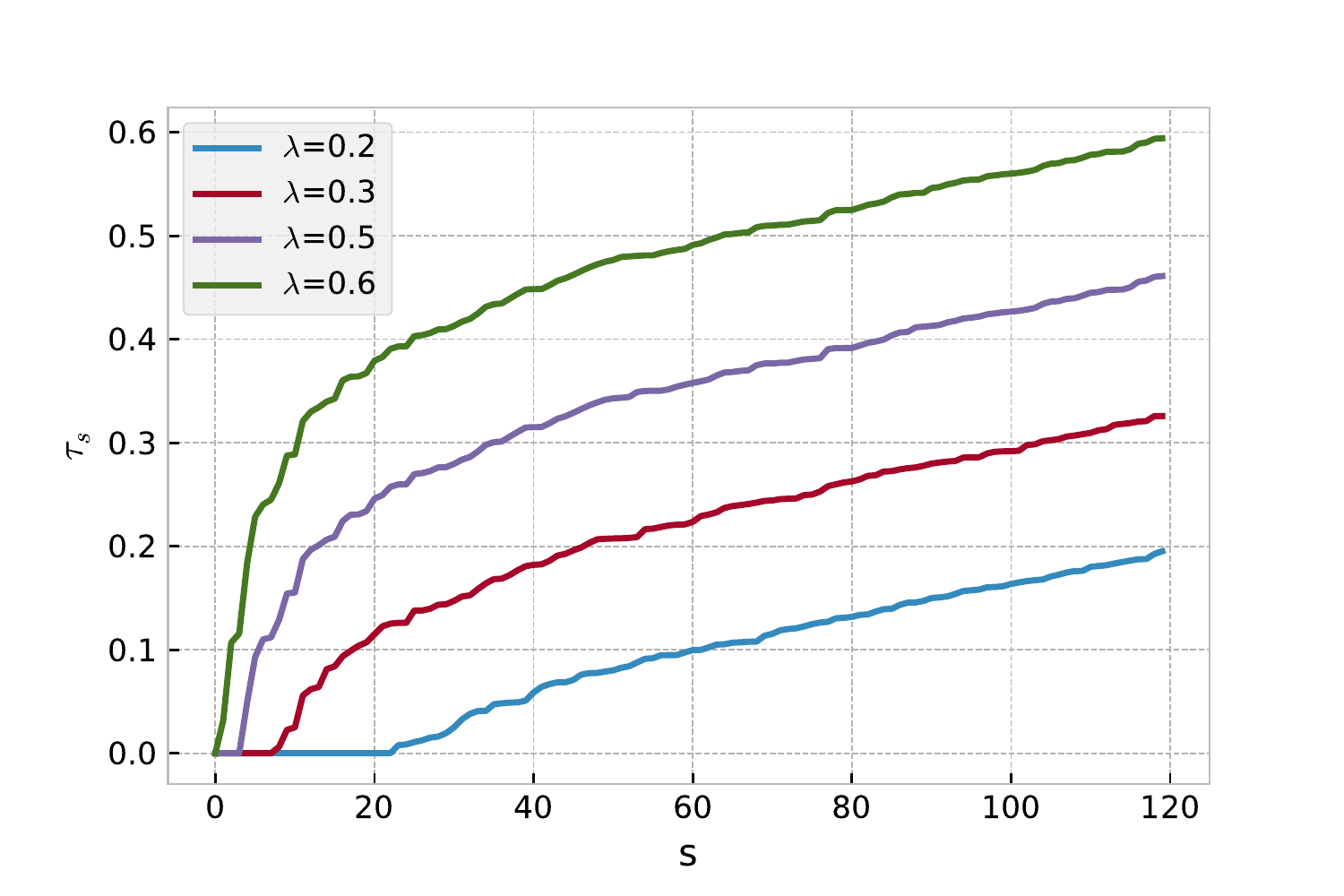}}
\subcaptionbox{\label{fig:encoder_gap_mnist}}
{\includegraphics[width = .32\textwidth,trim = 30 0 30 20]{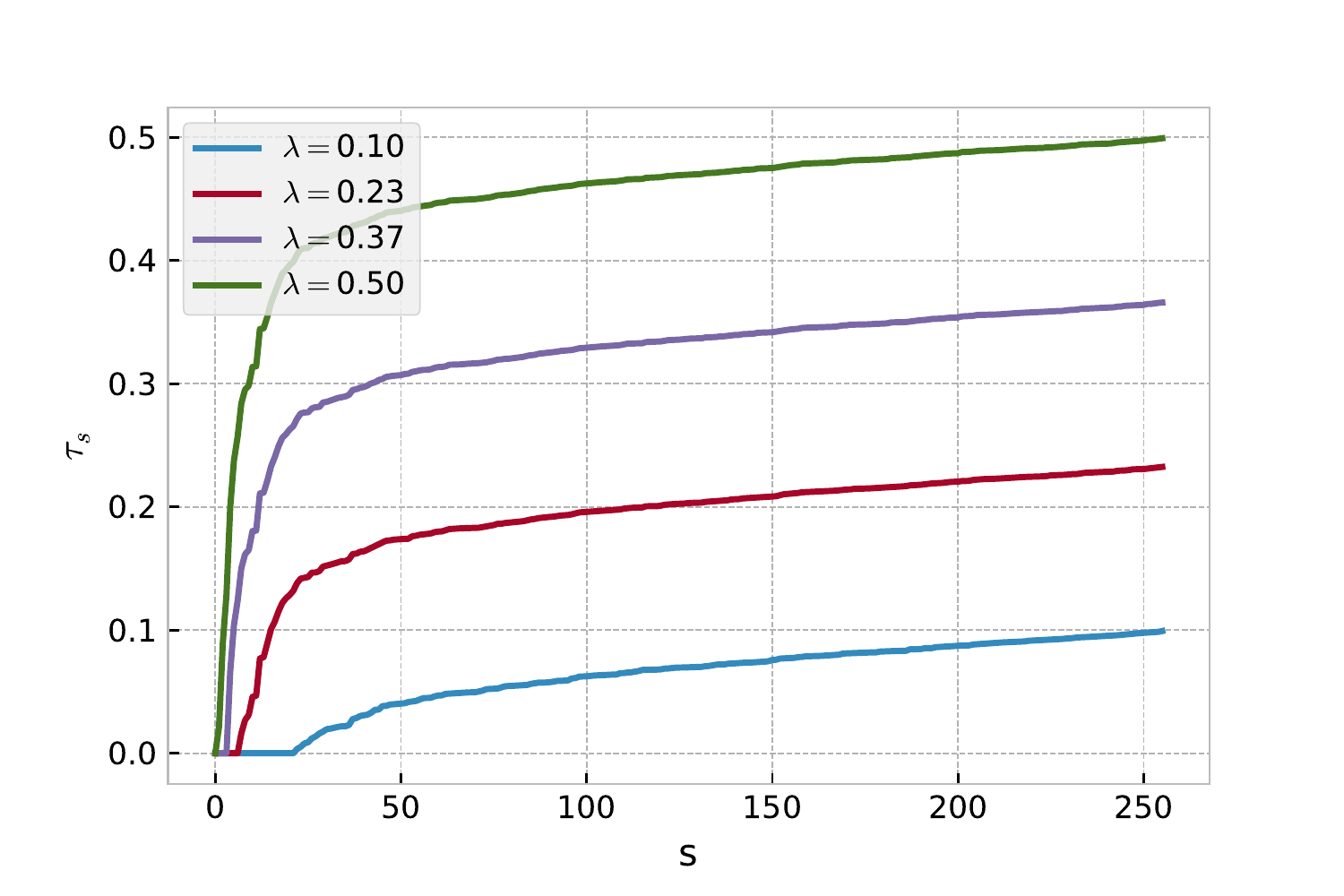}
}
\subcaptionbox{\label{fig:encoder_gap_cifar}}
{\includegraphics[width = .32\textwidth,trim = 30 0 30 20]{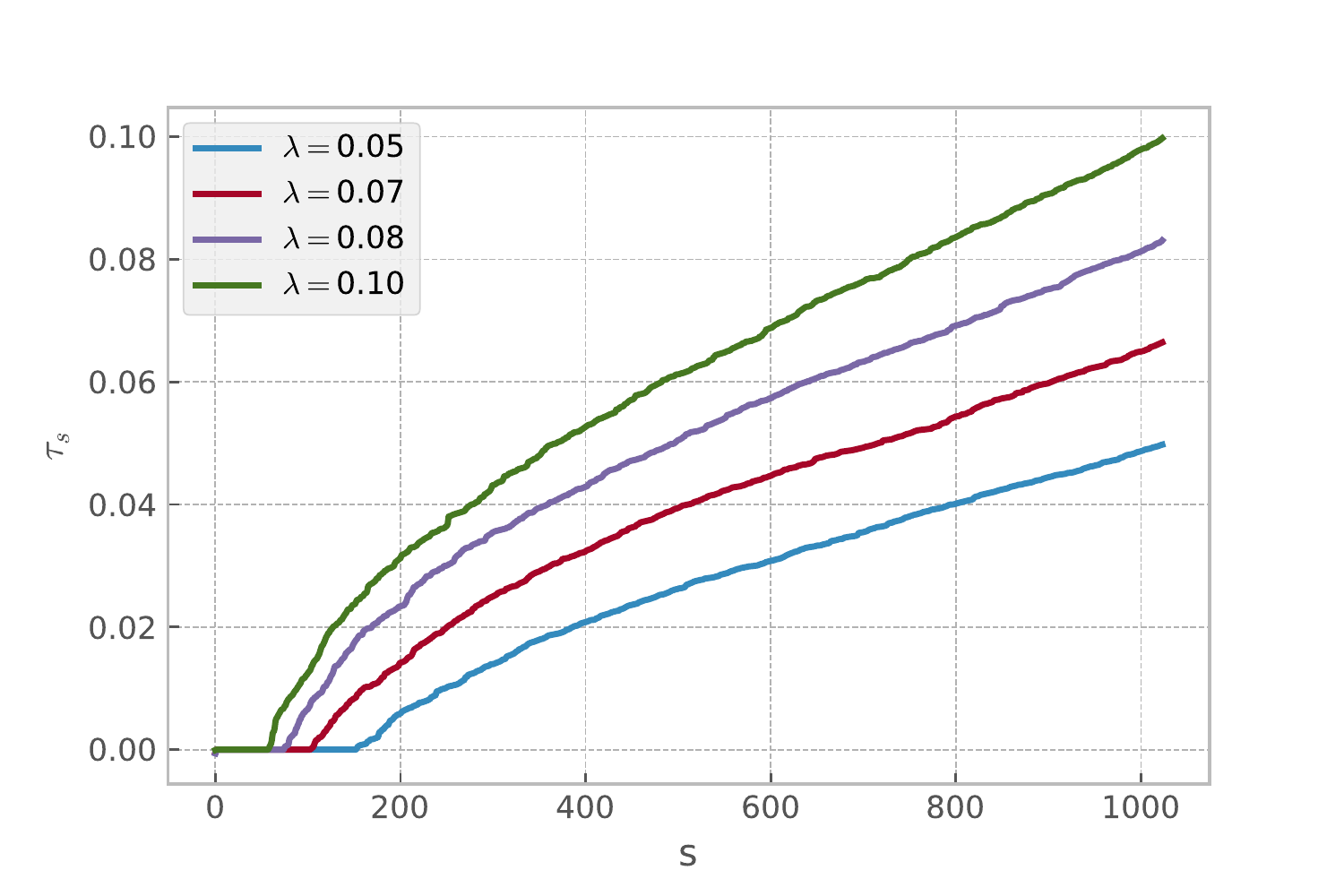}
}
\caption{ Encoder gap, $\tau^*_s$, for synthetic approximately sparse signals (a) MNIST digits (b) and CIFAR10 images (c). \vspace{-.3cm}}
\label{fig:Encoder_Gap}
\end{figure}

\vspace{-5pt}
\section{Prior Work} \label{sec:prior_work}
Many results exist on the approximation power and stability of Lasso (see \citep{foucart2017mathematical}), which most commonly rely on assuming data is (approximately) $k$-sparse under a given dictionary. As explained in the previous section, we instead follow an analysis inspired by \cite{mehta2013sparsity}, which relies on the encoder gap. \cite{mehta2013sparsity} leverage encoder gap to derive a generalization bound for the supervised sparse coding model in a stochastic  (non-adversarial) setting. Their result, which follows a symmetrization technique~\citep{mendelson2004importance}, scales as $\tilde{\mathcal{O}} (\sqrt{(dp + \log(1/\delta))/m}$, and requires  a minimal number of samples that is $\mathcal O (1/(\tau_s\lambda))$. In contrast, we study an generalization in the adversarially robust setting, detailed above. Our analysis is based on an $\epsilon$-cover of the parameter space and on analyzing a local-Lipschitz property of the adversarial loss. The proof of our generalization bound is simpler, and shows a mild deterioration of the upper bound on the generalization gap due to adversarial corruption. 

Our work is also inspired by the line of work initiated by~\cite{papyan2017convolutional} who regard the representations computed by neural networks as approximations for those computed by a Lasso encoder across different layers. In fact, a first analysis of adversarial robustness for such a model is presented by \cite{romano2019adversarial}; however, they make strong generative model assumptions and thus their results are not applicable to real-data practical scenarios. Our robustness certificate mirrors the analysis from the former work, though leveraging a more general and new stability bound (\autoref{lemma:stability_of_rep_adversarial}) relying instead on the existence of positive encoder gap. In a related work, and in the context of neural networks, \cite{cisse2017parseval} propose a regularization term inspired by Parseval frames, with the empirical motivation of improving adversarial robustness. Their regularization term can in fact be related to minimizing the (average) mutual coherence of the dictionaries, which naturally arises as a control for the generalization gap in our analysis.

Lastly, several works have employed sparsity as a beneficial property in adversarial learning \citep{marzi2018sparsity,demontis2016security}, with little or no theoretical analysis, or in different frameworks (e.g. sparse weights in deep networks \citep{guo2018sparse,balda2019adversarial}, or on different domains \citep{bafna2018thwarting}). Our setting is markedly different from that of \cite{chen2013robust} who study adversarial robustness of Lasso as a sparse predictor directly on input features. In contrast, the model we study here employs Lasso as an encoder with a data-dependent dictionary, on which a linear hypothesis is applied. A few works have recently begun to analyze the effect of learned representations in an adversarial learning setting \citep{ilyas2019adversarial,allen2020feature}. Adding to that line of work, our analysis demonstrates that benefits can be provided by exploiting a trade-off between expressivity and stability of the computed representations, and the classifier margin. 

\vspace{-10pt}
\section{Generalization bound for robust risk}
\label{sec:GenBound}
In this section, {we present a bound on the robust risk for models satisfying a positive encoder gap}. Recall that given a $b$-bounded loss $\ell$ with Lipschitz constant $L_\ell$,  $\tilde{R}_S(f) = \frac{1}{m} \sum_{i=1}^{m} \tilde{\ell}_\nu(y_i,f(\x_i))$ is the empirical robust risk, and $\tilde{R}(f) = \mathbb{E}_{(\x,y)\sim P} \big[ \tilde{\ell}_\nu(y,f(\x)) \big]$ is the population robust risk w.r.t. distribution $P$. Adversarial perturbations are bounded in $\ell_2$ norm by $\nu$. {Our main result below guarantees that if a hypothesis $f_{\D,\w}$ is found with a sufficiently large encoder gap, and a large enough training set, its generalization gap is bounded as $\tilde{\mathcal O}\Big(b\sqrt{\frac{ (d+1)p}{m}}\Big)$, where $\tilde{\mathcal O}$ ignores  poly-logarithmic factors.}

\begin{theorem}
\label{thm:generalization} Let $\mathcal W = \{\w \in \mathbb R^p : \|\w\|_2 \leq B \}$,  and $\mathcal{D}$ be the set of column-normalized dictionaries with $p$ columns and with RIP at most $\eta^*_s$. Let $\mathcal{H} = \{ f_{\D,\w}(\x) = \langle \w, \varphi_\D(\x) \rangle : \w\in \mathcal{W}, \D \in \mathcal{D} \}$. Denote $\tau^*_s$ the minimal encoder gap over the $m$ samples. Then, with probability at least $1-\delta$ over the draw of the $m$ samples, the generalization gap for any hypothesis $f\in\mathcal H$ that achieves an encoder gap on the samples of $\tau_s^*>2\nu$, satisfies
\begin{multline}
      \left| \tilde{R}_S(f)\! -\! \tilde{R}(f) \right| \!\leq \frac{b}{\sqrt{m}} \left(  (d+1) p \log\left(\frac{3m}{2\lambda (1-\eta^*_s)}\right) + p\log(B) + \log{\frac{4}{\delta}} \right)^{\frac{1}{2}} \\ + b\sqrt{\frac{2\log(m/2)+2\log(2/\delta)}{m}}  + 12\frac{(1+\nu)^2 L_\ell B \sqrt{s} }{m} , \nonumber 
\end{multline} 
as long as $m > \frac{\lambda(1-\eta_s)}{(\tau_s^* - 2\nu)^2} K_\lambda$,
where $K_\lambda = \left(  2 \left( 1+\frac{1+\nu}{2\lambda} \right) + \frac{5 (1+\nu)}{\sqrt{\lambda}}  \right)^2$. 
\end{theorem}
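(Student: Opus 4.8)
The plan is to bound the generalization gap uniformly over $\mathcal{H}$ by a covering argument on the parameter space $\mathcal{W}\times\mathcal{D}$, transferred from a finite cover to the whole class through a local-Lipschitz property of the adversarial loss in $(\w,\D)$. First I would construct a minimal $\epsilon$-cover: the ball $\mathcal{W}\subset\mathbb{R}^p$ is covered by $(3B/\epsilon)^p$ points, and since each of the $p$ atoms of $\D$ lies on the sphere $S^{d-1}\subset\mathbb{R}^d$, the oblique manifold $\mathcal{D}$ is covered by $(3/\epsilon)^{dp}$ points; the product cover has size $N(\epsilon)\le (3B/\epsilon)^p (3/\epsilon)^{dp}$, so that $\log N(\epsilon)=\mathcal{O}\big((d+1)p\log(1/\epsilon)+p\log B\big)$, matching the leading term once $\epsilon$ is chosen to scale like $1/m$. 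Because $\ell\in[0,b]$, Hoeffding's inequality together with a union bound over the cover controls $|\tilde R_S(f)-\tilde R(f)|$ at every cover point by $b\sqrt{(\log N(\epsilon)+\log(2/\delta))/m}$, yielding the first and, with a standard tail refinement in the spirit of \citep{gribonval2015sample}, the second term of the stated bound.

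The technical heart is the Lipschitz estimate that extends this control to all of $\mathcal{H}$ at an additive cost $L\epsilon$. Using $|\max_\v g_1(\v)-\max_\v g_2(\v)|\le\max_\v|g_1(\v)-g_2(\v)|$ and the $L_\ell$-Lipschitzness of $\ell$, it suffices to bound $\max_{\|\v\|_2\le\nu}|\langle\w,\varphi_\D(\x+\v)\rangle-\langle\w',\varphi_{\D'}(\x+\v)\rangle|$, which I would split as $\|\w-\w'\|_2\,\|\varphi_\D(\x+\v)\|_2+B\,\|\varphi_\D(\x+\v)-\varphi_{\D'}(\x+\v)\|_2$. Thus I need a norm bound on the encoding and a stability bound of the encoder with respect to the dictionary.

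Both estimates hinge on the encoder gap. I would first show that the gap $\tau_s^*$ on the clean samples survives the adversarial shift $\x\mapsto\x+\v$ up to a loss of $2\nu$ in its value — the origin of the hypothesis $\tau_s^*>2\nu$ — and also survives the $\epsilon$-displacement $\D\mapsto\D'$ within the cover when $\epsilon$ is small enough; a stability statement of the flavour of \autoref{lemma:stability_of_rep_adversarial} delivers exactly this. This is precisely where the sample-size requirement $m>\lambda(1-\eta_s)K_\lambda/(\tau_s^*-2\nu)^2$ enters: with $\epsilon\sim 1/m$, it guarantees the residual correlations $|\langle\D_i,\x-\D\varphi_\D(\x)\rangle|$ move by less than the surviving gap, so the support $\mathcal{S}$ (of size at most $s$) of $\varphi_\D(\x+\v)$ is common to $\D$, $\D'$, and all $\v$. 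On a fixed support the RIP makes $\D_\mathcal{S}$ well-conditioned, so writing the restricted solution through its normal equations $[\varphi_\D(\x+\v)]_\mathcal{S}=(\D_\mathcal{S}^{\T}\D_\mathcal{S})^{-1}(\D_\mathcal{S}^{\T}(\x+\v)-\lambda\,\mathrm{sign}([\varphi_\D(\x+\v)]_\mathcal{S}))$ gives both the norm bound $\|\varphi_\D(\x+\v)\|_2=\mathcal{O}((1+\nu)\sqrt{s}/(1-\eta_s))$ and, via matrix-inverse perturbation estimates in $\|\D-\D'\|_2$, the encoder-stability bound $\|\varphi_\D(\x+\v)-\varphi_{\D'}(\x+\v)\|_2=\mathcal{O}((1+\nu)\sqrt{s}\,\|\D-\D'\|_2)$. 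Assembling these, the adversarial loss is Lipschitz in $(\w,\D)$ with constant of order $(1+\nu)^2 L_\ell B\sqrt{s}$, the prefactor of the third term.

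Finally I would choose $\epsilon$ to balance the covering term against the extension term $L\epsilon$: taking $\epsilon$ proportional to $\lambda(1-\eta_s^*)/m$ makes $\log(1/\epsilon)=\log\big(3m/(2\lambda(1-\eta_s^*))\big)$ appear inside the square root and collapses $L\epsilon$ to the $\mathcal{O}\big((1+\nu)^2 L_\ell B\sqrt{s}/m\big)$ term, so that collecting the three contributions yields the claim. I expect the main obstacle to be the encoder-stability estimate: establishing Lipschitz dependence of an overcomplete, non-strongly-convex Lasso solution on the dictionary requires simultaneously certifying support preservation uniformly over the adversarial ball and the cover (through the encoder gap) and propagating the dictionary perturbation through the restricted normal equations with RIP-controlled conditioning, so that the constants track $(1+\nu)$, $\sqrt{s}$, and $(1-\eta_s)$ correctly. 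This is the step the authors flag as considerably more involved.
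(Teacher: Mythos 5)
Your overall architecture coincides with the paper's: the same covering numbers $(3/\epsilon)^{dp}$ and $(3B/\epsilon)^p$, Hoeffding plus a union bound at the cover points, a local Lipschitz extension of the adversarial loss in $(\w,\D)$ driven by encoder-gap-based support preservation and RIP conditioning (this is exactly \autoref{lemma:advers_D_stable} and \autoref{lemma:preservation_of_sparsity}), and the same final choice $\epsilon=\lambda(1-\eta_s)/m$. But there is a genuine gap on the population side of the extension step. The Lipschitz transfer from a cover point $(\D_j,\w_j)$ to $(\D,\w)$ requires $\tau_s(\x)>2\nu+\sqrt{\epsilon}\left(\sqrt{25/\lambda}\,(1+\nu)+2\left(\tfrac{1+\nu}{2\lambda}+1\right)\right)$ \emph{pointwise at the sample $\x$}, yet the theorem only assumes the gap on the $m$ observed samples. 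For the term $\left|\mathbb{E}[\tilde{\ell}_\nu(y,f_{\D_j,\w_j}(\x))]-\mathbb{E}[\tilde{\ell}_\nu(y,f_{\D,\w}(\x))]\right|$ the expectation runs over fresh draws $\x\sim P$, for which nothing in your proposal certifies a gap; your attribution of the middle term $b\sqrt{(2\log(m/2)+2\log(2/\delta))/m}$ to ``a standard tail refinement'' of the union-bound step is incorrect — that term has an entirely different origin and your argument, as written, cannot produce it or dispense with it.

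The paper closes this hole by splitting the data into halves: the dictionary is learned on $m_1=m/2$ samples, so the gaps $\tau_s(\x_i)$ on the held-out $m_2=m/2$ samples are i.i.d.; uniform convergence of the empirical CDF $F_{m_2}(\tau)=\frac{1}{m_2}\sum_i \mathbf{1}_{\{\tau_s(\x_i)<\tau\}}$ then bounds the failure probability $\rho=\Pr(\tau_s(\x)\le\tau_s^*)\le c\sqrt{(\log(m_2)+\log(2/\delta))/m_2}$, and on the failure event the loss deviation is bounded crudely by $b$, contributing $b\rho$ — precisely the middle term. Any correct proof needs some mechanism of this kind, so you should add it. Two smaller technical points: your restricted normal-equation formula $[\varphi_\D(\x+\v)]_{\S}=(\D_{\S}^{\T}\D_{\S})^{-1}(\D_{\S}^{\T}(\x+\v)-\lambda\,\mathrm{sign}([\varphi_\D(\x+\v)]_{\S}))$ is valid only on the \emph{exact} support, whereas support preservation only yields a common superset $\S$ of size $s$ shared by $\D$, $\D'$, and all $\v$; coordinates of $\S$ outside the true support carry subgradients in $[-\lambda,\lambda]$ rather than signs, so the clean route (the one the paper inherits from Mehta and Gray) is a perturbation argument on the restricted Lasso over $\S$, which is $(1-\eta_s)$-strongly convex by RIP. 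Also, the norm bound on the code needs no RIP at all: comparing objective values at $\z=\mathbf{0}$ gives $\|\enc[\D](\x+\v)\|_2\le(1+\nu)^2/(2\lambda)$ directly, which is what feeds the Lipschitz constant in $\w$.
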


A few remarks are in order. First, note that adversarial generalization incurs a polynomial dependence on the adversarial perturbation $\nu$. This is mild, especially since it only affects the fast $\mathcal O(1/m)$ term. Second, the bound requires a minimal number of samples. Such a requirement is intrinsic to the stability of Lasso (see \autoref{lemma:advers_D_stable} below) and it exists also in the non-adversarial setting \citep{mehta2013sparsity}. In the adversarial case, this requirement becomes more demanding, as reflected by the term $(\tau^*_s-2\nu$) in the denominator. Moreover, a minimal encoder gap $\tau_s^*>2\nu$ is needed as well.

\autoref{thm:generalization} suggests an interesting trade-off. One can obtain a large $\tau^*_s$ by increasing $\lambda$ and $s$ -- as demonstrated in in \autoref{fig:Encoder_Gap}. But increasing $\lambda$ may come at an expense of hurting the empirical error, while increasing $s$ makes the term $1-\eta_s$ smaller. {Therefore, if one obtains a model with small training error, along with large $\tau^*_s$ over the training samples for an appropriate choice of $\lambda$ and $s$ while ensuring that $\eta_s$ is bounded away from $1$,  
then $f_{\D,\w}$ is guaranteed to generalize well. Furthermore, note that the excess error depends mildly (poly logarithmically) on $\lambda$ and $\eta_s$.} 

Our proof technique is based on a minimal $\epsilon$-cover of the parameter space, and the full proof is included in the 
Appendix~\ref{supp:Generalization}. Special care is needed to ensure that the encoder gap of the dictionary holds for a sample drawn from the population, as we can only measure this gap on the provided $m$ samples. To address this, we split the data equally into a training set and a development set: the former is used to learn the dictionary, and the latter to provide a high probability bound on the event that $\tau_s(\x)>\tau^*_s$. This is to ensure that the random samples of the encoder margin are i.i.d. for measure concentration. Ideally, we would like to utilize the entire dataset for learning the predictor; we leave that for future work.

While most of the techniques we use are standard~\footnote{See \citep{seibert2019sample} for a comprehensive review on these tools in matrix factorization problems.}, the Lipschitz continuity of the robust loss function requires a more delicate analysis. For that, we have the following result. 

\begin{lemma}[Parameter adversarial stability] \label{lemma:advers_D_stable}
Let $\D,\D' \in \mathcal D$. If $\|\D-\D'\|_2 \leq \epsilon\leq 2 \lambda/(1+\nu)^2$,  then
    \begin{equation} \label{eq:adver_perturb_stability}
       \max_{\v\in\Delta} \|\enc[\D](\x+\v) -  \enc[\D'](\x+\v) \|_2 \leq \gamma (1+\nu)^2 \epsilon,
    \end{equation}
    with $\gamma = \frac{3}{2} \frac{\sqrt{s}}{\lambda(1-\eta_s)}$, as long as $\tau_s(\x) \geq 2 \nu +  \sqrt{\epsilon}\left( \sqrt{\frac{25}{\lambda}}(1+\nu) + 2 \left( \frac{(1+\nu)}{\lambda} + 1\right) \right).$
\end{lemma}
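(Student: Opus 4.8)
The plan is to fix the adversarially perturbed input $\tilde\x := \x+\v$ (with $\|\v\|_2\le\nu$) and compare the two Lasso minimizers $\z := \varphi_\D(\tilde\x)$ and $\z' := \varphi_{\D'}(\tilde\x)$ \emph{at this common point}. The entire argument hinges on showing that $\z$ and $\z'$ are supported on one common index set $\mathcal J$ of size at most $s$, after which the reduced problem is smooth and strongly convex thanks to the RIP, and a matrix-perturbation estimate gives the linear-in-$\epsilon$ bound. I would therefore split the proof into three steps: transfer the encoder gap from $\x$ to $\tilde\x$; transfer it from $\D$ to $\D'$ and deduce a common support; and finally bound $\|\z-\z'\|_2$ on that support. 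Since every estimate is uniform in $\v\in\Delta_\nu$, the maximum over the adversary is harmless.

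\textbf{Step 1 (input transfer).} Writing the residual $\r_\D(\x)=\x-\D\varphi_\D(\x)$, the optimality conditions of \eqref{eqn:enc-1} state that an atom $i$ is inactive whenever $|\langle\D_i,\r_\D(\x)\rangle|<\lambda$, and the gap gives a set $\mathcal I$ of $p-s$ atoms with $|\langle\D_i,\r_\D(\x)\rangle|\le\lambda-\tau_s(\x)$. The fitted value $\x\mapsto\D\varphi_\D(\x)$ is a proximal operator, hence $1$-Lipschitz, so moving from $\x$ to $\tilde\x$ shifts each such correlation by at most $|\langle\D_i,\v\rangle|+|\langle\D_i,\D(\varphi_\D(\tilde\x)-\varphi_\D(\x))\rangle|\le 2\nu$. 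Thus $\tau_s(\tilde\x)\ge\tau_s(\x)-2\nu$, which is positive by hypothesis, the atoms of $\mathcal I$ remain strictly inactive for $\D$ at $\tilde\x$, and in particular $\mathrm{supp}(\z)\subseteq\mathcal J:=\mathcal I^c$ with $|\mathcal J|\le s$. This is the origin of the $2\nu$ term in the gap condition.

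\textbf{Step 2 (dictionary transfer and common support).} Next I would bound how much each correlation $\langle\D'_i,\tilde\x-\D'\z'\rangle$ for $i\in\mathcal I$ differs from $\langle\D_i,\tilde\x-\D\z\rangle$. Expanding the difference produces terms of order $(1+\nu)\epsilon$ (from $\|\D-\D'\|_2\le\epsilon$, the residual bound $\|\tilde\x-\D\z\|_2\le\|\tilde\x\|_2\le1+\nu$, and the a priori code bound $\|\z\|_1\le(1+\nu)^2/(2\lambda)$ obtained by comparing against $\z=0$), together with a \emph{prediction-difference} term $\langle\D_i,\D\z-\D'\z'\rangle$ that involves $\|\D'(\z-\z')\|_2$. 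This last quantity is exactly what we are trying to control, so I would break the circularity with a crude objective-gap estimate: by strong convexity of the quadratic part along the prediction direction, $\tfrac12\|\D'(\z-\z')\|_2^2\le F_{\D'}(\z)-F_{\D'}(\z')\le[F_{\D'}(\z)-F_{\D}(\z)]+[F_{\D}(\z')-F_{\D'}(\z')]$, and each bracket is an $O(\epsilon)$ perturbation of the objective under $\D\mapsto\D'$. Taking the square root yields $\|\D'(\z-\z')\|_2=O(\sqrt\epsilon)$, which is precisely what produces the $\sqrt\epsilon\big(\sqrt{25/\lambda}(1+\nu)+2((1+\nu)/\lambda+1)\big)$ term; the constraint $\epsilon\le2\lambda/(1+\nu)^2$ keeps this small relative to $\lambda$. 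Requiring the total correlation shift to stay below $\tau_s(\tilde\x)$ then keeps all of $\mathcal I$ inactive for $\D'$ as well, so $\mathrm{supp}(\z')\subseteq\mathcal J$ and the two codes share the support $\mathcal J$ of size at most $s$.

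\textbf{Step 3 (refined bound) and obstacle.} On the common support the codes satisfy the reduced stationarity conditions $\D_{\mathcal J}^\T(\tilde\x-\D_{\mathcal J}\z_{\mathcal J})=\lambda\,\mathrm{sgn}(\z_{\mathcal J})$ and the analogue for $\D'$, with matching signs. Subtracting and solving, $\z_{\mathcal J}-\z'_{\mathcal J}=(\D_{\mathcal J}^\T\D_{\mathcal J})^{-1}\big[(\D_{\mathcal J}-\D'_{\mathcal J})^\T\tilde\x-(\D_{\mathcal J}^\T\D_{\mathcal J}-(\D'_{\mathcal J})^\T\D'_{\mathcal J})\z'_{\mathcal J}\big]$, and the RIP gives $\|(\D_{\mathcal J}^\T\D_{\mathcal J})^{-1}\|_2\le1/(1-\eta_s)$. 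Bounding the bracket with $\|\D-\D'\|_2\le\epsilon$, $\|\tilde\x\|_2\le1+\nu$, the a priori bound $\|\z'\|_1\le(1+\nu)^2/(2\lambda)$, and an $\ell_1$-to-$\ell_2$ conversion on the $s$-sparse support (the source of the $\sqrt s$), I expect exactly $\|\z-\z'\|_2\le\gamma(1+\nu)^2\epsilon$ with $\gamma=\tfrac32\sqrt s/(\lambda(1-\eta_s))$, the $\tfrac32$ absorbing numerical constants. I expect the genuine difficulty to be Step 2: controlling the support under \emph{simultaneous} input and dictionary perturbations requires the self-referential prediction bound above, and making its constants line up with the stated $\sqrt\epsilon$ coefficient is the most delicate book-keeping; everything in Step 3 is then a routine matrix-perturbation computation.
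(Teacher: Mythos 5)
Your Steps 1 and 2 follow essentially the same architecture as the paper's proof in Appendix B, and where they differ the differences are sound. Your a priori bounds are \autoref{remark:Remarks1}; your claim that the Lasso fit is nonexpansive, so that $\|\D\varphi_\D(\x)-\D\varphi_\D(\x+\v)\|_2\le\nu$, is exactly \autoref{lemma:adversarial_norm_stability}, which the paper proves by rewriting the Lasso as a quadratic program and invoking the variational inequalities at the two optima; your route is slicker but should be justified explicitly (by Lasso duality the fit equals $\x - P_C(\x)$ with $C=\{\u:\|\D^\T\u\|_\infty\le\lambda\}$, and $I-P_C$ is firmly nonexpansive) rather than asserted via ``proximal operator.'' Your $O(\sqrt{\epsilon})$ prediction-difference bound, obtained from the telescoping $F_{\D'}(\z)-F_{\D'}(\z')\le\left[F_{\D'}(\z)-F_{\D}(\z)\right]+\left[F_{\D}(\z')-F_{\D'}(\z')\right]$ combined with exact strong convexity of the quadratic along the prediction direction, is a cleaner substitute for the paper's longer midpoint computation in \autoref{lemma:bound_norms}, and it yields the same $\sqrt{25\epsilon/\lambda}\,(1+\nu)$ rate. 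Combining the two perturbations to keep the set $\mathcal I$ inactive is precisely \autoref{lemma:preservation_of_sparsity}, with the same $2\nu+O(\sqrt{\epsilon})$ bookkeeping you anticipate.

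Step 3, however, contains a genuine gap: the phrase ``with matching signs.'' The reduced optimality conditions are $\D_{\mathcal J}^\T(\tilde\x-\D_{\mathcal J}\z_{\mathcal J})=\lambda\g$ and $(\D'_{\mathcal J})^\T(\tilde\x-\D'_{\mathcal J}\z'_{\mathcal J})=\lambda\g'$ with $\g\in\partial\|\z_{\mathcal J}\|_1$, $\g'\in\partial\|\z'_{\mathcal J}\|_1$, and nothing you have established forces $\g=\g'$: a coordinate of $\mathcal J$ can be active in one code and zero in the other, or flip sign, and no beta-min-type assumption is available to rule this out. Without $\g=\g'$, your closed-form subtraction acquires the extra term $\lambda(\D_{\mathcal J}^\T\D_{\mathcal J})^{-1}(\g-\g')$, of size up to $2\lambda\sqrt{s}/(1-\eta_s)$ --- a constant, not $O(\epsilon)$ --- so the step fails as written. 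The repair is standard and needs no sign information: subtract the two stationarity conditions and take the inner product with $\delta=\z_{\mathcal J}-\z'_{\mathcal J}$; monotonicity of the $\ell_1$ subdifferential gives $\langle\delta,\g-\g'\rangle\ge0$, hence
\begin{equation}
(1-\eta_s)\|\delta\|_2^2 \;\le\; \left\langle\delta,\,(\D_{\mathcal J}-\D'_{\mathcal J})^\T\tilde\x\right\rangle \;-\; \left\langle\delta,\,\left(\D_{\mathcal J}^\T\D_{\mathcal J}-(\D'_{\mathcal J})^\T\D'_{\mathcal J}\right)\z'_{\mathcal J}\right\rangle, \nonumber
\end{equation}
and bounding the right-hand side with $\|\D_{\mathcal J}\|_2\le\sqrt{s}$, $\|\tilde\x\|_2\le1+\nu$, $\|\z'\|_2\le(1+\nu)^2/(2\lambda)$ and $\lambda\le1+\nu$ recovers $\|\delta\|_2\le\frac{3}{2}\frac{\sqrt{s}\,(1+\nu)^2}{\lambda(1-\eta_s)}\,\epsilon$, linear in $\epsilon$ with the stated $\gamma$. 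Note that the paper does not spell this final step out either: it delegates it to the proof of Theorem 4 in \citep{mehta2013sparsity}, which is where this first-order argument lives, so your proposal's only structural departure from the paper is precisely the step that breaks.
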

\autoref{lemma:advers_D_stable} is central to our proof, as it provides a bound on difference between the features computed by the encoder under model deviations. Note that the condition on the minimal encoder gap, $\tau_s(\x)$, puts an upper bound on the distance between models $\D$ and $\D'$. This in turn results in the condition imposed on the minimal samples in \autoref{thm:generalization}. It is worth stressing that the lower bound on $\tau_s(\x)$ is on the \emph{unperturbed} encoder gap -- that which can be evaluated on the samples from the dataset, without the need of the adversary. We defer the proof of this Lemma to
Appendix~\ref{supp:ParameterAdversarialStability}.

\section{Robustness Certificate}
\label{sec:certificates}

Next, we turn to address an equally important question about robust adversarial learning, that of giving a formal certification of robustness. Formally, we would like to guarantee that the output of the trained model, $f_{\D,\w}(\x)$, does not change for norm-bounded adversarial perturbations of a certain size. 
Our second main result provides such a certificate for the supervised sparse coding model. 

Here, we consider a multiclass classification setting with $y\in\{1,\dots,K\}$; simplified results for binary classification are included in Appendix \ref{supp:certificate}. The hypothesis class is parameterized as $f_{\D,\W}(\x) = \W^T\enc[\D](\x)$, with $\W = [\W_1, \W_2, \ldots, \W_K] \in\mathbb{R}^{p\times K}$. The multiclass margin is defined as follows: 
\[ \rho_\x = \W^T_{y_i} \enc[\D](\x) - \max_{j\neq y_i} \W^T_{j} \enc[\D](\x). \]
We show the following result. 
\begin{theorem}[Robustness certificate for multiclass supervised sparse coding] \label{thm:multiclass_certificate}
Let $\rho_x > 0$ be the multiclass classifier margin of $f_{\D,\w}(\x)$ composed of an encoder with a gap of $\tau_s(\x)$ and a dictionary, $\D$, with RIP constant $\eta_s<1$. Let $c_\W := \max_{i\neq j}\|\W_i-\W_j\|_2$. Then,
    \begin{equation}
        \arg\max_{j\in[K]} ~[\W^T \enc[\D](\x)]_j\ = \arg\max_{j\in[K]}~ [\W^T \enc[\D](\x+\v)]_j,\quad \forall~ \v:\|\v\|_2\leq \nu,  
    \end{equation}
    so long as $\nu \leq \min\{ \tau_s(\x)/2 , \rho_\x \sqrt{1-\eta_s} / c_\W \}.$
\end{theorem}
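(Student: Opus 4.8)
We want to show that the predicted label—the argmax of $\W^T \enc[\D](\x)$—does not change under any $\ell_2$ perturbation $\v$ with $\|\v\|_2 \le \nu$, provided $\nu$ is small relative to both the encoder gap $\tau_s(\x)$ and the classifier margin $\rho_\x$.

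**The high-level plan.** The argmax is preserved as long as the margin $\rho_{\x+\v}$ at the perturbed point stays positive. So the plan is to lower-bound $\rho_{\x+\v}$ by showing it cannot fall below $\rho_\x$ by more than the margin itself. Write out how much each score $\W_j^T \enc[\D](\x)$ can move when $\x$ is replaced by $\x+\v$: the change in the gap between the true-class score and any competing-class score is controlled by $\langle \W_{y_i} - \W_j, \enc[\D](\x+\v) - \enc[\D](\x)\rangle$, which by Cauchy–Schwarz is bounded by $c_\W \|\enc[\D](\x+\v) - \enc[\D](\x)\|_2$. Thus the worst-case drop in the multiclass margin is at most $c_\W \|\enc[\D](\x+\v) - \enc[\D](\x)\|_2$, and it suffices to ensure this quantity is strictly less than $\rho_\x$.

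**The key ingredient—input stability.** What remains is to bound $\|\enc[\D](\x+\v) - \enc[\D](\x)\|_2$, the perturbation of the sparse code under a bounded input perturbation. This is exactly the role of the stability result flagged in the text as \autoref{lemma:stability_of_rep_adversarial}, which I would invoke here; its proof would proceed much like \autoref{lemma:advers_D_stable} but perturbing the input $\x$ rather than the dictionary $\D$. The mechanism is that when the encoder gap $\tau_s(\x)$ exceeds $2\nu$, the support (or at least an $s$-sized superset) of the Lasso solution is preserved under the perturbation, so that on the active coordinates the problem is strongly convex with modulus governed by $(1-\eta_s)$ (via the RIP). On that support, $\D$ restricted to the active atoms is well-conditioned, and the solution map becomes Lipschitz with constant $\sim 1/\sqrt{1-\eta_s}$, yielding a bound of the form $\|\enc[\D](\x+\v) - \enc[\D](\x)\|_2 \le \nu/\sqrt{1-\eta_s}$. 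Combining this with the margin argument above gives the sufficient condition $\nu \, c_\W / \sqrt{1-\eta_s} < \rho_\x$, i.e. $\nu < \rho_\x \sqrt{1-\eta_s}/c_\W$, which is precisely the second term in the stated minimum.

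**The gap condition and the main obstacle.** The first term in the minimum, $\nu \le \tau_s(\x)/2$, is exactly the hypothesis needed for the support-preservation step to go through: it guarantees the encoder gap is not erased by a perturbation of size $\nu$, so that the identity of the active atoms is stable and the strong-convexity/RIP argument applies. I expect the main obstacle to be making the support-stability argument rigorous—specifically, showing that no inactive atom with margin $\tau_s(\x)$ crosses the threshold $\lambda$ in its correlation with the perturbed residual, and that no active atom leaves the support, when $\|\v\|_2 \le \nu \le \tau_s(\x)/2$. This requires carefully tracking how the Lasso optimality conditions (the subgradient conditions $|\D_i^T(\x - \D\enc[\D](\x))| \le \lambda$) deform under the input perturbation, and is the analogue of the delicate part of \autoref{lemma:advers_D_stable}. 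Once support preservation is established, the remaining steps are the routine Cauchy–Schwarz bound and assembling the two conditions into the stated minimum.
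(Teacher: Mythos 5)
Your proposal matches the paper's proof of this theorem: the paper likewise lower-bounds the perturbed margin by $\rho_\x - c_\W\,\|\enc[\D](\x+\v)-\enc[\D](\x)\|_2$ via Cauchy--Schwarz and invokes \autoref{lemma:stability_of_rep_adversarial} to obtain $\|\enc[\D](\x+\v)-\enc[\D](\x)\|_2 \leq \nu/\sqrt{1-\eta_s}$ under $\tau_s(\x) > 2\nu$, which assembles into exactly the stated minimum. The only (immaterial, since you invoke the lemma as stated) divergence is in your sketch of that lemma's mechanism: the paper does not argue via strong convexity of a support-restricted solution map—which would be delicate, as only the inactive set of size $p-s$ is preserved, not the exact support or signs, and the naive strong-convexity perturbation bound gives the weaker constant $\sqrt{1+\eta_s}\,\nu/(1-\eta_s)$—but instead first proves $\|\D\enc[\D](\x)-\D\enc[\D](\x+\v)\|_2\leq\nu$ by a variational-inequality argument requiring no support information, and then uses preservation of the inactive set to conclude the code difference is $s$-sparse, so RIP converts this image-domain bound into the code-domain bound.
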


\autoref{thm:multiclass_certificate} clearly captures the potential contamination on two flanks: robustness can no longer be guaranteed as soon as the energy of the perturbation is enough to either significantly modify the computed representation \emph{or} to induce a perturbation larger than the classifier margin on the feature space. 
Proof of \autoref{thm:multiclass_certificate}, detailed in Appendix \ref{supp:certificate}, relies on the following lemma showing that under an encoder gap assumption, the computed features are moderately affected despite adversarial corruptions of the input vector. 
\begin{lemma}[Stability of representations under adversarial perturbations] \label{lemma:stability_of_rep_adversarial}
Let $\D$ be a dictionary with RIP constant $\eta_s$. Then, for any $\x \in \mathcal X$ and its additive perturbation $\x+\v$, for any $\|\v\|_2\leq \nu$, if $\tau_s(\x) >  2 \nu$, then we have that
\begin{equation}
    \| \enc[\D]{(\x)} - \enc[\D]{(\x+\v)} \|_2 \leq \frac{\nu}{\sqrt{1-\eta_s}}.
\end{equation}
\end{lemma}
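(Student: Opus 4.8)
The plan is to leverage the encoder gap to confine both encodings to a common support of size $s$, and then exploit the restricted strong convexity that the RIP grants on that support. First I would read off from the definition of $\tau_s(\x)$ an index set $\mathcal{I}\in\Lambda^{p-s}$ of $(p-s)$ atoms satisfying $|\langle \D_i, \x - \D\enc[\D]{(\x)}\rangle| \le \lambda - \tau_s(\x)$ for every $i\in\mathcal{I}$. Writing $\Lambda = \mathcal{I}^c$ so that $|\Lambda| = s$, the Lasso optimality conditions of Problem~\eqref{eqn:enc-1} force $\mathrm{supp}(\enc[\D]{(\x)})\subseteq\Lambda$, since any active atom must saturate the dual constraint at $\lambda > \lambda - \tau_s(\x)$. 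Consequently $\enc[\D]{(\x)}$ is also the unique minimizer of the Lasso restricted to coefficients supported on $\Lambda$.

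Next I would introduce the auxiliary vector $\tilde{\z}$, defined as the minimizer of the Lasso for the \emph{perturbed} input $\x+\v$, but constrained to support $\Lambda$. Because $\D_\Lambda^T\D_\Lambda \succeq (1-\eta_s)\I$ by the RIP, both restricted problems are $(1-\eta_s)$-strongly convex and admit unique solutions. Comparing the subgradient optimality conditions of the two restricted problems at $\enc[\D]{(\x)}$ and $\tilde{\z}$, and invoking monotonicity of the subdifferential of $\lambda\|\cdot\|_1$, the cross term cancels and I obtain $\|\D_\Lambda(\tilde{\z}-\enc[\D]{(\x)})\|_2 \le \|\v\|_2 \le \nu$ after one Cauchy--Schwarz step; the lower RIP bound then upgrades this to $\|\tilde{\z}-\enc[\D]{(\x)}\|_2 \le \nu/\sqrt{1-\eta_s}$.

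The crux is then to certify that the constrained vector $\tilde{\z}$ is in fact the \emph{unconstrained} solution $\enc[\D]{(\x+\v)}$, so that the estimate above becomes the desired bound. For this I would verify dual feasibility of $\tilde{\z}$ at every inactive atom $i\in\mathcal{I}$: writing the residual $\tilde{\r}=(\x+\v)-\D\tilde{\z}$ and expanding $\D_i^T\tilde{\r} = \D_i^T(\x-\D\enc[\D]{(\x)}) + \D_i^T\v - \D_i^T\D_\Lambda(\tilde{\z}-\enc[\D]{(\x)})_\Lambda$, the three terms are controlled respectively by $\lambda-\tau_s(\x)$ (encoder gap), by $\nu$ (unit-norm atoms and $\|\v\|_2\le\nu$), and by $\|\D_\Lambda(\tilde{\z}-\enc[\D]{(\x)})\|_2\le\nu$ (the estimate from the previous step). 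Thus $|\D_i^T\tilde{\r}| \le \lambda - \tau_s(\x) + 2\nu$, which is strictly below $\lambda$ exactly when $\tau_s(\x) > 2\nu$. Hence no atom of $\mathcal{I}$ can enter the support, $\tilde{\z}$ satisfies the full KKT conditions, and by uniqueness of the Lasso solution $\tilde{\z}=\enc[\D]{(\x+\v)}$; the claimed bound $\|\enc[\D]{(\x)}-\enc[\D]{(\x+\v)}\|_2\le \nu/\sqrt{1-\eta_s}$ follows immediately.

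I expect this support-consistency step to be the main obstacle, as it is precisely where the hypothesis $\tau_s(\x)>2\nu$ is consumed and where one must rule out that the perturbation activates a previously inactive atom; once both encodings are known to live on the same $s$-dimensional support, the final norm estimate is a routine consequence of restricted strong convexity.
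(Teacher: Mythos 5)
Your proof is correct, and it reaches the bound by a genuinely different scaffolding than the paper's, even though the three quantitative ingredients coincide. The paper proves two standalone facts about the \emph{unconstrained} perturbed solution: first, a norm-stability lemma showing $\|\D\enc[\D]{(\x)} - \D\enc[\D]{(\x+\v)}\|_2 \le \nu$ with no gap assumption at all, obtained by recasting the Lasso as a quadratic program over the cone $\{\bz : \z = \z^+ - \z^-,\ \z^+,\z^-\ge 0\}$ and subtracting the two variational inequalities; second, a preservation-of-sparsity corollary (the $\epsilon=0$ case of its model-perturbation lemma) showing the inactive set $\mathcal I$ stays inactive when $\tau_s(\x) > 2\nu$, via exactly the triangle-inequality chain $|\D_i^T((\x+\v)-\D\enc[\D]{(\x+\v)})| \le (\lambda - \tau_s(\x)) + \nu + \nu < \lambda$ that you use; RIP on the $s$-sparse difference then finishes. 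You instead run a primal--dual witness argument: you build the minimizer $\tilde\z$ of the problem restricted to $\Lambda = \mathcal I^c$, get $\|\D_\Lambda(\tilde\z - \enc[\D]{(\x)})_\Lambda\|_2 \le \nu$ from subdifferential monotonicity on the restricted, $(1-\eta_s)$-strongly convex problem, and then certify strict dual feasibility on $\mathcal I$ to identify $\tilde\z$ with $\enc[\D]{(\x+\v)}$. Your monotonicity step is the same computation as the paper's QP variational inequality, only carried out on the support-restricted problem and phrased with subgradients rather than the positive/negative splitting --- arguably cleaner. As for what each approach buys: the paper's unconditional norm-stability lemma is a reusable building block (it also drives \autoref{lemma:advers_D_stable}, the $\epsilon>0$ model-perturbation case, where your witness construction would need the gap hypothesis re-threaded through the dictionary deviation), whereas your route isolates precisely where $\tau_s(\x) > 2\nu$ is consumed, gets well-posedness of the restricted problem for free from restricted strong convexity, and --- as a small bonus --- your strict dual feasibility actually \emph{certifies} uniqueness of the perturbed Lasso solution rather than relying on the paper's standing uniqueness assumption (which makes that invocation in your last step harmless).
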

An extensive set of results exist for the stability of the solution provided by Lasso relying generative model assumptions \citep{foucart2017mathematical,elad2010sparse}. The novelty of Lemma~\ref{lemma:stability_of_rep_adversarial} is in replacing such an assumption with the existence of a positive encoder gap on $\enc[\D](\x)$.

Going back to \autoref{thm:multiclass_certificate}, note that the upper bound on $\nu$ depends on the RIP constant $\eta_s$, which is not computable for a given (deterministic) matrix $\D$. Yet, this result can be naturally relaxed by upper bounding $\eta_s$ with measures of correlation between the atoms, such as the mutual coherence. This quantity provides a measure of the worst correlation between two atoms in the dictionary $\D$, and it is defined as $\mu(\D) = \max_{i\neq j} | \langle \D_i , \D_j \rangle|$ (for $\D$ with normalized columns). For general (overcomplete and full rank) dictionaries, clearly $0<\mu(\D)\leq 1$. 

While conceptually simple, results that use $\mu(\D)$ tend to be too conservative. Tighter bounds on $\eta_s$ can be provided by the Babel function\footnote{Let $\Lambda$ denote subsets (supports) of $\{1,2,\dots,p\}$. Then, the Babel function is defined as \mbox{$\mu_{(s)} = \max_{\Lambda:|\Lambda|=s} \max_{j\notin \Lambda} \sum_{i\in\Lambda} |\langle \D_i , \D_j \rangle|$.}}, $\mu_{(s)}$, which quantifies the maximum correlation between an atom and \emph{any other} collection of $s$ atoms in $\D$. It can be shown \citep[Chapter 2]{tropp2003improved,elad2010sparse} that $\eta_s\leq \mu_{(s-1)}\leq(s-1)\mu(\D)$. Therefore, we have the following:
\begin{corollary} \label{cor:certificate_babel_multiclass}
Under the same assumptions as those in \autoref{thm:multiclass_certificate}, 
\begin{equation}
       \arg\max_{j\in[K]} ~[\W^T \enc[\D](\x)]_j\ = \arg\max_{j\in[K]}~ [\W^T \enc[\D](\x+\v)]_j,\quad \forall~ \v:\|\v\|_2\leq \nu 
    \end{equation}
    so long as $\nu \leq \min\{ \tau_s(\x)/2 , \rho_\x \sqrt{1-\mu_{(s-1)}}/c_\W \}$.
\end{corollary}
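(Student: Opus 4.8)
The plan is to derive the corollary as an immediate consequence of \autoref{thm:multiclass_certificate}, exploiting the fact that the Babel function furnishes a computable upper bound on the (uncomputable) RIP constant. The only external ingredient I would invoke is the standard inequality chain $\eta_s \leq \mu_{(s-1)} \leq (s-1)\mu(\D)$ established in \citep{tropp2003improved,elad2010sparse} and recalled immediately above the corollary statement; in fact only its first link, $\eta_s \leq \mu_{(s-1)}$, is needed here.

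First I would observe that since $\eta_s \leq \mu_{(s-1)}$ we have $1 - \mu_{(s-1)} \leq 1 - \eta_s$, and because $t \mapsto \sqrt{t}$ is monotonically increasing on $[0,1]$ this yields $\sqrt{1-\mu_{(s-1)}} \leq \sqrt{1-\eta_s}$. Multiplying through by the nonnegative quantity $\rho_\x / c_\W$ (recall $\rho_\x > 0$ by assumption, and $c_\W > 0$ whenever the classifier is nondegenerate) gives
\[ \frac{\rho_\x \sqrt{1-\mu_{(s-1)}}}{c_\W} \leq \frac{\rho_\x \sqrt{1-\eta_s}}{c_\W}. \]

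Next I would note that the hypothesis of the corollary, $\nu \leq \min\{\tau_s(\x)/2,\, \rho_\x\sqrt{1-\mu_{(s-1)}}/c_\W\}$, is therefore \emph{more} restrictive than, and in particular implies, the hypothesis of \autoref{thm:multiclass_certificate}, $\nu \leq \min\{\tau_s(\x)/2,\, \rho_\x\sqrt{1-\eta_s}/c_\W\}$: the first term of the two minima coincides, while the second term appearing in the corollary's bound lower-bounds the corresponding term of the theorem by the display above. Consequently any $\nu$ satisfying the corollary's condition also satisfies the theorem's condition, and invoking \autoref{thm:multiclass_certificate} directly yields the claimed equality of argmaxes for every $\v$ with $\|\v\|_2 \leq \nu$.

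There is no genuine obstacle here, as the argument is essentially a one-line substitution; the single point requiring care is the direction of the inequality. Replacing $\eta_s$ by its upper bound $\mu_{(s-1)}$ \emph{shrinks} the certified radius, so the corollary trades the tighter (but practically uncomputable) certificate of the theorem for a slightly more conservative one that can be evaluated directly from the dictionary via its Babel function. I would emphasize this interpretive remark, since it is precisely the motivation for stating the corollary at all.
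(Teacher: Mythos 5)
Your proposal is correct and follows exactly the route the paper intends: the corollary is an immediate consequence of \autoref{thm:multiclass_certificate} via the inequality $\eta_s \leq \mu_{(s-1)}$ (cited just before the corollary), which gives $\sqrt{1-\mu_{(s-1)}} \leq \sqrt{1-\eta_s}$ and hence shows the corollary's condition on $\nu$ is more conservative than the theorem's. Your monotonicity argument and the interpretive remark about trading tightness for computability match the paper's reasoning precisely.
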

Although 
the condition on $\nu$ in the corollary above is stricter (and the bound looser), the quantities involved can easily be computed numerically leading to practical useful bounds, as we see~next.

\vspace*{-5pt}
\section{Experiments}
\label{Sec:Experiments}

In this section, we illustrate the robustness certificate guarantees 
both in synthetic and real data, as well as the trade-offs between constants in our sample complexity result. First, we construct samples from a separable binary distribution of $k$-sparse signals. To this end, we employ a dictionary with 120 atoms in 100 dimensions with a mutual coherence of 0.054. Sparse representations $\z$ are constructed by first drawing their support (with cardinality $k$) uniformly at random, and drawing its non-zero entries from a uniform distribution away from zero. Samples are obtained as $\x = \D\z$, and normalized to unit norm. We finally enforce separability by drawing $\w$ at random from the unit ball, determining the labels as $y=\text{sign}( \w^T\enc[\D]{(\x)})$, and discarding samples with a margin $\rho$ smaller than a pre-specified amount (0.05 in this case). Because of the separable construction, the accuracy of the resulting classifier is 1.

We then attack the obtained model employing the projected gradient descent method \citep{madry2017towards}, and analyze the degradation in accuracy as a function of the energy budget $\nu$. We compare this empirical performance with the bound in  \autoref{cor:certificate_babel_multiclass}: given the obtained margin, $\rho$, and the dictionary's $\mu_{s}$, we can compute the maximal certified radius for a sample $\x$ as
\begin{equation} \label{eq:computing_bound}
\nu(\x) = \max_s \min\{ \tau_s(\x)/2 , \rho_\x \sqrt{1-\mu_{(s-1)}}/c_\W\}.
\end{equation}
{For a given dataset, we can compute the minimal certified radius over the samples, $\nu^* = \min_{i\in[n]} \nu(\x_i)$.} This is the bound depicted in the vertical line in \autoref{fig:synthetic_separable}. As can be seen, despite being somewhat loose, the attacks do not change the label of the samples, thus preserving the accuracy.

In non-separable distributions, one may study how the accuracy depends on the \emph{soft margin} of the classifier. In this way, one can determine a target margin that results in, say, 75\% accuracy on a validation set. One can obtain a corresponding certified radius of $\nu^*$ as before, which will guarantee that the accuracy will not drop below 75\% as long as $\nu<\nu^*$. This is illustrated in \autoref{fig:synthetic_non_separable}. 

An alternative way of employing our results from \autoref{sec:certificates} is by studying the \emph{certified accuracy} achieved by the resulting hypothesis. 
The certified accuracy quantifies the percentage of samples in a test set that are classified correctly while being \emph{certifiable}. In our context, this implies that a sample $\x$ achieves a margin of $\rho_\x$, for which a certified radius of $\nu^*$ can be obtained with \autoref{eq:computing_bound}. In this way, one may study how the certified accuracy decreases with increasing $\nu^*$. 

This analysis lets us compare our bounds with those of other popular certification techniques, such as randomized smoothing \citep{cohen2019certified}. Randomized smoothing provides high probability robustness guarantees against $\ell_2$ attacks for \emph{any} classifier by composing them with a Gaussian distribution (though other distributions have been recently explored as well for other $l_p$ norms \citep{salman2020black}). In a nutshell, the larger the variance of the Gaussian, the larger the certifiable radius becomes, albeit at the expense of a drop in accuracy.

We use the MNIST dataset for this analysis. We train a model with 256 atoms by minimizing the following regularized empirical risk using stochastic gradient descent (employing Adam \citep{kingma2014adam}; the implementation details are deferred to Appendix \ref{supp:numerical}) 
\begin{equation}
    \min_{\W,\D}~ \frac{1}{m} \sum_{i=1}^m \ell\left(y_i,\langle \W,\enc[\D]{(\x_i)} \rangle\right) + \alpha \|\I - \D^T\D\|^2_F + \beta \|\W\|^2_F,
\end{equation}
where $\ell$ is the cross entropy loss. 
Recall that $\enc[\D](\x)$ depends on $\lambda$, and we train two different models with two values for this parameter ($\lambda=0.2$ and $\lambda=0.3$).

\begin{figure}
\centering
\subcaptionbox{\label{fig:synthetic_separable}}
{\hspace*{-2pt}\includegraphics[width = .25\textwidth,trim = 0 0 0 0 0]{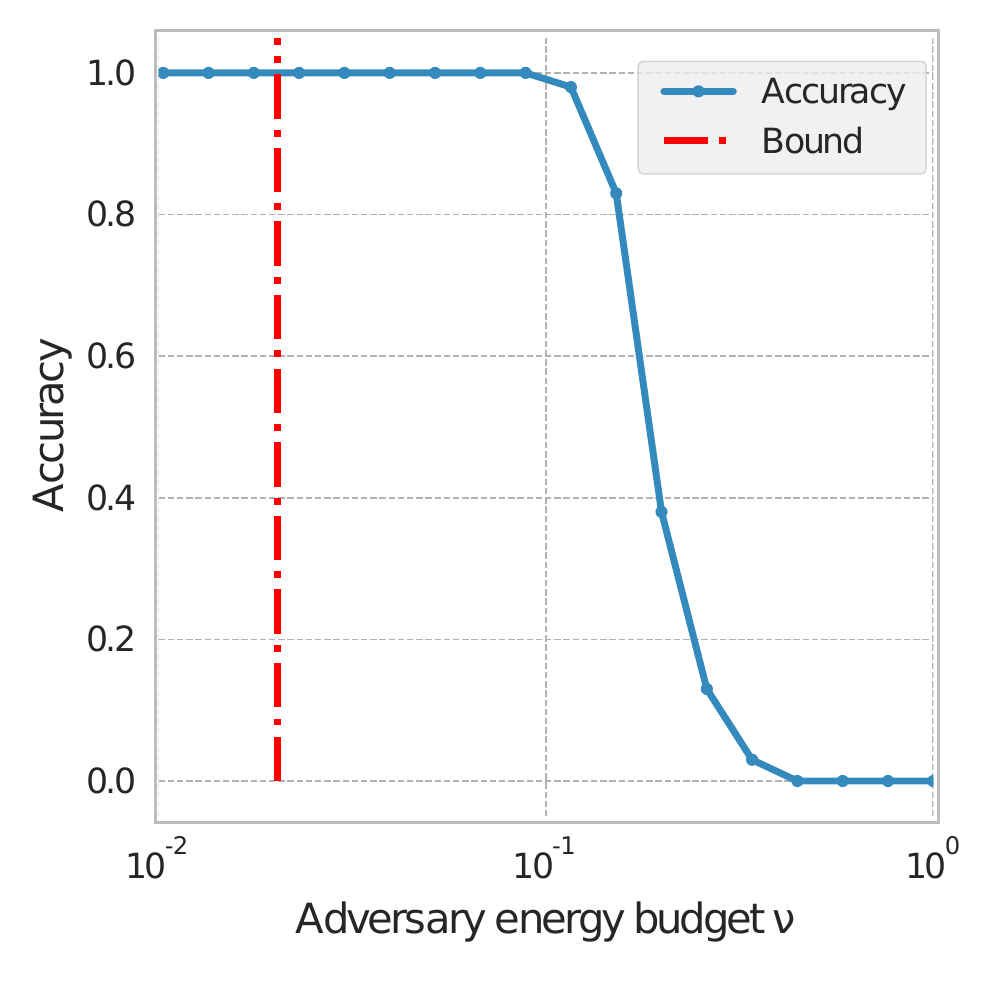}}
\subcaptionbox{\label{fig:synthetic_non_separable}}
{\hspace*{-12pt}\includegraphics[width = .25\textwidth,trim = 0 0 0 0]{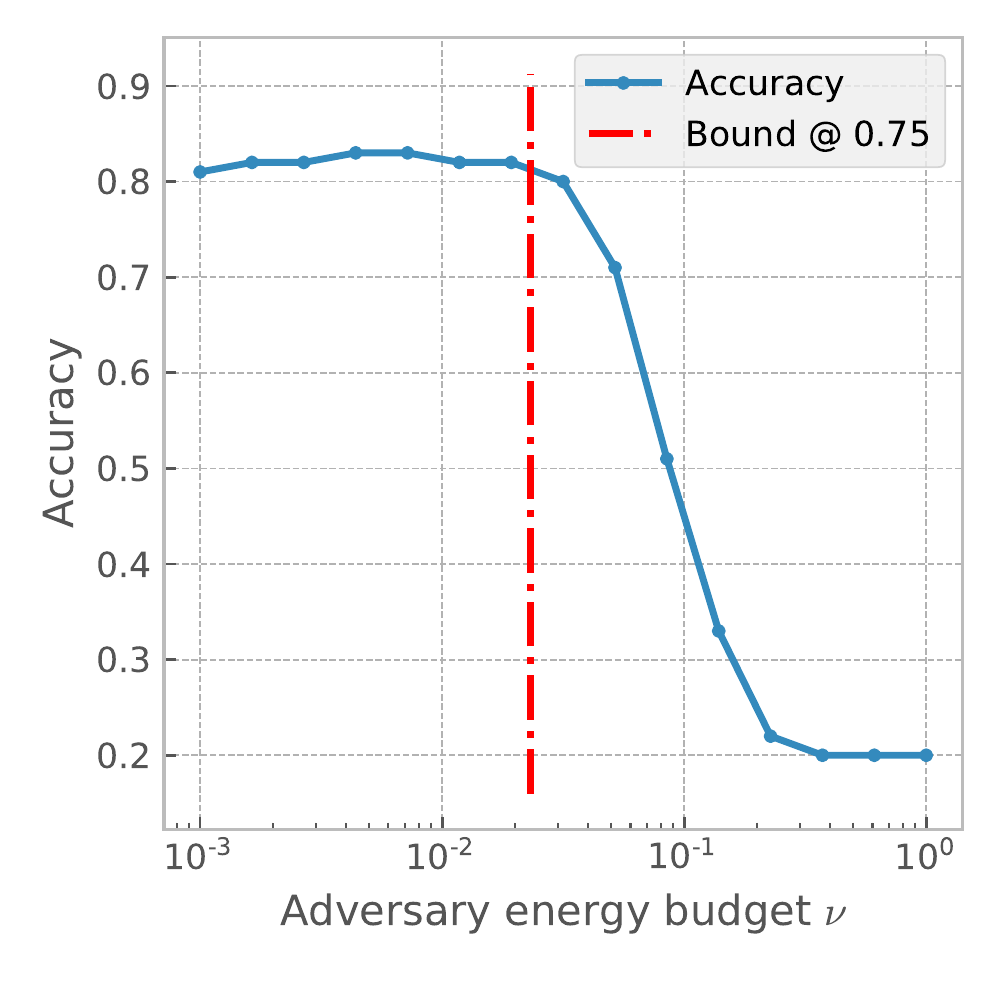}}
\subcaptionbox{\label{fig:mnist_comparison_a}}
{\hspace*{3pt}\includegraphics[width = .25\textwidth,trim = 0 -5 20 30]{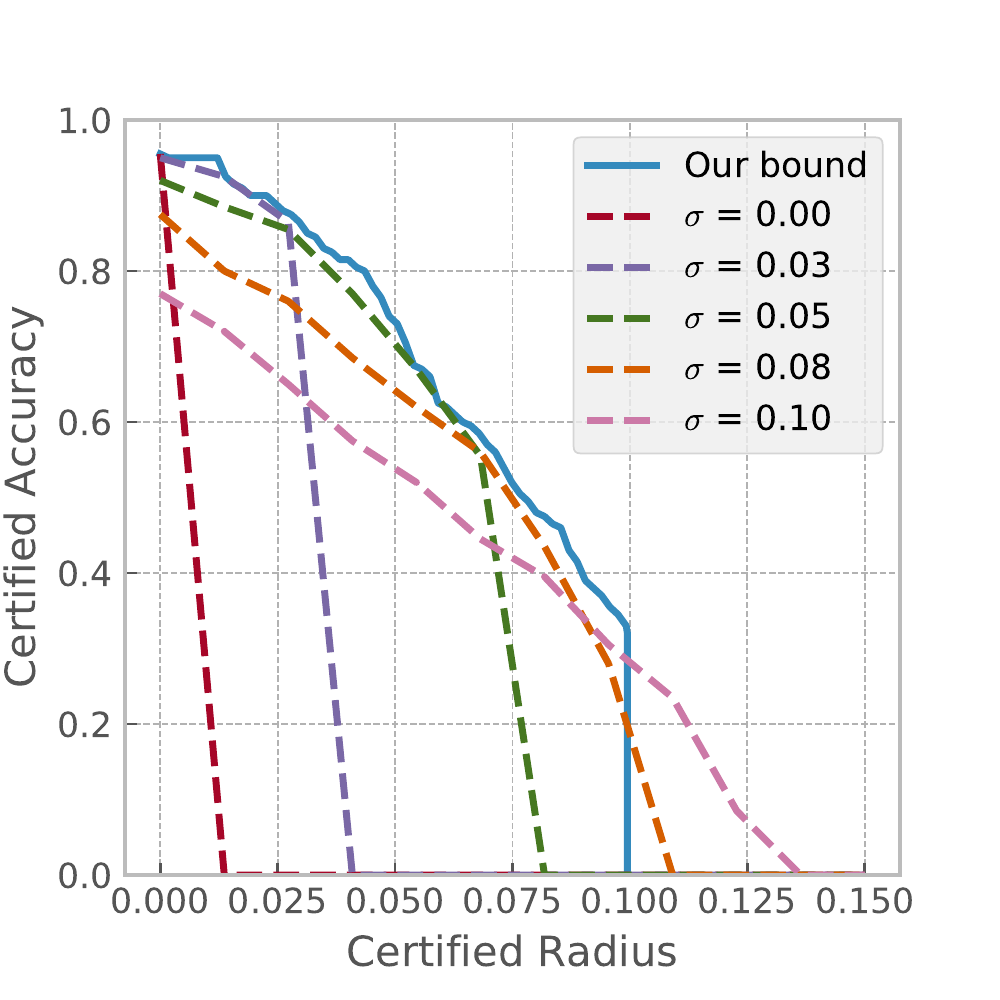}}
\subcaptionbox{\label{fig:mnist_comparison_b}}
{\hspace*{3pt}\includegraphics[width = .25\textwidth,trim = 0 -5 20 30]{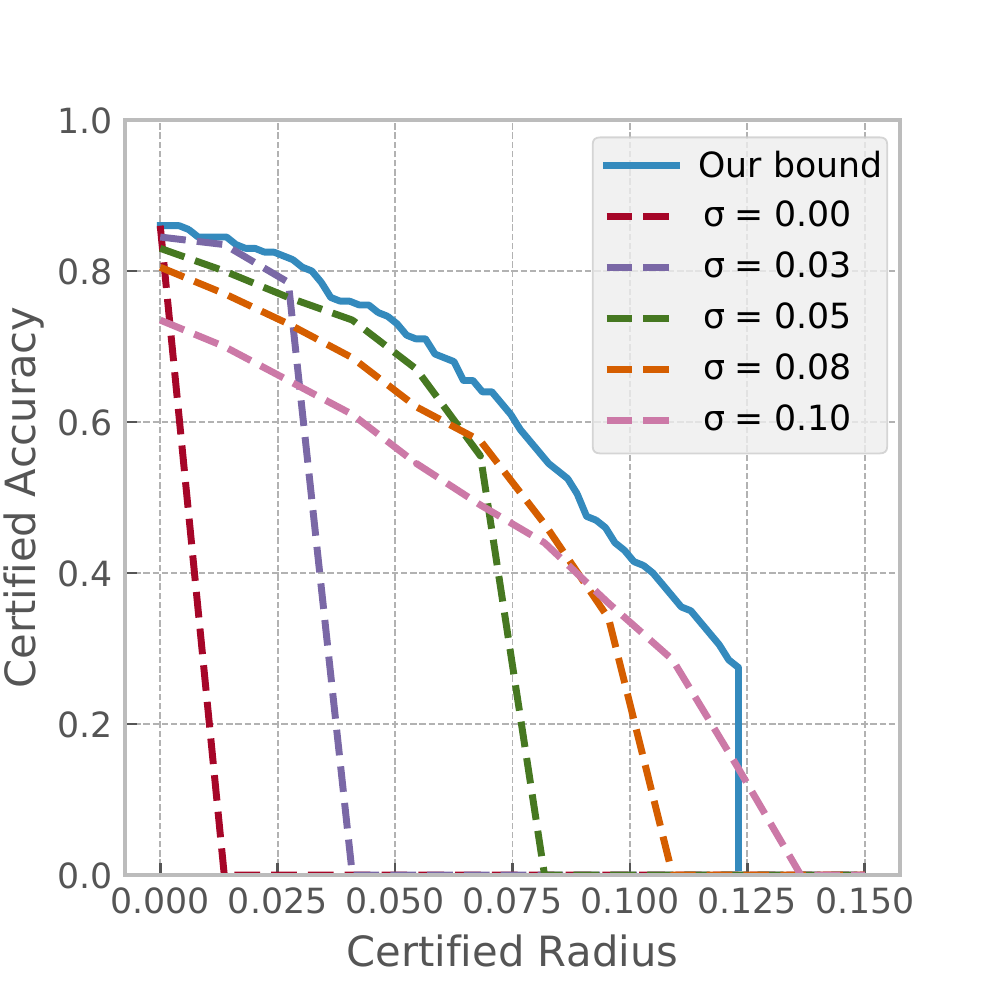}}
\caption{Numerical demonstrations of our results. (a) synthetic separable distribution. (b) synthetic non-separable distribution. (c-d) certified accuracy on MNIST with $\lambda = 0.2$ and $\lambda = 0.3$, respectively, comparing with Randomized Smoothing with different variance levels.\vspace{-8pt}}
\label{fig:Certificates}
\end{figure}
\autoref{fig:mnist_comparison_a} and  \ref{fig:mnist_comparison_b} illustrate the certified accuracy on 200 test samples obtained by different degrees of randomized smoothing and by our result. While the certified accuracy resulting from our bound is comparable to that by randomized smoothing, the latter provides a certificate by \emph{defending} (i.e. composing it with a Gaussian distribution). In other words, different \emph{smoothed} models have to be constructed to provide different levels of certified accuracy. In contrast, our model is not defended or modified in any way, and the certificate relies solely on our careful characterization of the function class. 
Since randomized smoothing makes no assumptions about the model, the bounds provided by this strategy rely on the estimation of the output probabilities. This results in a heavy computational burden to provide a high-probability result (a failure probability of 0.01\% was used for these experiments). In contrast, our bound is deterministic and trivial to compute. 

Lastly, comparing the results in \autoref{fig:mnist_comparison_a} (where $\lambda = 0.2$) and \autoref{fig:mnist_comparison_b} (where $\lambda = 0.3$), we see the trade-off that we alluded to in \autoref{sec:GenBound}: larger values of $\lambda$ allow for larger encoder gaps, resulting in overall larger possible certified radius. In fact, $\lambda$ determines a hard bound on the possible achieved certified radius, given by $\lambda/2$, as per \autoref{eq:computing_bound}. This, however, comes at the expense of reducing the complexity of the representations computed by the encoder $\enc[\D]{(\x)}$, which impacts  the risk attained. 

\vspace{-5pt}
\section{Conclusion}
\label{sec:Conclusion}

In this paper we study the adversarial robustness of the supervised sparse coding model from two main perspectives: {we provide a bound for the robust risk of any hypothesis that achieves a minimum encoder gap over the samples, as well as a robustness certificate for the resulting end-to-end classifier}. Our results describe guarantees relying on the interplay between the computed representations, or features, and the classifier margin.

While the model studied is still relatively simple, we envision several ways in which our analysis can be extended to more complex models. First, high dimensional data with shift-invariant properties (such as images) often benefit from convolutional features. Our results do hold for convolutional dictionaries, but the conditions on the mutual coherence may become prohibitive in this setting. An analogous definition of encoder gap in terms of convolutional sparsity \citep{papyan2017working} may provide a solution to this limitation. Furthermore, this analysis could also be extended to sparse models with multiple layers, as in \citep{papyan2017convolutional,sulam2019multi}. {On the other hand, while our result does not provide a uniform learning bound over the hypothesis class, we have found empirically that regularized ERM does indeed return hypotheses satisfying non-trivial encoder gaps. The theoretical underpinning of this phenomenon needs further research.}
More generally, even though this work focuses on sparse encoders, we believe similar principles could be generalized to other forms of representations in a supervised learning setting, providing a framework for the principled analysis of adversarial robustness of machine learning models.

\section*{Broader Impact}
This work contributes to the theoretical understanding of the limitations and achievable robustness guarantees for supervised learning models. Our results can therefore provide tools that could be deployed in sensitive settings where these types of guarantees are a priority. On a broader note, this work advocates for the precise analysis and characterization of the data-driven features computed by modern machine learning models, and we hope our results facilitate their generalization to other more complex models. 

\section*{Acknowledgements} { 
This research was supported, in part, by DARPA GARD award HR00112020004, NSF BIGDATA award IIS-1546482, NSF CAREER award IIS-1943251 and NSF TRIPODS award CCF-1934979. Jeremias Sulam kindly thanks Aviad Aberdam for motivating and inspiring discussions. Raman Arora acknowledges support from the 
Simons Institute as part of the program on the Foundations of Deep Learning and the 
Institute for Advanced Study (IAS), Princeton, NJ, as part of the special year on Optimization, Statistics, and Theoretical Machine Learning. 
}
\medskip 
\bibliographystyle{plainnat}
\bibliography{refs}

\newpage
\appendix

\begin{center}
    \Large \textbf{Supplementary Material for \\ Adversarial Robustness of Supervised Sparse Coding}
\end{center}

\section{Encoder Gap for $k$-sparse signals}
\label{supp:EncoderGapExample}

Herein we show that a positive encoder gap exists for signals that are (approximately) $k$-sparse. Consider signals $\x$ obtained as $\x = \D\z + \v$, where $\D\in\mathcal D$, $\|\v\|_2\leq\nu$ and $\z$ is sampled from a distribution of sparse vectors with up to $k$ non-zeros, with $k < \frac{1}{3}\left(1+\frac{1}{\mu(\D)}\right)$, where $\mu(\D) = \max_{i\neq j}\langle \D_i , \D_j \rangle$ is the mutual coherence of $\D$. Then, from \citep{tropp2006just}, if $\lambda=4\nu$, the (unique) solution recovered by $\alfa = \enc[\D]{(\x+\v)}$ satisfies $\|\alfa-\z\|_\infty\leq \frac{15}{2}\nu$, and $Supp(\alfa)\subseteq Supp(\z)$. Recall the definition of encoder gap:
\[\tau_s(\x) \coloneqq \max_{\mathcal{I}\in\Lambda^{p-s}} \min_{i \in \mathcal{I}} ~\left( \lambda - | \langle \D_i , \x - \D\enc[\D](\x) \rangle | \right)\]
and pick $s>k$. Let $S = Supp(\z)$. Thus, the maximization over $I$ is achieved by a subset $\mathcal I$ which does not contain any of the active atoms in $\z$ (for which $| \langle \D_i , \x - \D\enc[\D](\x) \rangle |=\lambda$, by optimality).

Now, define $\Delta = \alfa - \z$, and let $\Delta_S$ denote the vector $\Delta$ restricted to the support $S$ and $\D_S$ the sub-dictionary obtained $\D$ by restricting it to the same set of atoms. We can then write
\begin{align}
    \max_i \left| \D_i^T (\x-\D\alfa ) \right| &= \max_i \left| \D_i^T (\x-\D\z ) - \D_i^T\D \Delta \right|\\
    &= \max_i \left| \D_i^T \D_S \Delta_S \right| \\
    &\leq \max_i | \D_i^T \D_S | \|\Delta_S\|_\infty \\
    &= \frac{15}{2} \mu(\D) \nu,
    \end{align}
because $i\notin S$. Thus,
\begin{equation}
    \min_i \lambda - | \langle \D_i , \x - \D\enc[\D](\x) \rangle | \geq \lambda - \frac{15}{2} \mu(\D) \nu.
\end{equation}
In fact, recalling that $\lambda = 4\nu$, we have that $\tau_s\geq\nu (4-\mu(\D) \frac{15}{2})$.

\section{Robust Generalization Bound}
\label{supp:Generalization}
Herein we prove our generalization bound, but first re-state it for completeness.

\setcounter{section}{4}
\renewcommand{\thesection}{\arabic{section}}
\setcounter{theorem}{0}
\begin{theorem}
Let $\mathcal W = \{\w \in \mathbb R^p : \|\w\|_2 \leq B \}$,  and $\mathcal{D}$ be the set of column-normalized dictionaries with $p$ columns and with RIP at most $\eta^*_s$. Let $\mathcal{H} = \{ f_{\D,\w}(\x) = \langle \w, \varphi_\D(\x) \rangle : \w\in \mathcal{W}, \D \in \mathcal{D} \}$. Denote $\tau^*_s$ the minimal encoder gap over the $m$ samples. Then, with probability at least $1-\delta$ over the draw of the $m$ samples, the generalization gap for any hypothesis $f\in\mathcal H$ that achieves an encoder gap of $\tau_s^*>2\nu$, satisfies
\begin{multline}
       \left| \tilde{R}_S(f)\! -\! \tilde{R}(f) \right| \!\leq \frac{b}{\sqrt{m}} \left(  (d+1) p \log\left(\frac{3m}{2\lambda (1-\eta^*_s)}\right) + p\log(B) + \log{\frac{4}{\delta}} \right)^{\frac{1}{2}} \\ + b\sqrt{\frac{2\log(m/2)+2\log(2/\delta)}{m}}  + 12\frac{(1+\nu)^2 L_\ell B \sqrt{s} }{m} , \nonumber 
\end{multline} 
as long as $m > \frac{\lambda(1-\eta_s)}{(\tau_s^* - 2\nu)^2} K_\lambda$,
where $K_\lambda = \left(  2 \left( 1+\frac{1+\nu}{2\lambda} \right) + \frac{5 (1+\nu)}{\sqrt{\lambda}}  \right)^2$.
\end{theorem}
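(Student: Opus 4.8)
The plan is to establish uniform convergence of the empirical robust risk to the population robust risk over $\mathcal{H}$ by an $\epsilon$-net argument on the parameter space $\mathcal{W}\times\mathcal{D}$, in the spirit of the matrix-factorization sample-complexity analyses. Concretely, I would fix a resolution $\epsilon>0$ and build a finite net $\mathcal N_\epsilon\subseteq\mathcal{W}\times\mathcal{D}$ so that every $(\w,\D)$ lies within $\epsilon$ of some $(\w',\D')\in\mathcal N_\epsilon$ (in $\ell_2$ on $\w$ and operator norm on $\D$). Since each of the $p$ atoms lives on the unit sphere in $\mathbb{R}^d$, the oblique manifold $\mathcal{D}$ admits a net of log-cardinality $\mathcal O\!\big(dp\log(1/\epsilon)\big)$, while the ball $\mathcal{W}$ contributes $\mathcal O\!\big(p\log(B/\epsilon)\big)$; together these account for the $(d+1)p$ and $p\log(B)$ factors under the square root. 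For each fixed member of the net the robust loss $\tilde{\ell}_\nu$ is bounded in $[0,b]$ and is evaluated on i.i.d.\ samples, so Hoeffding's inequality followed by a union bound over $\mathcal N_\epsilon$ produces the leading term $\tfrac{b}{\sqrt m}\big((d+1)p\log(\cdot)+p\log B+\log\tfrac{4}{\delta}\big)^{1/2}$.

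The crux is the transfer from the net to an arbitrary hypothesis, which demands Lipschitz continuity of the robust loss in the parameters. For $(\w,\D)$ and its nearest net point $(\w',\D')$ I would first invoke the elementary inequality $|\max_\v A(\v)-\max_\v B(\v)|\le\max_\v|A(\v)-B(\v)|$ to push the parameter perturbation \emph{inside} the adversary's maximization, then use the $L_\ell$-Lipschitzness of $\ell$ to reduce matters to bounding $\max_{\v\in\Delta_\nu}|\langle\w,\enc[\D](\x+\v)\rangle-\langle\w',\enc[\D'](\x+\v)\rangle|$. Splitting this into a $\w$-term and a $\D$-term, the former is handled by Cauchy--Schwarz together with the uniform encoding bound $\|\enc[\D](\x+\v)\|_2\le\|\enc[\D](\x+\v)\|_1\le(1+\nu)^2/(2\lambda)$ (obtained by comparing the Lasso objective at the optimum against $\z=0$), and the latter is \emph{precisely} the content of \autoref{lemma:advers_D_stable}, which gives $\max_\v\|\enc[\D](\x+\v)-\enc[\D'](\x+\v)\|_2\le\gamma(1+\nu)^2\epsilon$ with $\gamma=\tfrac32\tfrac{\sqrt s}{\lambda(1-\eta_s)}$. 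Choosing $\epsilon\asymp\lambda(1-\eta_s^*)/m$ makes the resulting approximation error of order $L_\ell B\sqrt s\,(1+\nu)^2/m$ — the fast $\mathcal O(1/m)$ third term — while simultaneously fixing the $\log\!\big(3m/(2\lambda(1-\eta_s^*))\big)$ factor inside the covering term. Substituting this $\epsilon$ into the gap precondition of \autoref{lemma:advers_D_stable}, namely $\tau_s(\x)\ge 2\nu+\sqrt\epsilon\,\big(\ldots\big)$, and rearranging, yields exactly the sample-complexity requirement $m>\tfrac{\lambda(1-\eta_s)}{(\tau_s^*-2\nu)^2}K_\lambda$.

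The main obstacle — and the source of the $b\sqrt{(2\log(m/2)+2\log(2/\delta))/m}$ middle term — is that \autoref{lemma:advers_D_stable} only grants the Lipschitz bound at points $\x$ whose \emph{unperturbed} encoder gap exceeds the threshold $\tau_s^*$. This is directly verifiable on the training sample but \emph{not} on a fresh draw from $P$, so the Lipschitz transfer of the population risk $\tilde R(\hat f)\to\tilde R(f)$ across the net is not automatic: on the region where the gap is too small the stability guarantee collapses and one can only fall back on the crude bound $b$. I would resolve this by splitting the $m$ samples into two equal halves, fitting $\D$ (and $\w$) on the first half and reserving the second as an i.i.d.\ development set that is never used for fitting; the event $\{\tau_s(\x)\ge\tau_s^*\}$ can then be treated as an i.i.d.\ trial and its population mass controlled, via a concentration inequality on the development set, to produce the middle term. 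Establishing that the encoder-gap condition propagates from the observed samples to the population with high probability — so that the Lipschitz machinery applies wherever it is invoked — is the step I expect to demand the most care; the remaining ingredients (covering numbers, Hoeffding, the union bound, and the balancing choice of $\epsilon$) are then assembled in the standard way.
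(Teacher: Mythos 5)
Your proposal follows essentially the same route as the paper's own proof: the identical $\epsilon$-cover of $(\mathcal{W},\mathcal{D})$ under the max of operator and $\ell_2$ norms with Hoeffding plus a union bound on the net, the same Lipschitz transfer combining the max-of-differences inequality, Cauchy--Schwarz with the encoding bound $\|\enc[\D](\x+\v)\|_2\leq(1+\nu)^2/(2\lambda)$, and \autoref{lemma:advers_D_stable} for the dictionary term, the same balancing choice $\epsilon=\lambda(1-\eta_s)/m$, and the same train/development split with a concentration bound on the empirical distribution of $\tau_s$ to handle the small-gap event, yielding the middle term. The argument is correct and matches the paper's proof in all essential respects.
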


\setcounter{section}{2}
\renewcommand{\thesection}{\Alph{section}}
 
\begin{proof}
    Fix $\epsilon>0$, and consider a  minimal $\epsilon$-cover for the parameter space $(\mathcal{D},\mathcal{W})$ with respect to a metric $d$ and with the elements $(\D_j,\w_j)$,  $j\in\{1,\dots,N^{cov}((\mathcal{D},\mathcal{W}),\epsilon)\}$. The metric we consider is the max over the operator norm and $\ell_2$ norm on $\mathcal{D}$ and $\mathcal{W}$, respectively, i.e. $d\left( (\D,\w), (\D',\w')\right) = \max \{ \|\D-\D'\|_2, \|\w-\w'\|_2 \}$. 
    Now, fixing $(\D,\w)$, by the definition of the $\epsilon$-cover, there exist an index $j$ so that $d( (\D_j,\w_j),(\D,\w) )\leq \epsilon$. We thus expand the generalization gap into three terms, as follows:
    \begin{align} 
         \left| \tilde{R}_S(f)\! -\! \tilde{R}(f) \right| & =  \left| ~ \frac{1}{m} \sum_{i=1}^{m} \tilde{\ell}_\nu(y_i,f(\x_i)) ~~- \underset{(\x,y)\sim P}{\mathbb{E}} \left[ \tilde{\ell}_\nu(y,f(\x)) \right] \right| \\ \label{eq:3_parts}
          & \leq \sup_{k\in [N^\text{cov}]} \Big| \frac{1}{m} \sum_{i=1}^{m} \tilde{\ell}_\nu(y_i,f_{\D_k,\w_k}(\x_i)) - \underset{(\x,y)}{\mathbb{E}} \left[ \tilde{\ell}_\nu(y,f_{\D_k,\w_k}(\x)) \right] \Big| \\
          &\quad +  \Big| \frac{1}{m} \sum_{i=1}^{m} \tilde{\ell}_\nu(y_i,f_{\D,\w}(\x_i)) - \frac{1}{m} \sum_{i=1}^{m} \tilde{\ell}_\nu(y_i,f_{\D_j,\w_j}(\x_i)) \Big| \\
          &\quad +  \Big| \underset{(\x,y)}{\mathbb{E}} \left[ \tilde{\ell}_\nu(y,f_{\D_j,\w_j}(\x)) \right] - \underset{(\x,y)}{\mathbb{E}} \left[ \tilde{\ell}_\nu(y,f_{\D,\w}(\x)) \right]  \Big|. 
    \end{align}{}
    
    Let us bound the first of these terms. Let $\z_i$ denote the random tuple $(y_i,\x_i)$, and $\Z = \left[ (y_1,\x_1),\dots, (y_m,\x_m) \right]$. Let $g(\Z) = \frac{1}{m}\sum_{i=1}^m \tilde{\ell}_f(y_i,\x_i)$. Furthermore, consider $\Z'$ as the set of $m$ random variables $\z$ that only differs from $\Z$ in its $i^{th}$ variable, $\z_i' = (y'_i,\x_i')$. Then, for any $i \in [m]$,
    \begin{equation}
        | g(\Z) - g(\Z') | = \left| \frac{1}{m} \left( \tilde{\ell}_{\nu_f}(y_i,\x_i) - \tilde{\ell}_{\nu_f}(y'_i,\x'_i) \right) \right| \leq \frac{b}{m},
    \end{equation}
    since $\ell(y,f(\x))$, and thus $\tilde\ell(y,f(\x))$, is bounded.  
    \begin{equation}
         \text{Pr} \left[ ~ |  g(\Z) - \mathbb E [ g(\Z) ] | \geq t ~ \right] \leq 2\exp\left(\frac{-2mt^2}{b^2} \right).
    \end{equation}
    Furthermore, note that $\mathbb E [ g(\Z) ] = \mathbb E [ \tilde{\ell}_{\nu_f}(y,\x) ] $ (linearity of expectation), and thus we have that
    \begin{equation}
        \text{Pr}\Bigg[ \Big| \frac{1}{m} \sum_{i=1}^{m} \tilde{\ell}_\nu(y_i,f_{\D,\w}(\x_i)) - \underset{(\x,y)}{\mathbb{E}}  \tilde{\ell}_\nu(y,f_{\D,\w}(\x)) \Big| > t \Bigg] \leq 2 \exp{\left(\frac{-2mt^2}{b^2}\right)}.
    \end{equation}
    
    Next, using a union bound argument, we can bound the probability over the supremum:
    \begin{multline} \label{eq:fail_1}
        \text{Pr}\Bigg[ \sup_{j} \Big| \frac{1}{m} \sum_{i=1}^{m} \tilde{\ell}(y_i,f_{\D_j,\w_j}(\x_i)) - \underset{(\x,y)}{\mathbb{E}}  \tilde{\ell}(y,f_{\D_j,\w_j}(\x)) \Big| > t \Bigg] \leq \\ \sum_{j=1}^{N^{cov}} \text{Pr}\Bigg[ \Big| \frac{1}{m} \sum_{i=1}^{m} \tilde{\ell}(y_i,f_{\D_j,\w_j}(\x_i)) - \underset{(\x,y)}{\mathbb{E}}  \tilde{\ell}(y,f_{\D_j,\w_j}(\x)) \Big| > t \Bigg] \leq 2 N^{cov} \exp{\left(\frac{-2mt^2}{b^2}\right)}.
    \end{multline}
    
    Denote this failure probability as $\delta/2$. Thus, with probability at least $1-\delta/2$, we get 
    \begin{equation}
        \sup_{j} \Big| \frac{1}{m} \sum_{i=1}^{m} \tilde{\ell}(y_i,f_{\D_j,\w_j}(\x_i)) - \underset{(\x,y)}{\mathbb{E}} \left[ \tilde{\ell}(y,f_{\D_j,\w_j}(\x)) \right] \Big| \leq b \sqrt{ \frac{\log(N^{cov}) + \log(4/\delta) }{2m}}.
    \end{equation}\\
    
    Let us now focus on the second and third terms in Eq. \eqref{eq:3_parts}. In particular, we will upper bound them by analyzing the Lipschitz continuity of the loss function with respect to the parameters, $\D$ and $\w$. 
    We assume that $\ell$ is $L_\ell$-Lipschitz, and we analyze the Lipschitz continuity of $\tilde{\ell}$ w.r.t $\D$ through $f_\D(\x)$. Noting that the difference of the maxima is upper-bounded by the maximum of the difference, we can write
    \begin{align}
        \big| \tilde{\ell}(y,f_{\D}(\x)) - \tilde{\ell}(y,f_{\D'}(\x)) \big| &= \big|\max_{\v\in\Delta} \ell(y,f_{\D}(\x+\v)) - \max_{\v\in\Delta} \ell(y,f_{\D'}(\x+\v)) \big| \\
        &\leq \max_{\v\in\Delta}  \big| \ell(y,f_{\D}(\x+\v))  - \ell(y,f_{\D'}(\x+\v)) \big| \\
        & \leq   L_\ell \max_{\v\in\Delta} | \langle \w^T, \enc[\D](\x+\v) \rangle  - \langle \w^T, \enc[\D'](\x+\v) \rangle  | \\ \label{eq:lip_bound_1}
        & \leq L_\ell \|\w\|_2  \max_{\v\in\Delta} \|\enc[\D](\x+\v) - \enc[\D'](\x+\v) \|_2.
    \end{align}
    
    We will now bound the term $\max_{\v\in\Delta} \|\enc[\D](\x+\v) - \enc[\D'](\x+\v) \|_2$. Notice that if the dictionary $\D$ has an encoder gap of at least $\tau_s^*$ for an input sample $\x$, then we can use \autoref{lemma:advers_D_stable} to obtain
    \[ \max_{\v\in\Delta} \|\enc[\D](\x+\v) - \enc[\D'](\x+\v) \|_2 \leq \frac{3}{2}  \frac{\sqrt{s}(1+\nu)^2}{\lambda(1-\eta_s)} \epsilon. \]
    Denote the probability of this event (that $\tau_s(\x)>\tau^*_s$) by $1-\rho$.  Note that $\epsilon\leq2\lambda/(1+\nu)^2$ is required in order to apply \autoref{lemma:advers_D_stable}, but this condition is mild and we will later show that this holds under the condition of minimal number of samples.  
    
    Likewise, $\tilde{\ell}$ is Lipschitz continuous w.r.t $\w$,
    \begin{align}
        \big| \tilde{\ell}(y,f_{\D,\w}(\x)) - \tilde{\ell}(y,f_{\D,\w'}(\x)) \big| \leq & L_\ell \max_{\v\in\Delta} | \langle \w^T, \enc[\D](\x+\v) \rangle  - \langle \w'^T, \enc[\D](\x+\v) \rangle  | \\
        \leq & \frac{ L_\ell(1+\nu)^2}{\lambda} \|\w-\w'\|_2,
    \end{align}    
    since $\max_{\v\in\Delta} \| \enc[\D](\x+\v)  \|_2 = (1+\nu)^2/\lambda$ ( \autoref{remark:Remarks1}).
    Furthermore, $\|\w-\w'\|_2\leq \epsilon$, as follows from our definition of $\epsilon$-cover.

    On the other hand, if $\D$ does not achieve this encoder gap ($\tau_s(\x)<\tau^*)$, which happens with probability $\rho$, then we can simply upper bound the worst possible loss, i.e. $ \big| \tilde{\ell}(y,f_{\D}(\x)) - \tilde{\ell}(y,f_{\D'}(\x)) \big| \leq b$. 
    
    Let us now analyze this probability, $\rho$. For simplicity, assume that $\tau_s(\x_i)$ are i.i.d. random variables. e.g. by computing $\tau_s(\x_i)$ on a held-out set with $m_2$ samples, independent from the $m_1$ samples that are used to train the dictionary. In particular, we split training and development samples $m_1$ and $m_2$ equally $m_1=m_2=m/2$. Let $F_{m_2}(\tau) := \frac{1}{m_2} \sum_{i=1}^{m_2} \textrm{1}_{\{\tau_s(\x_i) < \tau \}}$ denote the fraction of training points that achieve the encoder margin smaller than $\tau$. Let $F(\tau):=\text{Pr}(\tau_s(\x) < \tau)$. Then, uniform convergence~\citep{mohri2018foundations} yields that for any $\delta/2 > 0$, with probability at least $1-\delta/2$, we have that  $\sup_{\tau\in \mathbb{R}} |F_{m_2}(\tau) - F(\tau) | \leq  c \sqrt{\frac{\log(m_2) + \log(2/\delta)}{m_2}}$, for some constant $c$. Since this holds uniformly for any $\tau$, it holds in particular for $\tau = \tau^*_s$. This implies then that $F(\tau) = \text{Pr}(\tau_s(\x)\leq \tau^*) \leq c\sqrt{\frac{\log(m_2) + \log(1/\delta_2)}{m_2}} = \rho$.

    Note that the third term in Eq. \eqref{eq:3_parts} involves the expectation over the population, and so we can upper bound that term  by
    \[\frac{L_\ell(1+\nu)^2}{\lambda} \left( 1 +\frac{3}{2}  \frac{ B \sqrt{s}}{(1-\eta_s)} \right)\epsilon +  \frac{b}{\sqrt{m_2}} c\left(\sqrt{\log(m_2) + \log(2/\delta)}\right),  \]
    with probability at least $1-\delta/2$. 
    The second term, on the other hand, is the average loss over the training samples. For this, it suffices to note that the uniform bound $\sup_{\tau\in \mathbb{R}} |F_{m_2}(\tau) - F(\tau) |$ holds for \emph{any} choice of $\tau$. In particular, it holds for $\tau_s^*$ defined over both training and development samples. As a result, the dictionary satisfies the encoder gap on those samples, and so the second term can be simply upper bounded by 
    \[\frac{L_\ell(1+\nu)^2}{\lambda} \left( 1 +\frac{3}{2}  \frac{ B \sqrt{s}}{(1-\eta_s)} \right)\epsilon.\]

   We finally get expressions for the covering number as a function of $\epsilon$. For oblique manifolds (matrices of size $n\times p$ with unit norm columns), $N^{cov}(\mathcal{D},\epsilon) \leq (3/\epsilon)^{dp}$  \citep{seibert2019sample}, while for $B$-bounded vectors
   $N^{cov}(\mathcal{W},\epsilon) \leq (3B/\epsilon)^{p}$. Thus, the covering number of the direct product of the two constraint sets can be bounded by $N^{cov}(\mathcal{D},\mathcal{W},\epsilon) \leq (3/\epsilon)^{(d+1)p}~B^{p}$. 
   
   Gathering everything together, we can bound the generalization error by
    \begin{multline}
         \left| \tilde{R}_S(f)\! -\! \tilde{R}(f) \right| \leq b \sqrt{\frac{ (d+1)p \log(3/\epsilon) + p\log(B) + \log(4/\delta)}{m}} + bc\sqrt{2\frac{\log(m/2)+\log(2/\delta)}{m}} \\ +  \frac{2 L_\ell(1+\nu)^2}{\lambda} \left( 1 +\frac{3}{2}  \frac{ B \sqrt{s}}{(1-\eta_s)} \right)\epsilon.
    \end{multline} 
    
    All that remains is to set $\epsilon$ appropriately. Set $\epsilon = \lambda(1-\eta_s)/m$, and so
    \begin{multline}
          \left| \tilde{R}_S(f)\! -\! \tilde{R}(f) \right| \leq b \sqrt{\frac{ (d+1)p  \log(3m/(2\lambda(1-\eta_s))) + p\log(B) + \log{4/\delta}}{m}}  \\ + bc\sqrt{2\frac{\log(m/2)+\log(2/\delta)}{m}} +  12 \frac{ B \sqrt{s} L_\ell(1+\nu)^2}{m}.
    \end{multline}

    Lastly, the results above holds for $\epsilon$ small enough.
    Due to Lemma \autoref{lemma:preservation_of_sparsity}, one needs
     \[ \tau^*_s >  2 \nu +  \sqrt{\epsilon}\left( \sqrt{\frac{25}{\lambda}}(1+\nu) + 2 \left( \frac{(1+\nu)}{2\lambda} + 1\right) \right),\]
     implying that
     \[\sqrt{\frac{\lambda(1-\eta_s)}{m}} < \frac{\tau_s - 2\nu}{2 \left( 1+\frac{1+\nu}{2\lambda} \right) + (1+\nu) \sqrt{\frac{25}{\lambda}}}.\]
     Recalling that, naturally, $0<\sqrt{1/m}$, it is enough to require that $\tau_s>2\nu$ and that
    \begin{equation}\label{eq:condition_on_samples}
        m >  \frac{\lambda(1-\eta_s)}{(\tau_s - 2\nu)^2}  \left(  2 \left( 1+\frac{1+\nu}{2\lambda} \right) + (1+\nu) \sqrt{\frac{25}{\lambda}}  \right)^2.
    \end{equation}
    
    Lastly, we need to show that this condition guarantees the assumption $\epsilon\leq 2\lambda/(1+\nu)^2$ is satisfied, in order to apply  \autoref{lemma:advers_D_stable} above. Note that $\epsilon = \lambda(1-\eta_s)/m \leq \lambda/m$. Thus, we need $\lambda/m \leq 2\lambda/(1+\nu)^2$, which is satisfied as long as $m\geq  2 \geq(1+\nu)^2/2$, which is satisfied in all relevant scenarios.

\end{proof}

\subsection{Parameter adversarial stability} 
\label{supp:ParameterAdversarialStability}

In this section we prove the key result in  \autoref{lemma:advers_D_stable}, guaranteeing that the perturbation in the encoded features under model deviations and adversarial contamination is bounded. The main difficulty here is that the Lasso encoder solves a problem that is not strongly convex -- due to the overcompleteness of the dictionary -- and thus showing that the encoded features satisfy a Lipschitz property w.r.t the model parameters, particularly in the adversarial setting, is not trivial. 
As a result, this section will be dedicated to showing that if the model perturbation and adversarial contamination is small enough and there exist a positive encoder margin, then some sparsity is retained in the features after the respective perturbations. With this result at hand, the proof of \autoref{lemma:advers_D_stable} follows directly from the proof of Theorem 4 in \citep{mehta2013sparsity}, albeit with different constants which account for the perturbation $\v$. 
This \emph{preservation of sparsity} result is formalized later in  \autoref{lemma:preservation_of_sparsity}, and the following immediate lemmata will build some intermediate results needed for it. 

We first make a few remarks about the encoded features. We assume that $\x \in \mathcal{X} = \{\x:\|\x\|_2\leq1\}$, and recall that $\varphi_\D(\x) = \arg\min_\z \frac{1}{2} \|\x - \D\z\|^2_2 +\lambda \|\z\|_1$. We are interested in the result of the encoder when contaminated with an energy-bounded perturbation, namely
\begin{equation}\label{eq:adversarial_encoder_problem}
    \varphi_\D(\x_0+\v) = \arg\min_\z \frac{1}{2} \| (\x_0+\v) - \D\z\|^2_2 +\lambda \|\z\|_1,
\end{equation}
where $\v \in \Delta_\nu = \{\v : \|\v\|_2\leq \nu < 1 \}$. We will often denote $\x=\x_0+\v$ for simplicity. Also, note that there exist natural bounds for the penalty parameter, $0\leq\lambda\leq(1+\nu)$. The upper bound follows from the observation that as long as $\lambda>\|\D^T\x\|_{\infty}$, the solution of Eq.\eqref{eq:adversarial_encoder_problem} is the zero vector. Since the columns of $\D$ are normalized, $\|\D^T(\x_0+\v)\|_{\infty}\leq\|\x_0+\v\|_2\leq 1+\nu$.

Recall that from optimality conditions of Lasso, the solution $\enc[\D]{(\x)}$ satisfies
\begin{align}
    |\D_i^T(\x-\D\enc[\D]{(\x)})| =& ~\lambda \quad \text{if} \quad [\enc[\D]{(\x)}]_i \neq 0 \\
    |\D_i^T(\x-\D\enc[\D]{(\x)})| \leq & ~\lambda \quad \text{if} \quad [\enc[\D]{(\x)}]_i = 0.
\end{align}

Lastly, recall that the encoder gap assumption ($\tau_s\geq\tau*>0$) implies that there exist a set of inactive $(p-s)$ atoms $\mathcal{I}$ so that
\[ |\D_i^T(\x-\D\enc[\D]{(\x)})| < \lambda - \tau_s \]
for all $i \in \mathcal{I}$.

Let us now formalize a few properties on the solution of the Lasso solution that will be used throughout.

\begin{remark}\label{remark:Remarks1} For the setting above, we have that 
\begin{enumerate}[a)] 
    \item $\|(\x_0+\v) - \D\enc[\D]{(\x_0+\v)}\|_2 \leq (1+\nu)$,
    
    \item $\|\enc[\D](\x_0+\v)\|_2 \leq (1+\nu)^2/(2\lambda)$,
    
    \item $\|\D \enc[\D](\x_0+\v)\|_2 \leq (1+\nu)$,
\end{enumerate}
\end{remark}
\begin{proof}
Remarks $a)$ and $b)$ can be shown by noting that, by definition of the encoder,
\begin{equation}
    \frac{1}{2} \| (\x_0+\v) - \D\enc[\D]{(\x_0+\v)}\|^2_2 +\lambda \|\enc[\D]{(\x_0+\v)}\|_1 \leq \frac{1}{2} \| (\x_0+\v) \|^2_2 \leq \frac{1}{2}(1+\nu)^2.
\end{equation}
The above follows since the LHS is the minimum function value, attained precisely $\enc[\D]{(\x_0+\v)}$, whereas $\frac{1}{2} \| (\x_0+\v) \|^2_2$ is the function value for the alternative choice of $\z=0$. The right-most inequality follows from the triangle inequality on $\|\x_0+\v\|_2$.

For remark $c)$, denote $\x=\x_0+\v$, and note that the minimizer of the above optimization problem satisfies (as follows from optimality of the minimizer \cite[Lemma 13 of Supplementary]{mehta2013sparsity}) 
\begin{equation} \label{eq:pnt1}
    \frac{1}{2} \| \x - \D\enc[\D]{(\x)}\|^2_2 +\lambda \|\enc[\D]{(\x)}\|_1 = \frac{1}{2} \|\x\|^2_2 - \frac{1}{2}\|\D\enc[\D]{(\x)} \|^2_2.
\end{equation}
We expand the LHS and obtain a lower bound by Cauchy-Schwarz (and dropping the $\ell_1$ term)
\begin{align}
    \frac{1}{2} \| \x \| + \frac{1}{2}\|\D\enc[\D]{(\x)}\|^2_2 - \x^T\D\enc[\D]{(\x)} +\lambda \|\enc[\D]{(\x)}\|_1 \geq & ~  \frac{1}{2} \| \x \| + \frac{1}{2}\|\D\enc[\D]{(\x)}\|^2_2 - \|\x\|_2 \|\D\enc[\D]{(\x)}\|_2 \\
    \geq & ~ \frac{1}{2} \| \x \| + \frac{1}{2}\|\D\enc[\D]{(\x)}\|^2_2 - (1+\nu) \|\D\enc[\D]{(\x)}\|_2
\end{align}
Thus, together with \eqref{eq:pnt1}, we have that 
$\|\D \enc[\D](\x+\v)\|_2 \leq (1+\nu)$.
\end{proof}{}

\begin{lemma} \label{lemma:norm_stability_1}
If $\|\D-\D'\|\leq\epsilon\leq 2\lambda/(1+\nu)^2$, then
\begin{equation}
\max_{\v\in\Delta_\nu} \left| \| \D\enc[\D]{(\x_0+\v)} \|^2_2 - \|\D'\enc[\D']{(\x_0+\v)}\|^2_2 \right| \leq  \frac{5\epsilon}{2\lambda}(1+\nu)^2.
\end{equation}
\end{lemma}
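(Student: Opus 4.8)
The plan is to fix the adversarial perturbation $\v\in\Delta_\nu$, write $\x=\x_0+\v$, and abbreviate the two encodings as $\alfa=\enc[\D]{(\x)}$ and $\alfa'=\enc[\D']{(\x)}$; the outer maximum over $\v$ is recovered at the end, since every estimate below is uniform in $\v$. The central observation is that the squared norm $\|\D\alfa\|_2^2$ is governed by the \emph{optimal objective value} of the corresponding Lasso problem. Writing $F_\D(\z):=\tfrac12\|\x-\D\z\|_2^2+\lambda\|\z\|_1$ for the Lasso objective, the optimality identity \eqref{eq:pnt1} rearranges to $\|\D\alfa\|_2^2=\|\x\|_2^2-2F_\D(\alfa)$, and likewise for $\D'$. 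Subtracting, the $\|\x\|_2^2$ terms cancel and
\[
\|\D\alfa\|_2^2-\|\D'\alfa'\|_2^2 = 2\bigl(F_{\D'}(\alfa')-F_\D(\alfa)\bigr).
\]
Thus it suffices to bound the gap between the two minimal Lasso values.

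To control this gap I would use the standard value-perturbation sandwich. By optimality of $\alfa'$ for $F_{\D'}$ and of $\alfa$ for $F_\D$,
\[
F_{\D'}(\alfa')-F_\D(\alfa)\le F_{\D'}(\alfa)-F_\D(\alfa),\qquad F_{\D'}(\alfa')-F_\D(\alfa)\ge F_{\D'}(\alfa')-F_\D(\alfa').
\]
In each inequality the two objectives share the \emph{same} argument, so the $\lambda\|\cdot\|_1$ terms cancel and we are reduced to a pure residual difference $\tfrac12\bigl(\|\x-\D'\z\|_2^2-\|\x-\D\z\|_2^2\bigr)$ evaluated at $\z=\alfa$ (upper bound) and $\z=\alfa'$ (lower bound). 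Setting $u=\x-\D'\z$, $w=\x-\D\z$ and factoring $\|u\|_2^2-\|w\|_2^2=(u-w)^\T(u+w)$, with $u-w=(\D-\D')\z$ and $u+w=2\x-(\D+\D')\z$, Cauchy--Schwarz gives
\[
\bigl|\|u\|_2^2-\|w\|_2^2\bigr|\le \|(\D-\D')\z\|_2\,\bigl(2\|\x\|_2+\|\D\z\|_2+\|\D'\z\|_2\bigr)\le \epsilon\|\z\|_2\,\bigl(2\|\x\|_2+\|\D\z\|_2+\|\D'\z\|_2\bigr).
\]
Since $\z$ is the \emph{own} Lasso solution of whichever dictionary it is paired with, \autoref{remark:Remarks1} directly bounds $\|\x\|_2\le 1+\nu$, the own product $\|\D\z\|_2\le 1+\nu$ (resp.\ $\|\D'\z\|_2\le 1+\nu$), and $\|\z\|_2\le (1+\nu)^2/(2\lambda)$.

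The main obstacle is the \emph{cross} term $\|\D'\z\|_2$ (resp.\ $\|\D\z\|_2$): the norm of one dictionary applied to the \emph{other} dictionary's solution, for which no Remark applies. I would control it via the triangle inequality $\|\D'\z\|_2\le\|\D\z\|_2+\|(\D'-\D)\z\|_2\le (1+\nu)+\epsilon\|\z\|_2$, and it is exactly here that the hypothesis $\epsilon\le 2\lambda/(1+\nu)^2$ is used: it yields $\epsilon\|\z\|_2\le \epsilon(1+\nu)^2/(2\lambda)\le 1$, so the cross product is at most $2(1+\nu)$. The three pieces then sum to $2\|\x\|_2+\|\D\z\|_2+\|\D'\z\|_2\le 5(1+\nu)$, which is the origin of the constant $5$. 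Feeding this through the residual-difference bound, the sandwich, and the optimal-value identity assembles to the stated estimate $\max_{\v\in\Delta_\nu}\bigl|\,\|\D\enc[\D]{(\x_0+\v)}\|_2^2-\|\D'\enc[\D']{(\x_0+\v)}\|_2^2\,\bigr|\le \tfrac{5\epsilon}{2\lambda}(1+\nu)^2$. The one point to check carefully is that the upper and lower sandwich branches produce the same constant, so that the two-sided bound closes symmetrically.
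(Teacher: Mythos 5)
Your route is essentially the paper's: you use the optimality identity \eqref{eq:pnt1} to convert the difference of squared norms into twice the difference of optimal Lasso values, and you compare those values by plugging each minimizer into the other objective so that the $\lambda\|\cdot\|_1$ terms cancel (your two-sided sandwich is just the paper's one-sided bound plus symmetry, which is fine). The gap is in the final arithmetic: your own estimates do not assemble to the stated constant. Tracing them through, $\bigl|\|u\|_2^2-\|w\|_2^2\bigr|\le \epsilon\|\z\|_2\cdot 5(1+\nu)\le \tfrac{5\epsilon}{2\lambda}(1+\nu)^3$, the value gap is half of this, and the norm difference is twice the value gap, so you end with $\tfrac{5\epsilon}{2\lambda}(1+\nu)^3$ --- an extra factor of $(1+\nu)$ beyond the lemma. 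The matching ``$5$'' coming from $2+1+2$ is a coincidence and not the lemma's $5$; the powers of $(1+\nu)$ do not close, and you never invoke the standing assumption $\nu<1$ (stated in the appendix, $\|\v\|_2\le\nu<1$), which even if used to absorb the surplus factor via $1+\nu\le 2$ only yields $\tfrac{5\epsilon}{\lambda}(1+\nu)^2$, twice the claimed bound. Since this constant propagates into Lemma~\ref{lemma:bound_norms}, the sparsity-preservation condition, and ultimately $K_\lambda$ in \autoref{thm:generalization}, it is not cosmetic.

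The loss occurs exactly where you flagged uncertainty: bounding $(u-w)^\T(u+w)$ by $\|u-w\|_2\,\|u+w\|_2$ treats the genuinely $O(\epsilon^2)$ part of the difference as if it were $O(\epsilon)$. The paper instead expands exactly around the own residual, using $u=w+(\D-\D')\z$ to write $\|u\|_2^2-\|w\|_2^2 = 2\,w^\T(\D-\D')\z + \|(\D-\D')\z\|_2^2$, and then bounds the two terms differently: for the value gap (i.e.\ after multiplying by $\tfrac12$), the linear term gives $\tfrac{\epsilon(1+\nu)^3}{2\lambda}\le \tfrac{4\epsilon(1+\nu)^2}{4\lambda}$ using $\nu<1$, while the quadratic term is kept quadratic, $\tfrac12\epsilon^2\|\z\|_2^2 \le \tfrac{\epsilon(1+\nu)^2}{4\lambda}\cdot\tfrac{\epsilon(1+\nu)^2}{2\lambda}\le \tfrac{\epsilon(1+\nu)^2}{4\lambda}$, where the hypothesis $\epsilon\le 2\lambda/(1+\nu)^2$ is spent on one factor of $\epsilon$ only. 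Summing gives $\tfrac{5}{4}\tfrac{\epsilon(1+\nu)^2}{\lambda}$ at the value level and the stated $\tfrac{5\epsilon}{2\lambda}(1+\nu)^2$ after doubling --- this is where the lemma's $5=4+1$ actually comes from. Your proof is repairable by replacing the symmetric Cauchy--Schwarz step with this expansion and explicitly invoking $\nu<1$; as written, it proves a strictly weaker inequality than the statement whenever $\nu>0$.
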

The proof mimics that in \cite[Lemma 10-11]{mehta2013sparsity}, though accommodating for the adversarial perturbation. We include it here for completeness. Note that the above assumption on $\epsilon\leq 2\lambda/(1+\nu)^2$ is mild, and it will hold under the setting of later lemmata. 

\begin{proof}
Denote $\x = \x_0+\v$. Let us further denote the optimal value attained by the encoders with one and other model as
\[v^*_\D = \min_z \frac{1}{2} \|\x - \D\z\|^2_2 +\lambda \|\z\|_1,\]
\[v^*_{\D'} = \min_z \frac{1}{2} \|\x - \D'\z\|^2_2 +\lambda \|\z\|_1.\]
Then, since this cost is only increased if using a different representation, we have that:
\begin{align}
    v^*_{\D'} \leq & \frac{1}{2} \| \x - \D' \enc[\D]{(\x)} \|^2_2 + \lambda\|\enc[\D]{(\x)}\|_1 \\
    = & \frac{1}{2} \| \x - \D' \enc[\D]{(\x)} + (\D-\D)\enc[\D]{(\x)} \|^2_2 + \lambda\|\enc[\D]{(\x)}\|_1 \\
    \leq &  \frac{1}{2} \| \x - \D \enc[\D]{(\x)} \|^2_2 + \left| \langle \x - \D \enc[\D]{(\x)},(\D-\D')\enc[\D]{(\x)} \rangle \right| + \frac{1}{2} \|(\D-\D') \enc[\D]{(\x)}\|^2_2 + \dots \\
    & \qquad \dots + \lambda\|\enc[\D]{(\x)}\|_1  \\
    \leq &  v^*_{\D}  +  \| \x - \D \enc[\D]{(\x)} \|_2 \|\D-\D'\|_2 \|\enc[\D]{(\x)}\|_2 + \frac{1}{2} \left(\|\D-\D'\|_2 \|\enc[\D]{(\x)}\|_2\right)^2 \cmnt{by C.Swz.} \\
    \leq &  v^*_{\D}  + (1+\nu) \epsilon \frac{(1+\nu)^2}{2\lambda} + \frac{1}{2} \left( \frac{\epsilon(1+\nu)^2}{2\lambda} \right)^2 \cmnt{by \autoref{remark:Remarks1}}
\end{align}
We further simplify the expression above by noting that $\nu<1$ and that $\frac{\epsilon(1+\nu)^2}{2\lambda} \leq 1$ by assumption, obtaining
\begin{equation}
    v^*_{\D'} \leq v^*_{\D} + \frac{5\epsilon}{4\lambda}(1+\nu)^2.
\end{equation}
Thus, from this (and a symmetric argument) follows that 
\begin{equation}
   \left| v^*_{\D'} - v^*_\D \right| \leq \frac{5\epsilon}{4\lambda}(1+\nu)^2.
\end{equation}
Lastly, recall from Eq. \eqref{eq:pnt1} that $v^*_{\D} = \frac{1}{2}\|\x\|^2_2 - \|\D\enc[\D]{(\x)}\|^2_2$. Thus, using this expression for  $v^*_{\D}$ and  $v^*_{\D'}$ above, we get
\begin{equation}
    \left| \| \D\enc[\D]{(\x)} \|^2_2 - \|\D'\enc[\D']{(\x)}\|^2_2 \right| \leq 2 \left| v^*_{\D'} - v^*_\D \right| \leq  \frac{5\epsilon}{2\lambda}(1+\nu)^2.
\end{equation}

\end{proof}


We now show that if the dictionaries are close, then the reconstructions from one and other encoded representation are not too far either. 

\begin{lemma}\label{lemma:bound_norms}
If $\|\D-\tD\|\leq\epsilon\leq 2 \lambda/(1+\nu)^2$, then
\begin{equation}
    \max_{\v\in\Delta_\nu} \|\D \enc[\D]{(\x_0+\v)} - \D \enc[\tD]{(\x_0+\v)}\|^2_2  \leq \frac{25\epsilon}{\lambda}(1+\nu)^2.
\end{equation}{}
\end{lemma}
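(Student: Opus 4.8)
The plan is to exploit the fact that although the Lasso objective $g_\D(\z) := \tfrac12\|\x - \D\z\|_2^2 + \lambda\|\z\|_1$ fails to be strongly convex in $\z$ (because $\D$ is overcomplete), it \emph{is} strongly convex in the reconstruction $\D\z$. Writing $\x = \x_0+\v$, $\alfa = \enc[\D]{(\x)}$, and $\talfa = \enc[\tD]{(\x)}$, the central first step is to establish, for every $\z$, the inequality
\[
\tfrac12\|\D\z - \D\alfa\|_2^2 \;\leq\; g_\D(\z) - g_\D(\alfa).
\]
This follows by expanding $g_\D(\z) - g_\D(\alfa)$, rewriting $\langle \x - \D\alfa,\, \D(\z-\alfa)\rangle = \lambda\langle \g, \z-\alfa\rangle$ through the optimality condition $\D^T(\x - \D\alfa) = \lambda\g$ with $\g \in \partial\|\alfa\|_1$, and then cancelling this term against the subgradient lower bound $\lambda(\|\z\|_1 - \|\alfa\|_1) \geq \lambda\langle\g,\z-\alfa\rangle$. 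This is exactly the device that converts control of \emph{objective} gaps into control of the \emph{reconstruction} difference we want.

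Specializing to $\z = \talfa$ immediately gives $\|\D\alfa - \D\talfa\|_2^2 \leq 2\big(g_\D(\talfa) - g_\D(\alfa)\big)$, so it remains to bound the objective gap. I would split it as
\[
g_\D(\talfa) - g_\D(\alfa) = \underbrace{\big(g_\D(\talfa) - g_{\tD}(\talfa)\big)}_{\text{(I)}} + \underbrace{\big(g_{\tD}(\talfa) - g_\D(\alfa)\big)}_{\text{(II)}},
\]
where, crucially, $g_{\tD}(\talfa) = v^*_{\tD}$ and $g_\D(\alfa) = v^*_\D$ are the two optimal values, so (II) $= v^*_{\tD} - v^*_\D$ is exactly the optimal-value stability already established within the proof of \autoref{lemma:norm_stability_1}, namely $|v^*_{\tD} - v^*_\D| \leq \frac{5\epsilon}{4\lambda}(1+\nu)^2$.

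For term (I) the $\ell_1$ penalties cancel (the code $\talfa$ is common), leaving $\tfrac12\|\x - \D\talfa\|_2^2 - \tfrac12\|\x - \tD\talfa\|_2^2$. Writing $\x - \D\talfa = (\x - \tD\talfa) + (\tD-\D)\talfa$ and expanding, Cauchy--Schwarz bounds this by $\|\x - \tD\talfa\|_2\,\|(\tD-\D)\talfa\|_2 + \tfrac12\|(\tD-\D)\talfa\|_2^2$. The two required estimates come directly from \autoref{remark:Remarks1} applied to $\tD$: the residual obeys $\|\x - \tD\talfa\|_2 \leq (1+\nu)$, while $\|(\tD-\D)\talfa\|_2 \leq \epsilon\|\talfa\|_2 \leq \epsilon(1+\nu)^2/(2\lambda)$. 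Combining (I) and (II), doubling, and simplifying using the hypothesis $\epsilon \leq 2\lambda/(1+\nu)^2$ (which forces $\epsilon(1+\nu)^2/(2\lambda)\leq 1$, taming the quadratic term) together with $\nu < 1$ to downgrade a stray $(1+\nu)^3$ factor to $(1+\nu)^2$, yields a bound of the form $\frac{C\epsilon}{\lambda}(1+\nu)^2$ that is comfortably dominated by $\frac{25\epsilon}{\lambda}(1+\nu)^2$, as claimed.

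I expect the main obstacle to be the first step: recognizing and proving the reconstruction-level strong convexity, since this is the only place where the non-strong-convexity of the overcomplete Lasso must be sidestepped. Once that inequality is in hand, the remainder is a routine assembly of \autoref{lemma:norm_stability_1} and \autoref{remark:Remarks1} with elementary algebra, where the only care needed is the bookkeeping of the $(1+\nu)$ powers.
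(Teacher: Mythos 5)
Your proposal is correct, and it takes a genuinely different route from the paper's proof. The paper never isolates a quadratic-growth inequality; instead it (i) first bounds $\left|\|\D\alfa\|_2^2-\|\D\talfa\|_2^2\right|\le\frac{11\epsilon}{2\lambda}(1+\nu)^2$ through a chain of $\pm\tD\talfa$ manipulations, (ii) plugs the midpoint $\frac{\alfa+\talfa}{2}$ into the objective defining $v^*_\D$ as a feasible competitor, (iii) eliminates the $\ell_1$ terms via the optimality identity $\lambda\|\alfa\|_1=\langle\x,\D\alfa\rangle-\|\D\alfa\|_2^2$ (and an approximate analogue for $\talfa$), arriving at $\|\D\alfa\|_2^2\le\langle\D\alfa,\D\talfa\rangle+\frac{39}{4}\frac{\epsilon}{\lambda}(1+\nu)^2$, and (iv) expands $\|\D\alfa-\D\talfa\|_2^2$ to conclude. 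The midpoint competitor is exactly the classical three-point device for extracting strong convexity along the range of $\D$; your subgradient argument
\[
g_\D(\z)-g_\D(\alfa)=\tfrac12\|\D(\z-\alfa)\|_2^2-\lambda\langle\g,\z-\alfa\rangle+\lambda\left(\|\z\|_1-\|\alfa\|_1\right)\ \ge\ \tfrac12\|\D(\z-\alfa)\|_2^2,
\]
with $\g\in\partial\|\alfa\|_1$ satisfying $\D^T(\x-\D\alfa)=\lambda\g$, extracts it in one step and is verifiably correct. Your bookkeeping also checks out: term (I) is at most $(1+\nu)\,\epsilon\frac{(1+\nu)^2}{2\lambda}+\frac12\bigl(\epsilon\frac{(1+\nu)^2}{2\lambda}\bigr)^2\le\frac{5\epsilon}{4\lambda}(1+\nu)^2$ under $\epsilon(1+\nu)^2\le 2\lambda$ and $\nu<1$, and combined with (II) $\le\frac{5\epsilon}{4\lambda}(1+\nu)^2$, doubling yields $\|\D\alfa-\D\talfa\|_2^2\le\frac{5\epsilon}{\lambda}(1+\nu)^2$, a constant five times sharper than the paper's $25$; all estimates are uniform over $\v\in\Delta_\nu$, so the maximum is handled. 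Two small caveats: the value-stability bound you invoke is an intermediate display inside the proof of \autoref{lemma:norm_stability_1}, not its statement, so you should cite that display or note that your term (I) already reproves one side of it (either way there is no circularity); and \autoref{remark:Remarks1} is stated for $\D$, so you should remark that its proof uses only feasibility of $\z=\mathbf{0}$ and hence applies verbatim to $\tD$. What your route buys is a shorter, modular proof with a better constant; what the paper's route buys is the intermediate bound on $\left|\|\D\alfa\|_2^2-\|\D\talfa\|_2^2\right|$, which it reuses in its own final step.
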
{}

\begin{proof}
For simplicity, denote $\x=\x_0+\v$, as well as $\alfa=\enc[\D]{(\x)}$ and $\talfa=\enc[\tD]{(\x)}$, where $\|\D-\tD\|_2\leq \epsilon$. We first upper bound $\left| \|\D\alfa \|^2_2 - \|\D \talfa\|^2_ 2 \right|$ by a sequence of algebraic manipulations:
\begin{align}
\left| \|\D\alfa \|^2_2 - \|\D \talfa\|^2_ 2 \right| \leq & \left|  \|\D\alfa \|^2_2  - \|\tD\talfa\|^2_ 2 \right| + \left|  \|\D\talfa\|^2_ 2  - \|\tD\talfa\|^2_ 2 \right| \cmnt{ $\pm \|\tD\talfa\|^2_2$, triang. inq.} \\
\cmnt{\autoref{lemma:norm_stability_1}, $\pm \tD$} \leq & \frac{5\epsilon}{2\lambda}(1+\nu)^2 + \left| \langle \D\talfa, (\D-\tD+\tD)\talfa \rangle -  \langle \tD\talfa, \tD\talfa \rangle \right|  \\
= & \frac{5\epsilon}{2\lambda}(1+\nu)^2 + \left| \langle \D\talfa, (\D-\tD)\talfa \rangle + \langle \D\talfa - \tD\talfa, \tD\talfa \rangle \right| \\
\cmnt{by $\pm \D$} ~~ = & \frac{5\epsilon}{2\lambda}(1+\nu)^2 + \left| \langle \D\talfa, (\D-\tD)\talfa \rangle + \langle \D\talfa - \tD\talfa, (\tD-\D+\D)\talfa \rangle \right| \\
= \frac{5\epsilon}{2\lambda}(1+\nu)^2 +& \left| \langle \D\talfa, (\D-\tD)\talfa \rangle + \langle (\D-\tD)\talfa , \D\talfa \rangle - \langle (\D-\tD)\talfa , (\D-\tD)\talfa \rangle \right|  \\
\leq & \frac{5\epsilon}{2\lambda}(1+\nu)^2 + 2 \left| \langle \D\talfa, (\D-\tD)\talfa \rangle  \right| \cmnt{by dropping $-\|(\D-\tD)\talfa\|^2_2$}\\
\leq & \frac{5\epsilon}{2\lambda}(1+\nu)^2 + 2 \| \D\talfa \|_2 \|\D-\tD\|_2 \|\talfa\|_2 \cmnt{by C.S. and operator norm} \\
\leq & \frac{5\epsilon}{2\lambda}(1+\nu)^2 + 2 \frac{\epsilon}{2\lambda}(1+\nu)^2 \| \D\talfa \|_2 \cmnt{by \autoref{remark:Remarks1}}
\end{align}
The term $\|\D\talfa\|_2$ cannot be directly bounded via \autoref{remark:Remarks1} because $\talfa$ is the representation computed via $\tD$ (not $\D$). Instead, by letting $\Delta = \D-\tD$, we can simplify the above bound as $ \| \D\talfa \|_2 \leq \| \tD\talfa\|_2 +\|\Delta\talfa \|_2 \leq (1+\nu) + \epsilon (1+\nu)^2/(2\lambda) \leq 2 + 1 = 3$. Then, resuming above,
\begin{align} \label{eq:stability_norms_recs}
\left| \|\D\alfa \|^2_2 - \|\D \talfa\|^2_ 2 \right| \leq & \frac{5\epsilon}{2\lambda}(1+\nu)^2 + \frac{6\epsilon}{2\lambda}(1+\nu)^2   
= \frac{11\epsilon}{2\lambda}(1+\nu)^2.
\end{align}
Now, by definition of $\alfa$ (as the minimizer of the Lasso problem), we have that
\begin{align}\label{eq:upper_bound_vD}
    v^*_{\D}(\x) = \frac{1}{2} \| \x -\D \alfa \|^2_2 + \lambda \|\alfa\|_1 \leq \frac{1}{2} \left\| \x -\D \left(\frac{\alfa+\talfa}{2}\right) \right\|^2_2 + \lambda \left\|\frac{\alfa+\talfa}{2} \right\|_1.
\end{align}
We now expand the RHS above through the same algebraic manipulations:
\begin{align}
    v^*_{\D}(\x) & \leq \frac{1}{2} \left\| \x -\D \left(\frac{\alfa+\talfa}{2}\right) \right\|^2_2 + \lambda \left\|\frac{\alfa+\talfa}{2} \right\|_1 \\
    & =  \frac{1}{2} \left( \|\x \|^2_2 - \langle \x ,  (\D \alfa+ \D \talfa) \rangle + \frac{1}{4}\| \D \alfa+ \D \talfa \|^2_2\right) + \lambda \left\|\frac{\alfa+\talfa}{2} \right\|_1 \\
    & = \frac{1}{2} \|\x \|^2_2 - \frac{1}{2} \langle \x ,  \D \alfa \rangle - \frac{1}{2} \langle \x ,  \D \talfa \rangle + \frac{1}{8} \left( \| \D \alfa\|_2^2 + \|\D \talfa \|^2_2 + 2\langle \D\alfa , \D\talfa \rangle \right) + \lambda \left\|\frac{\alfa+\talfa}{2} \right\|_1 \\ \label{eq:long_1}
    & \leq \frac{1}{2} \|\x \|^2_2 - \frac{1}{2} \langle \x ,  \D \alfa \rangle - \frac{1}{2} \langle \x ,  \D \talfa \rangle + \frac{1}{4} \| \D \alfa\|_2^2 +  \frac{1}{4} \langle \D\alfa , \D\talfa \rangle + \dots \\ & \qquad \dots + \frac{\lambda}{2} \left\| \alfa \right\|_1 + \frac{\lambda}{2} \left\|\talfa\right\|_1 + \frac{11}{16}\frac{\epsilon}{\lambda}(1+\nu)^2,
\end{align}
where the last step follows by adding and subtracting $\|\D\alfa\|^2_2$ and employing the bound obtained above in \eqref{eq:stability_norms_recs}. Now, from Eq. \eqref{eq:pnt1}, we can write
\begin{equation}\label{eq:expression_for_L1_term}
\lambda \|\alfa\|_1 = \langle \x-\D\alfa , \D\alfa \rangle = \langle \x , \D\alfa\rangle - \|\D\alfa\|^2_2.
\end{equation}
The expression for $\|\talfa\|_1$ is expanded similarly but then upper bounded via  \autoref{lemma:norm_stability_1} by adding and subtracting $\|\D\alfa\|^2_2$:
\begin{align}
    \lambda \|\talfa\|_1 = \langle \x-\tD\talfa , \tD\talfa \rangle &\leq \langle \x , \tD\talfa \rangle - \|\D\alfa\|^2_2 + \frac{5\epsilon}{2\lambda}(1+\nu)^2 \\
    &= \langle \x , \D\talfa \rangle + \langle  \x , (\tD-\D)\talfa \rangle - \|\D\alfa\|^2_2 + \frac{5\epsilon}{2\lambda}(1+\nu)^2 \cmnt{by $\pm \D$} \\
   \cmnt{by C.S.}  &\leq  \langle \x , \D\talfa \rangle + \|\x\|_2 \|\tD-\D\|_2\|\talfa\|_2 - \|\D\alfa\|^2_2 + \frac{5\epsilon}{2\lambda}(1+\nu)^2 \\
    &\leq  \langle \x , \D\talfa \rangle + (1+\nu)\epsilon \frac{(1+\nu)^2}{2\lambda} - \|\D\alfa\|^2_2 + \frac{5\epsilon}{2\lambda}(1+\nu)^2 \\ \label{eq:expression_for_ell1_tilde}
    &\leq  \langle \x , \D\talfa \rangle - \|\D\alfa\|^2_2 + \frac{7\epsilon}{2\lambda}(1+\nu)^2.
\end{align}
Thus, we can now upper bound the expression for $v^*_\D$ in Eq. \eqref{eq:pnt1} by combining Eq. \eqref{eq:long_1}, \eqref{eq:expression_for_L1_term} and \eqref{eq:expression_for_ell1_tilde} as follows. 
From Eq. \eqref{eq:pnt1} and the bound in Eq. \eqref{eq:long_1} we get:
\begin{align}
  v^*_\D =&  \frac{1}{2} \| \x \|^2_2 - \frac{1}{2} \|\D\alfa\|^2_2 \\
  \leq& \frac{1}{2} \|\x \|^2_2 - \frac{1}{2} \langle \x ,  \D \alfa \rangle - \frac{1}{2} \langle \x ,  \D \talfa \rangle + \frac{1}{4} \| \D \alfa\|_2^2 +  \frac{1}{4} \langle \D\alfa , \D\talfa \rangle + \dots... \\ &\qquad \dots + \frac{\lambda}{2} \left\| \alfa \right\|_1 + \frac{\lambda}{2} \left\|\talfa\right\|_1 + \frac{11}{16}\frac{\epsilon}{\lambda}(1+\nu)^2.
\end{align}
Replacing now the expression for $\lambda\|\alfa\|_1$ from \eqref{eq:expression_for_L1_term} and the upper bound for $\lambda\|\talfa\|_1$ from \eqref{eq:expression_for_ell1_tilde}:
\begin{align}
  v^*_\D =  \frac{1}{2} \| \x \|^2_2 - \frac{1}{2} \|\D\alfa\|^2_2 \leq \frac{1}{2} \|\x \|^2_2 - \frac{3}{4} \| \D \alfa\|_2^2 + \frac{1}{4} \langle \D\alfa , \D\talfa \rangle + \frac{39}{16} \frac{\epsilon}{\lambda}(1+\nu)^2,
\end{align}
which leads to
\begin{align}
- \frac{1}{2} \|\D\alfa\|^2_2 \leq - \frac{3}{4} \| \D \alfa\|_2^2 + \frac{1}{4} \langle \D\alfa , \D\talfa \rangle + \frac{39}{16} \frac{\epsilon}{\lambda}(1+\nu)^2,
\end{align}
and so
\begin{equation}\label{eq:bound_on_norm_1}
    \|\D \alfa \|^2_2 \leq \langle \D\alfa , \D\talfa \rangle +  \frac{39}{4}\frac{\epsilon}{\lambda} (1+\nu)^2.
\end{equation}
Finally, with this expression we can now bound the distance 
\begin{align}
    \| \D\alfa - \D\talfa \|^2_2 = &~ \| \D\alfa \|^2_2 + \| \D\talfa \|^2_2 - 2 \langle \D\alfa , \D\talfa \rangle \\
    \leq & ~ \| \D\alfa \|^2_2 + \| \D\talfa \|^2_2 - 2 \| \D\alfa \|^2_2 + \frac{39}{2} \frac{\epsilon}{\lambda} (1+\nu)^2 \cmnt{by the expression above} \\
    = & ~  \| \D\talfa \|^2_2 -  \| \D\alfa \|^2_2 + \frac{39}{2} \frac{\epsilon}{\lambda} (1+\nu)^2  \\
    \leq &~ 25 \frac{  \epsilon}{\lambda} (1+\nu)^2,
\end{align}
where the last inequality follows from Eq. \eqref{eq:stability_norms_recs}. 

\end{proof}


We will now show that if the dictionaries are close enough and if the solution of one of them was at most $s$-sparse and has a positive encoder gap, then the solution with the perturbed model retains the sparsity. This is inspired by the result in \citep{mehta2013sparsity}. However, because of the adversarial perturbation, extra work is required to provide a condition on the \emph{unperturbed} gap, i.e. that which will withstand adversarial energy-bounded perturbation. The following Lemma will be necessary to show the result:

\begin{lemma} \label{lemma:adversarial_norm_stability}
If $\|\v\|_2 \leq \nu$, then
\begin{equation}
    \| \D \enc[\D]{(\x)} - \D \enc[\D]{(\x+\v)} \|^2_2 \leq \nu^2.
\end{equation}
\end{lemma}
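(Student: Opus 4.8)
The plan is to argue directly from the first-order optimality (KKT) conditions of the two Lasso problems and to exploit the monotonicity of the subdifferential of $\|\cdot\|_1$. Write $\alfa = \enc[\D](\x)$ and $\betta = \enc[\D](\x+\v)$. Since each is a minimizer of a convex objective, the optimality conditions furnish subgradients $\g_\alfa \in \partial\|\alfa\|_1$ and $\g_\betta \in \partial\|\betta\|_1$ with $\D^T(\x - \D\alfa) = \lambda \g_\alfa$ and $\D^T((\x+\v) - \D\betta) = \lambda \g_\betta$.

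Next I would subtract these two identities to obtain $\D^T(\D\betta - \D\alfa - \v) = \lambda(\g_\alfa - \g_\betta)$, and then take the inner product of both sides with $\betta - \alfa$. The left-hand side collapses to $\|\D\betta - \D\alfa\|_2^2 - \langle \D\betta - \D\alfa, \v\rangle$, while the right-hand side equals $-\lambda\langle \betta - \alfa,\, \g_\betta - \g_\alfa\rangle$.

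The key step is that, by convexity of $\|\cdot\|_1$, its subdifferential is a monotone operator, so $\langle \betta - \alfa,\, \g_\betta - \g_\alfa\rangle \geq 0$; hence the right-hand side is nonpositive. Rearranging gives $\|\D\betta - \D\alfa\|_2^2 \leq \langle \D\betta - \D\alfa, \v\rangle$, and Cauchy--Schwarz together with $\|\v\|_2 \leq \nu$ yields $\|\D\betta - \D\alfa\|_2^2 \leq \|\D\betta - \D\alfa\|_2\,\nu$. Dividing through by $\|\D\betta - \D\alfa\|_2$ (the claim is trivial when this vanishes) gives $\|\D\alfa - \D\betta\|_2 \leq \nu$, which is exactly the stated bound after squaring.

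I do not expect a serious obstacle: the argument is self-contained and uses only standard convex-analytic facts. The one point deserving care is the handling of the subgradients -- because the dictionary is overcomplete the minimizers need not be unique and $\partial\|\cdot\|_1$ is set-valued, so I would stress that the monotonicity inequality holds for \emph{any} valid choice of $\g_\alfa, \g_\betta$ provided by the optimality conditions, which is all the proof invokes. Conceptually, this reconstruction-level estimate is precisely the statement that $\x \mapsto \D\enc[\D](\x)$ is nonexpansive, and it is the ingredient needed to control the feature perturbation used in \autoref{lemma:advers_D_stable}.
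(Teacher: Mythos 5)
Your proof is correct, and it reaches the paper's pivotal inequality by a different technical device. The paper avoids nonsmoothness by lifting the Lasso to a quadratic program over the cone $\mathcal{K} = \{\bz : \z = \z^+ - \z^-;~\z^+,\z^-\geq 0\}$ in $\mathbb{R}^{3p}$, where the $\ell_1$ term becomes linear, and then applies the variational inequality $(\bz - \betta)^T\nabla Q(\betta)\geq 0$ at each of the two minimizers; adding the two inequalities yields exactly $\|\D\talfa - \D\alfa\|_2^2 \leq (\talfa-\alfa)^T\D^T\v$, after which Cauchy--Schwarz finishes, just as in your write-up. You instead work directly with the unconstrained nonsmooth optimality conditions $\D^T(\x-\D\alfa)=\lambda\g_\alfa$, $\g_\alfa\in\partial\|\alfa\|_1$, subtract, pair with $\betta-\alfa$, and discard the regularizer term by monotonicity of the subdifferential. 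These are two formulations of the same mechanism -- summing the paper's two variational inequalities is precisely the statement that the (sub)gradient of a convex function is a monotone operator -- but your route is more economical: it avoids the $3p$-dimensional reformulation entirely, while the paper's lifting buys a differentiable objective and sidesteps set-valued analysis. Your attention to nonuniqueness is also well placed: the argument indeed holds for any minimizers and any valid subgradients (and in fact shows $\D\z$ is the same for all Lasso minimizers, consistent with the paper's footnote on uniqueness), and your closing observation that the bound expresses nonexpansiveness of $\x\mapsto\D\enc[\D](\x)$ is exactly how the lemma is deployed in \autoref{lemma:stability_of_rep_adversarial} and \autoref{lemma:advers_D_stable}.
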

Let us postpone the proof of this result for later. We are now ready to state and prove the preservation of sparsity result:

\begin{lemma}{(Preservation of sparsity under model deviation and adversarial perturbations)} \label{lemma:preservation_of_sparsity}
Consider $\enc[\D]{(\x_0+\v)}$, for $\|\v\|_2\leq\nu$, and an alternative dictionary $\tD$ so that $\|\D - \tD\|_2\leq \epsilon\leq2\lambda/(1+\nu)^2$. If there exist a set of inactive $(p-s)$ atoms $\mathcal{I}$ so that
\begin{equation}\label{eq:enc_gap_ineq2}
     |\D_i^T(\x_0 -\D\enc[\D]{(\x_0)})| < \lambda - \tau_s
\end{equation}
for all $i \in \mathcal{I}$, and 
\begin{equation}
    \tau_s >  2 \nu +  \sqrt{\epsilon}\left( \sqrt{\frac{25}{\lambda}}(1+\nu) + 2 \left( \frac{(1+\nu)}{\lambda} + 1\right) \right),
\end{equation}
then $[\enc[\tD]{(\x_0+\v)}]_i = 0~\forall i\in\mathcal{I}$, where (reminder)
\begin{equation}
     \enc[\tD]{(\x_0+\v)} = \arg\min_\z \frac{1}{2} \| (\x_0+\v) - \tD\z\|^2_2 +\lambda \|\z\|_1.
\end{equation}

\end{lemma}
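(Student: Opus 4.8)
The plan is to certify inactivity of each atom $i\in\mathcal I$ directly from the optimality conditions of the Lasso defining $\enc[\tD]{(\x_0+\v)}$: since a nonzero coordinate forces $|\tD_i^T(\x_0+\v-\tD\enc[\tD]{(\x_0+\v)})|=\lambda$, it suffices to establish the strict inequality $|\tD_i^T(\x_0+\v-\tD\enc[\tD]{(\x_0+\v)})|<\lambda$ for every $i\in\mathcal I$. The whole argument is then a matter of starting from the known quantity $|\D_i^T(\x_0-\D\enc[\D]{(\x_0)})|<\lambda-\tau_s$ (the \emph{unperturbed} gap hypothesis in Eq.~\eqref{eq:enc_gap_ineq2}) and bounding how much this correlation can grow as we simultaneously switch the atom $\D_i\to\tD_i$, the signal $\x_0\to\x_0+\v$, and the reconstruction $\D\enc[\D]{(\x_0)}\to\tD\enc[\tD]{(\x_0+\v)}$.

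First I would split $\tD_i^T(\cdot)=\D_i^T(\cdot)+(\tD_i-\D_i)^T(\cdot)$, controlling the atom-mismatch term by $\|\tD_i-\D_i\|_2\,\|\x_0+\v-\tD\enc[\tD]{(\x_0+\v)}\|_2\le \epsilon(1+\nu)$, using $\|\tD_i-\D_i\|_2\le\|\tD-\D\|_2\le\epsilon$ together with \autoref{remark:Remarks1}(a) applied to $\tD$. Next, writing the residual difference $(\x_0+\v-\tD\enc[\tD]{(\x_0+\v)})-(\x_0-\D\enc[\D]{(\x_0)})=\v+\bigl(\D\enc[\D]{(\x_0)}-\tD\enc[\tD]{(\x_0+\v)}\bigr)$, I would bound $|\D_i^T(\cdot)|$ by $(\lambda-\tau_s)+\nu+\|\D\enc[\D]{(\x_0)}-\tD\enc[\tD]{(\x_0+\v)}\|_2$, where the $\nu$ comes from $|\D_i^T\v|\le\|\v\|_2$ (normalized columns).

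The core estimate is therefore the reconstruction gap $\|\D\enc[\D]{(\x_0)}-\tD\enc[\tD]{(\x_0+\v)}\|_2$, which I would split with the triangle inequality into three pieces, each controlled by a preceding result: the signal-perturbation piece $\|\D\enc[\D]{(\x_0)}-\D\enc[\D]{(\x_0+\v)}\|_2\le\nu$ from \autoref{lemma:adversarial_norm_stability}; the dictionary-perturbation piece $\|\D\enc[\D]{(\x_0+\v)}-\D\enc[\tD]{(\x_0+\v)}\|_2\le\sqrt{25\epsilon/\lambda}\,(1+\nu)$ from \autoref{lemma:bound_norms}; and the operator-mismatch piece $\|(\D-\tD)\enc[\tD]{(\x_0+\v)}\|_2\le\epsilon(1+\nu)^2/(2\lambda)$ from $\|\D-\tD\|_2\le\epsilon$ and \autoref{remark:Remarks1}(b). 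It is precisely here that the absence of strong convexity is handled: the representations $\enc$ themselves need not be stable, but their reconstructions $\D\enc$ are, which is exactly what those two lemmas deliver and what makes the whole scheme go through.

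Collecting the terms yields $|\tD_i^T(\x_0+\v-\tD\enc[\tD]{(\x_0+\v)})|\le \lambda-\tau_s+2\nu+\sqrt{25\epsilon/\lambda}\,(1+\nu)+\epsilon(1+\nu)^2/(2\lambda)+\epsilon(1+\nu)$, so the desired strict bound reduces to $\tau_s>2\nu+\sqrt{25\epsilon/\lambda}\,(1+\nu)+\epsilon(1+\nu)^2/(2\lambda)+\epsilon(1+\nu)$. The last cleanup, which I expect to be the only delicate step, is to absorb the two genuinely $\epsilon$-order terms into the $\sqrt{\epsilon}$-order expression stated in the lemma: invoking the standing assumption $\epsilon\le 2\lambda/(1+\nu)^2$ (equivalently $\sqrt{\epsilon}(1+\nu)\le\sqrt{2\lambda}$) together with the natural bounds $\lambda\le 1+\nu\le 2$, one checks $\epsilon(1+\nu)^2/(2\lambda)\le 2\sqrt{\epsilon}(1+\nu)/\lambda$ and $\epsilon(1+\nu)\le 2\sqrt{\epsilon}$, so both are dominated by the corresponding summands inside $2\sqrt{\epsilon}\bigl((1+\nu)/\lambda+1\bigr)$. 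This gives the stated sufficient condition $\tau_s>2\nu+\sqrt{\epsilon}\bigl(\sqrt{25/\lambda}\,(1+\nu)+2((1+\nu)/\lambda+1)\bigr)$, and hence $[\enc[\tD]{(\x_0+\v)}]_i=0$ for all $i\in\mathcal I$, completing the proof.
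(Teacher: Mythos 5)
Your proof is correct and takes essentially the same route as the paper's: certify inactivity of each $i\in\mathcal I$ via the Lasso optimality condition $|\tD_i^T(\x_0+\v-\tD\enc[\tD]{(\x_0+\v)})|<\lambda$, peel off the atom-mismatch, signal-perturbation, and dictionary-perturbation contributions with the same three ingredients (Remark~\ref{remark:Remarks1}, Lemma~\ref{lemma:adversarial_norm_stability}, and Lemma~\ref{lemma:bound_norms}), and arrive at the identical intermediate requirement $\tau_s > 2\nu + \sqrt{25\epsilon/\lambda}\,(1+\nu) + \epsilon(1+\nu)\left(\frac{1+\nu}{2\lambda}+1\right)$, differing from the paper only in the trivial ordering of the triangle-inequality splits. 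Your final absorption of the $O(\epsilon)$ terms into the $O(\sqrt{\epsilon})$ expression, via $\sqrt{\epsilon}(1+\nu)\leq\sqrt{2\lambda}$ and $\lambda\leq 1+\nu\leq 2$, is valid and in fact lands exactly on the constant in the statement, whereas the paper's own cleanup (using $\epsilon\leq 1$) produces the slightly weaker-looking condition with $(1+\nu)/(2\lambda)$ in place of $(1+\nu)/\lambda$ — a benign discrepancy since the stated condition is the more stringent one.
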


\begin{proof}
Let $\x = \x_0 + \v$, as well as $\alfa = \enc[\D]{(\x)}$, $\talfa = \enc[\tD]{(\x)}$, and let $\mathcal{I}$ be the set of $(p-s)$ inactive atoms with positive gap $\tau_s$.

In order for the inactive set of atoms $\mathcal{I}$ to remain inactive, we need to show that $\forall i \in \mathcal{I}$, \[\left| \langle  \tD_i, \x - \tD \talfa \rangle \right| < \lambda. \]
Consider the following upper bound to the LHS above:
\begin{align}
    \left| \langle  \tD_i, \x - \tD \talfa \rangle \right| &\leq \left| \langle  \D_i, \x - \tD  \talfa \rangle \right| + \|\tD_i - \D_i\|_2 \|\x-\tD \talfa\|_2 \cmnt{by $\pm \D_i$ and C.S.}\\
    &\leq \left| \langle  \D_i, \x - \tD  \talfa \rangle \right| + \epsilon (1+\nu) \\
    &\leq \left| \langle  \D_i, \x - \D  \talfa \rangle \right| + \left| \langle  \D_i, (\tD - \D) \talfa \rangle \right| + \epsilon (1+\nu) \cmnt{by $\pm \D$ and triang ineq.} \\
    &\leq \left| \langle  \D_i, \x - \D  \talfa \rangle \right| + \|\D_i\|_2 \|\tD - \D\|_2 \| \talfa \|_2 + \epsilon (1+\nu) \\
    &\leq \left| \langle  \D_i, \x - \D  \talfa \rangle \right| + \frac{\epsilon}{2\lambda} (1+\nu)^2 + \epsilon (1+\nu). \cmnt{by  \autoref{remark:Remarks1} and unit-norm columns}
\end{align}

Thus, it is sufficient to show that
\begin{equation}\label{eq:requirement_1}
    \left| \langle  \D_i, \x - \D  \talfa \rangle \right| < \lambda - \epsilon(1+\nu) \left( \frac{(1+\nu)}{2\lambda} + 1\right).
\end{equation}
Let us now replace $\x$, $\alfa$ and $\talfa$ by their definitions and upper bound the left hand side above by using  \autoref{lemma:bound_norms} and  \autoref{lemma:adversarial_norm_stability}
\begin{align}
    \left| \langle  \D_i, \x - \D  \talfa \rangle \right| =& \left| \langle  \D_i, (\x_0+\v) - \D \enc[\tD]{(\x_0+\v)} \rangle \right| \\
    \leq& \big| \langle \D_i,(\x_0+\v)-\D\enc[\D]{(\x_0+\v)} \big| + \left| \langle \D_i, \D\enc[\D]{(\x_0+\v)} - \D\enc[\tD]{(\x_0+\v)} \right| \cmnt{by $\pm \alfa$}\\
    \leq& \overbrace{\left| \langle \D_i,\x_0-\D\enc[\D]{(\x_0+\v)} \right| + \|\D_i\|_2 \|\v\|_2}{} + \|\D_i\|_2 \| \D\enc[\D]{(\x_0+\v)} - \D\enc[\tD]{(\x_0+\v)} \|_2  \\
     \leq& \left| \langle \D_i,\x_0-\D\enc[\D]{(\x_0+\v)} \right| + \nu +  \sqrt{\frac{25\epsilon}{\lambda}}(1+\nu)  \cmnt{by  \autoref{lemma:bound_norms}}\\
     \cmnt{by $\pm \enc[\D]{(\x_0)}$} \quad \leq& \left| \langle \D_i,\x_0-\D\enc[\D]{(\x_0)} \right| + \|\D_i\|_2\| \D\enc[\D]{(\x_0)} - \D\enc[\D]{(\x_0+\v)} \|_2 + \nu +  \sqrt{\frac{25\epsilon}{\lambda}}(1+\nu)   \\
     \leq& \left| \langle \D_i,\x_0-\D\enc[\D]{(\x_0)} \right| + 2 \nu + \sqrt{\frac{25\epsilon}{\lambda}}(1+\nu)  \cmnt{by \autoref{lemma:adversarial_norm_stability}} \\
    < & \lambda - \tau_s + 2 \nu +  \sqrt{\frac{25\epsilon}{\lambda}}(1+\nu)
\end{align}
where the last step follows from the assumption of the encoder gap in Eq. \eqref{eq:enc_gap_ineq2}. Thus, merging with \eqref{eq:requirement_1}, we require
\[ - \tau_s + 2 \nu + \sqrt{\frac{25\epsilon}{\lambda}}(1+\nu) < - \epsilon(1+\nu) \left( \frac{(1+\nu)}{2\lambda} + 1\right), \]
implying that as long as
\begin{equation}
    \tau_s >  2 \nu  +  \sqrt{\frac{25\epsilon}{\lambda}}(1+\nu) + \epsilon(1+\nu) \left( \frac{(1+\nu)}{2\lambda} + 1\right) 
\end{equation}
the inactive set $\mathcal{I}$ remains inactive.
For the sake of simplicity, we will make the above condition more stringent. Using the fact that $\nu<1$ and that $\epsilon \leq 1$. Thus,
\begin{equation}
    \tau_s >  2 \nu +  \sqrt{\epsilon}\left( \sqrt{\frac{25}{\lambda}}(1+\nu) + 2 \left( \frac{(1+\nu)}{2\lambda} + 1\right) \right).
\end{equation}
\end{proof}


The lemma above is central, as it guarantees that a sparsity of up to $s$ non-zeros is retained under model deviations and adversarial perturbations.  \autoref{lemma:norm_stability_1} now follows directly from the proof of Theorem 4 in \citep{mehta2013sparsity}, albeit with the constants provided by  \autoref{remark:Remarks1} which account for the perturbation $\v$.

We owe the proof of \autoref{lemma:adversarial_norm_stability}. This Lemma will also be instrumental in the proof of \autoref{thm:certificate}. We now re-state it, and proceed to prove it.

\begin{lemma}{5.4} \label{lemma:norm_stability_adverarial} (Norm Stability under adversarial perturbations)\\
If $\|\v\|_2 \leq \nu$, then
\begin{equation}
    \| \D \enc[\D]{(\x)} - \D \enc[\D]{(\x+\v)} \|^2_2 \leq \nu^2
\end{equation}{}
\end{lemma}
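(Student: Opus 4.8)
The plan is to exploit the first-order optimality conditions of the two Lasso problems defining $\enc[\D](\x)$ and $\enc[\D](\x+\v)$, together with the monotonicity of the subdifferential of $\|\cdot\|_1$. The conceptual point worth stressing is that, for an overcomplete $\D$, the codes $\enc[\D](\x)$ and $\enc[\D](\x+\v)$ need not be close at all; what is stable is the \emph{reconstruction} $\D\enc[\D](\cdot)$, i.e. the Lasso fit, which turns out to be nonexpansive in the input. The whole argument is therefore about controlling $\|\D(\alfa-\betta)\|_2$ rather than $\|\alfa-\betta\|_2$.

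Concretely, write $\alfa=\enc[\D](\x)$ and $\betta=\enc[\D](\x+\v)$. First I would record the stationarity conditions: there exist subgradients $\g\in\partial\|\alfa\|_1$ and $\g'\in\partial\|\betta\|_1$ with $\D^T(\x-\D\alfa)=\lambda\g$ and $\D^T((\x+\v)-\D\betta)=\lambda\g'$. Subtracting these two identities and using that the inputs differ by exactly $\v$ gives $\lambda(\g-\g') = -\D^T\v - \D^T\D(\alfa-\betta)$.

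Next I would pair this identity with $\alfa-\betta$. On the left, monotonicity of $\partial\|\cdot\|_1$ yields $\langle\g-\g',\alfa-\betta\rangle\ge 0$, so the left-hand side is nonnegative. On the right, $\langle \D^T\D(\alfa-\betta),\alfa-\betta\rangle=\|\D(\alfa-\betta)\|_2^2$ and $\langle\D^T\v,\alfa-\betta\rangle=\langle\v,\D(\alfa-\betta)\rangle$. Rearranging,
\begin{equation}
\|\D(\alfa-\betta)\|_2^2 \le -\langle \v,\D(\alfa-\betta)\rangle \le \|\v\|_2\,\|\D(\alfa-\betta)\|_2 \le \nu\,\|\D(\alfa-\betta)\|_2,
\end{equation}
where the middle step is Cauchy--Schwarz. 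Dividing by $\|\D(\alfa-\betta)\|_2$ (the case $\D\alfa=\D\betta$ being trivial) gives $\|\D(\alfa-\betta)\|_2\le\nu$, which is exactly the claim $\|\D\enc[\D](\x)-\D\enc[\D](\x+\v)\|_2^2\le\nu^2$.

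I do not expect a serious obstacle here: once the monotonicity of the $\ell_1$ subdifferential is invoked, the bound falls out in one line. The only points requiring mild care are (i) making the stationarity conditions precise when the minimizer is not unique, which is already covered by the paper's standing assumption that $\enc[\D]$ returns a well-defined solution, and (ii) resisting the temptation to bound $\|\alfa-\betta\|_2$ directly, since that quantity is not controlled for overcomplete dictionaries and is precisely why the lemma is phrased in terms of the reconstructions.
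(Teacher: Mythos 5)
Your proof is correct, and it arrives at exactly the key inequality the paper derives, namely $\|\D(\alfa-\betta)\|_2^2 \leq \langle \v, \D(\betta-\alfa)\rangle$, finished off by Cauchy--Schwarz; only the route to that inequality differs. You work directly with the nonsmooth optimality conditions, $\D^T(\x-\D\alfa)=\lambda\g$ with $\g\in\partial\|\alfa\|_1$ (and likewise for $\betta$), and invoke monotonicity of the subdifferential of $\|\cdot\|_1$ to make the paired term $\langle \g-\g',\alfa-\betta\rangle$ nonnegative. The paper instead avoids subdifferentials altogether: it lifts the Lasso to an equivalent \emph{smooth} quadratic program over the convex cone $\mathcal{K}=\{\bar{\z}=[\z,\z^+,\z^-]:\z=\z^+-\z^-,\ \z^+,\z^-\geq 0\}$, applies the standard variational inequality $(\bar{\z}-\betta)^T\nabla Q(\betta)\geq 0$ at each of the two minimizers with the other minimizer as the test point, and subtracts, which yields $(\talfa-\alfa)^T\big(\D^T\D(\alfa-\talfa)+\D^T\v\big)\geq 0$ --- precisely the constrained-optimization analogue of your monotonicity step. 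Your argument is the more economical of the two (one subtraction plus a textbook monotone-operator fact, with no reformulation overhead); the paper's QP lift buys freedom from nonsmooth analysis at the cost of bookkeeping with the positive/negative splitting. Your side remarks are also on target: the caveat about non-uniqueness is handled by the paper's standing assumption of a deterministic solver, and your observation that only the fit $\D\enc[\D](\cdot)$ is nonexpansive --- not the code itself --- is exactly why the paper later combines this lemma with sparsity preservation and the RIP to control $\|\enc[\D](\x)-\enc[\D](\x+\v)\|_2$ in Lemma 5.2.
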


\begin{proof}
We will re-formulate the Lasso problem as an equivalent quadratic program, and then utilize optimality properties of its solution. 
Let us define the vector $\bz \in \mathbb{R}^{3p}$ such that $\bz = [\z,\z^+,\z^-]^T$, with $\z^+$ and $\z^-$ containing all positive and negative elements in $\z$, respectively. Define then the following quadratic cost
\begin{equation}
    Q(\bz,\x) = \frac{1}{2} \bz^T  \begin{bmatrix}
   \D^T\D & \mathbf{0}_{p\times 2p}  \\
   \mathbf{0}_{2p\times p}  & \mathbf{0}_{2p\times 2p}
   \end{bmatrix}
   \bz - \bz^T \begin{bmatrix}
   \D^T \\ \mathbf{0}_{2p\times d}
   \end{bmatrix}
   \x + \lambda [\mathbf{0}_{p}^T, \mathbf{1}_{2p}^T] \bz.
   \end{equation}
With this definition, the Lasso problem can be re-formulated as the following quadratic program:
\begin{equation}\label{eq:quad_program}
    \min_{\bz\in\mathbb{R}^{3p}} ~Q(\bz,\x) \quad \text{subject to} \quad \bz\in\mathcal{K} = \{\bz : \z = \z^+ - \z^- ; ~ \z^+,\z^- \geq 0\}.
\end{equation} 
Let us denote $Q(\bz) = Q(\bz,\x_0)$ and $\tilde{Q}(\bz) = Q(\bz,\x_0+\v)$ for short, and $\betta$ and $\tilde{\betta}$ as the solution to the quadratic program with $Q(\bz)$ and $\tilde{Q}(\bz)$, respectively. Moreover, denote $\alfa = \enc[\D]{(\x_0)}$ and $\talfa = \enc[\D]{(\x_0+\v)}$. 
With this notation, note that \[\betta = \begin{bmatrix} \alfa \\ \alfa^+ \\ \alfa^-
\end{bmatrix} \quad\text{and}\quad \tbetta = \begin{bmatrix} \talfa \\ \talfa^+ \\ \talfa^-
\end{bmatrix}.
\]

Note that the above problem in \eqref{eq:quad_program} is the minimization of a convex differentiable function over a convex set and therefore, for every $\bz\in\mathcal{K}$,
\begin{subequations}
\begin{align} \label{eq:quad_prog_normal}
    (\bz - \betta)^T \nabla_{\z} Q(\betta) \geq& 0 \\ \label{eq:quad_prog_advs}
    (\bz - \tilde{\betta})^T \nabla_{\z} \tilde{Q}(\tilde{\betta}) \geq& 0.
\end{align}
\end{subequations}
This gradient can be written as 
\begin{equation}
\nabla Q(\bz) =  \begin{bmatrix}
   \D^T\D & \mathbf{0}_{p\times 2p}  \\
   \mathbf{0}_{2p\times p}  & \mathbf{0}_{2p\times 2p}
   \end{bmatrix}
   \bz - \begin{bmatrix}
   \D^T \\ \mathbf{0}_{2p\times d}
   \end{bmatrix}
   \x + \lambda \begin{bmatrix} \mathbf{0}_{p} \\ \mathbf{1}_{2p}
   \end{bmatrix}.
\end{equation}
Now, choosing $\tilde{\betta}$ as $\bz$ in \eqref{eq:quad_prog_normal}, ${\betta}$ as $\bz$  in \eqref{eq:quad_prog_advs} and subtracting one from the other, we get
\begin{align}
(\tbetta-\betta)^T\left( \nabla Q(\betta) -  \nabla \tilde{Q}(\tbetta)  \right) \geq 0 
\end{align}
which after employing the definitions for $\betta,\tbetta,\nabla Q$ and $\nabla\tilde{Q}$ results in 
\begin{align}
    (\talfa - \alfa)^T \big( \D^T\D (\alfa - \talfa) + \D^T\v \big)\geq& 0.
\end{align}
The lemma is proven by finally expanding the above and employing Cauchy-Schwarz:
\begin{align}
    \|\D\talfa - \D\alfa\|^2_2 \leq& (\talfa - \alfa)^T \D^T\v \leq \|\v\|_2 \|\D\talfa - \D\alfa\|_2 \leq \nu \|\D\talfa - \D\alfa\|_2.
\end{align}
\end{proof}

\section{Robustness Certificate}
\label{supp:certificate}

\begin{theorem} [Robustness certificate for binary predictive sparse coding]  \label{thm:certificate}
Consider the predictor $f_{\D,\w}(\x)$, computed via $\enc[\D]{(\x)}$ with an encoder gap of $\tau_s(\x)$ and $\eta_s$-RIP dictionary $\D$. Then,
\begin{equation}
    \text{sign}(f_{\D,\w}(\x)) = \text{sign}(f_{\D,\w}(\x+\v)),\quad \forall \v:\|\v\|_2 \leq \nu
\end{equation}
so long as $\nu < \min\{~ \tau_s(\x)/2~ ,~ \rho_\x \sqrt{1-\eta_s}~ \}.$
\end{theorem}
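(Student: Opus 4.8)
The plan is to reduce the claim about the preserved sign directly to a bound on how much the score $f_{\D,\w}(\x)=\langle \w, \enc[\D](\x)\rangle$ can move under the perturbation, and then to control that movement via the representation-stability guarantee of \autoref{lemma:stability_of_rep_adversarial}. Here $\rho_\x$ denotes the (normalized) binary margin, $\rho_\x = |\langle \w,\enc[\D](\x)\rangle|/\|\w\|_2$, i.e.\ the signed distance of the feature $\enc[\D](\x)$ to the decision hyperplane $\{\z : \langle\w,\z\rangle = 0\}$. The elementary observation driving the proof is that if $|f_{\D,\w}(\x+\v) - f_{\D,\w}(\x)| < |f_{\D,\w}(\x)|$, then $f_{\D,\w}(\x+\v)$ and $f_{\D,\w}(\x)$ must share the same sign; so it suffices to make the score change strictly smaller than the current score magnitude.

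The first step is to write the score difference as $f_{\D,\w}(\x+\v) - f_{\D,\w}(\x) = \langle \w, \enc[\D](\x+\v) - \enc[\D](\x)\rangle$ and apply Cauchy--Schwarz to get $|f_{\D,\w}(\x+\v) - f_{\D,\w}(\x)| \leq \|\w\|_2 \,\|\enc[\D](\x+\v) - \enc[\D](\x)\|_2$. The second step is to invoke \autoref{lemma:stability_of_rep_adversarial}: the hypothesis $\nu < \tau_s(\x)/2$ guarantees $\tau_s(\x) > 2\nu$, which is exactly the precondition of that lemma, so $\|\enc[\D](\x+\v) - \enc[\D](\x)\|_2 \leq \nu/\sqrt{1-\eta_s}$ for every admissible $\v$. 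Combining the two yields $|f_{\D,\w}(\x+\v) - f_{\D,\w}(\x)| \leq \|\w\|_2\,\nu/\sqrt{1-\eta_s}$.

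The final step is to match this against the second budget constraint. Using $\nu < \rho_\x\sqrt{1-\eta_s}$ and the definition of the margin, $\|\w\|_2\,\nu/\sqrt{1-\eta_s} < \|\w\|_2\,\rho_\x = |\langle\w,\enc[\D](\x)\rangle| = |f_{\D,\w}(\x)|$, so the score change is strictly dominated by $|f_{\D,\w}(\x)|$ and the sign is preserved for all $\v$ with $\|\v\|_2\leq\nu$ simultaneously (the bound is uniform in $\v$, so taking the certificate over the whole ball is immediate). Because both pieces of heavy lifting---the representation stability and the preservation of the encoder support---are already discharged in \autoref{lemma:stability_of_rep_adversarial}, the certificate proof itself is essentially a two-line composition; the only real care needed is the bookkeeping that the two terms in the $\min$ play distinct roles (one guarantees the stability lemma applies, the other controls the margin), and fixing the precise normalization of $\rho_\x$ so that the $\|\w\|_2$ factor cancels cleanly. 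I expect the main conceptual obstacle to lie entirely in \autoref{lemma:stability_of_rep_adversarial}, not in this theorem.
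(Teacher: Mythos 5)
Your proposal is correct and follows essentially the same route as the paper: both proofs invoke \autoref{lemma:stability_of_rep_adversarial} (enabled by $\nu < \tau_s(\x)/2$, hence $\tau_s(\x) > 2\nu$), apply Cauchy--Schwarz to bound the score change by $\|\w\|_2\,\nu/\sqrt{1-\eta_s}$, and compare against the normalized margin $\rho_\x$ so that $\nu < \rho_\x\sqrt{1-\eta_s}$ preserves the sign. The only cosmetic difference is that the paper lower-bounds the perturbed score assuming WLOG $f_{\D,\w}(\x)>0$, whereas you phrase it as the score change being strictly smaller than $|f_{\D,\w}(\x)|$ --- the same inequality in both cases.
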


We now proceed to prove \autoref{thm:certificate}. We first must show that if there exist a positive encoder gap for a particular inactive set, this set will remain inactive under adversarial perturbations. This follows as a particular case of \autoref{lemma:preservation_of_sparsity} with $\epsilon = 0$, i.e. when there is no difference between the dictionaries: $\|\D-\tD\|_2 = 0$. We re-state it here for completeness in this simplified form.

\begin{corollary} \label{corollary:preservation_of_sparsity}
Consider $\enc[\D]{(\x_0)}$ and $\enc[\D]{(\x_0+\v)}$, for $\|\v\|_2\leq\nu$. If there exist a set of inactive $(p-s)$ atoms $\mathcal{I}$ in $\enc[\D]{(\x_0)}$ so that
\begin{equation}\label{eq:enc_gap_ineq}
     |\D_i^T(\x_0 -\D\enc[\D]{(\x_0)})| < \lambda - \tau_s
\end{equation}
for all $i \in \mathcal{I}$, and 
\begin{equation}
    \tau_s >  2 \nu,
\end{equation}
then $[\enc[\tD]{(\x_0+\v)}]_i = 0~\forall i\in\mathcal{I}$.
\end{corollary}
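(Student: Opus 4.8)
The plan is to exploit the fact that this corollary is precisely \autoref{lemma:preservation_of_sparsity} evaluated at $\epsilon = 0$, i.e. with $\tD = \D$, so that the required bound $\tau_s > 2\nu + \sqrt{\epsilon}(\dots)$ collapses to exactly $\tau_s > 2\nu$; invoking that lemma then yields the claim immediately. It is nonetheless instructive to spell out the short self-contained argument, which sidesteps the model-deviation machinery entirely and relies only on the Lasso optimality conditions together with the adversarial norm-stability bound of \autoref{lemma:adversarial_norm_stability}.

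Write $\alfa = \enc[\D]{(\x_0)}$ and $\talfa = \enc[\D]{(\x_0+\v)}$ (recalling $\tD = \D$). The key observation is that to prove $\talfa_i = 0$ it suffices to establish the \emph{strict} inequality $|\D_i^T((\x_0+\v) - \D\talfa)| < \lambda$: by the subgradient optimality conditions for the Lasso, any coordinate with $\talfa_i \neq 0$ must satisfy $|\D_i^T((\x_0+\v)-\D\talfa)| = \lambda$, so a strict inequality rules out that coordinate being active and forces $\talfa_i = 0$. Thus the whole proof reduces to upper bounding the correlation of each inactive atom with the residual evaluated at the perturbed point.

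First I would telescope the perturbed residual through the unperturbed quantities, writing $(\x_0+\v) - \D\talfa = (\x_0 - \D\alfa) + \v + (\D\alfa - \D\talfa)$, and apply the triangle inequality to split $|\D_i^T((\x_0+\v)-\D\talfa)|$ into three terms. The first term is controlled directly by the encoder-gap hypothesis \eqref{eq:enc_gap_ineq}, giving $|\D_i^T(\x_0 - \D\alfa)| < \lambda - \tau_s$. The second is bounded by Cauchy--Schwarz and the unit-norm columns, $|\D_i^T\v| \leq \|\D_i\|_2\|\v\|_2 \leq \nu$. For the third term, again by Cauchy--Schwarz, $|\D_i^T(\D\alfa - \D\talfa)| \leq \|\D\enc[\D]{(\x_0)} - \D\enc[\D]{(\x_0+\v)}\|_2$, and this is exactly where \autoref{lemma:adversarial_norm_stability} enters, bounding it by $\nu$.

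Combining the three pieces gives $|\D_i^T((\x_0+\v)-\D\talfa)| < (\lambda - \tau_s) + \nu + \nu = \lambda - (\tau_s - 2\nu)$, which is strictly below $\lambda$ precisely under the hypothesis $\tau_s > 2\nu$. Hence every atom in $\mathcal{I}$ remains inactive, as claimed. The only nontrivial ingredient --- and the step I expect to be the main obstacle --- is the third term: since $\D$ is overcomplete the Lasso reconstruction need not vary in any naive coordinatewise-Lipschitz fashion under a perturbation of the input, so controlling $\|\D\alfa - \D\talfa\|_2$ genuinely requires the variational-inequality (quadratic-program) argument underpinning \autoref{lemma:adversarial_norm_stability} rather than a direct estimate.
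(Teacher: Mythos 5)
Your proposal is correct and takes essentially the same route as the paper, which likewise obtains the corollary as the $\epsilon=0$ specialization of \autoref{lemma:preservation_of_sparsity}, the condition $\tau_s > 2\nu + \sqrt{\epsilon}(\cdots)$ collapsing to $\tau_s > 2\nu$. Your spelled-out direct argument is exactly that lemma's chain of inequalities with the model-deviation terms dropped: the same split of $\left|\D_i^T\left((\x_0+\v)-\D\enc[\D]{(\x_0+\v)}\right)\right|$ into the encoder-gap term $< \lambda-\tau_s$, the term $|\D_i^T\v|\leq \nu$, and the term $\|\D\enc[\D]{(\x_0)}-\D\enc[\D]{(\x_0+\v)}\|_2\leq\nu$ supplied by \autoref{lemma:adversarial_norm_stability}, followed by the subgradient optimality condition to conclude inactivity.
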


With this result, we now present a Lemma guaranteeing that the original and adversarially perturbed representation are not too far.

\setcounter{section}{5}
\renewcommand{\thesection}{\arabic{section}}
\setcounter{theorem}{1}

\begin{lemma}[Stability of representations under adversarial perturbations]
Let $\enc[\D]{(\x_0)}$ and $\enc[\D]{(\x_0+\v)}$, for $\|\v\|_2\leq\nu$. If $\enc[\D]{(\x_0)}$ has an encoder gap $\tau_s >  2 \nu$, and the dictionary is RIP with constant $\eta_s$, then
\begin{equation}
    \| \enc[\D]{(\x_0)} - \enc[\D]{(\x_0+\v)} \|_2 \leq \frac{\nu}{\sqrt{1-\eta_s}}.
\end{equation}
\end{lemma}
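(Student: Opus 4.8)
The plan is to reduce the problem to a bound on an $s$-sparse vector and then invoke the restricted isometry property. The key observation is that, under the gap condition $\tau_s > 2\nu$, the two codes $\enc[\D]{(\x_0)}$ and $\enc[\D]{(\x_0+\v)}$ share a common set of inactive atoms, so that their difference is supported on only $s$ coordinates and is therefore a legitimate input to the RIP inequality.

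First I would apply \autoref{corollary:preservation_of_sparsity}, the $\epsilon=0$ specialization of the preservation-of-sparsity result (corresponding to $\|\D-\tD\|_2=0$). Since $\enc[\D]{(\x_0)}$ has an encoder gap $\tau_s > 2\nu$, there exists a set $\mathcal{I}$ of $(p-s)$ inactive atoms satisfying the gap inequality $|\D_i^T(\x_0-\D\enc[\D]{(\x_0)})| < \lambda - \tau_s$, and the corollary guarantees $[\enc[\D]{(\x_0+\v)}]_i = 0$ for all $i\in\mathcal{I}$. Because these same coordinates vanish in $\enc[\D]{(\x_0)}$ by the very definition of $\mathcal{I}$, the difference $\delt \coloneqq \enc[\D]{(\x_0)} - \enc[\D]{(\x_0+\v)}$ is supported on the complement $[p]\setminus\mathcal{I}$, which has cardinality $s$. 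Hence $\delt$ is $s$-sparse.

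Next I would apply the RIP lower bound to this $s$-sparse difference, $(1-\eta_s)\|\delt\|_2^2 \leq \|\D\delt\|_2^2$. The right-hand side equals $\|\D\enc[\D]{(\x_0)} - \D\enc[\D]{(\x_0+\v)}\|_2^2$, which is at most $\nu^2$ by \autoref{lemma:norm_stability_adverarial}. Combining the two inequalities yields $(1-\eta_s)\|\delt\|_2^2 \leq \nu^2$, and taking square roots gives the claimed bound $\|\delt\|_2 \leq \nu/\sqrt{1-\eta_s}$.

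The only place where genuine work hides is the first step: establishing that the perturbed code stays supported away from $\mathcal{I}$, so that the difference is truly $s$-sparse. This is essential, since RIP controls only sparse vectors; without it the reconstruction bound $\|\D\delt\|_2\leq\nu$ could not be transferred into a bound on $\|\delt\|_2$ itself. Fortunately, this is exactly the content of the preservation-of-sparsity corollary, so once that is invoked the remainder is a two-line combination of RIP with the already-established reconstruction stability.
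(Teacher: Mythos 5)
Your proposal is correct and follows exactly the paper's own proof: it invokes the $\epsilon=0$ specialization of the preservation-of-sparsity lemma to show the difference of codes is $s$-sparse, bounds $\|\D\enc[\D]{(\x_0)} - \D\enc[\D]{(\x_0+\v)}\|_2 \leq \nu$ via the norm-stability lemma, and combines the two through the RIP lower bound. No differences of substance from the paper's argument.
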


\setcounter{section}{2}
\renewcommand{\thesection}{\Alph{section}}

\begin{proof}
The proof of this result is now simple given our previous developments. On one hand, we have from  \autoref{lemma:norm_stability_adverarial} that
\begin{equation}
    \|\D\enc[\D]{(\x_0)} - \D\enc[\D]{(\x_0+\v)}\|^2_2\leq \nu^2.
\end{equation}
On the other hand, since $\enc[\D]{(\x_0)}$ has an encoder gap of $\tau_s >  2 \nu$, there exist an inactive set of $(p-s)$ atoms that is retained in $\enc[\D]{(\x_0+\v)}$ by  \autoref{corollary:preservation_of_sparsity}. Thus, $\|\enc[\D]{(\x_0)} - \enc[\D]{(\x_0+\v)}\|_0\leq s$. As a result, since $\D$ is $\eta_s$-RIP, we can write
\begin{equation}
    \|\D\enc[\D]{(\x_0)} - \D\enc[\D]{(\x_0+\v)}\|^2_2 \geq (1-\eta_s)\|\enc[\D]{(\x_0)} - \enc[\D]{(\x_0+\v)}\|^2_2.
\end{equation}
Combining the lower and upper bounds proves the claim.
\end{proof}


We are now ready to prove the result in \autoref{thm:certificate}. 

\begin{proof}
The proof is simple and inspired by the analysis in \citep{romano2019adversarial}.
Recall that the hypothesis is implemented by $f_{\D,\w}(\x) = \langle \w , \enc[\D]{(\x)} \rangle$. Since $\enc[\D]{(\x)}$ has an encoder gap $ \tau_s \geq 2\nu$, then it follows from the above  \autoref{lemma:stability_of_rep_adversarial} that
\begin{equation}
    \| \enc[\D]{(\x_0)} - \enc[\D]{(\x_0+\v)} \|_2 \leq \frac{\nu}{\sqrt{1-\eta_s}}.
\end{equation}
Without loss of generality, consider the case when $f_{\D,\w}(\x)>0$. Let us lower bound $f_{\D,\w}(\x+\v)$ as follows:
\begin{align}
    \langle \w , \enc[\D]{(\x+\v)} \rangle =& ~\langle \w , \enc[\D]{(\x)} \rangle + \langle \w , \enc[\D]{(\x+\v)} - \enc[\D]{(\x)} \rangle \\
    \geq& ~\rho_\x\|\w\|_2 - | \langle \w , \enc[\D]{(\x+\v)} - \enc[\D]{(\x)} \rangle | \\
    \geq& ~\|\w\|_2\left(\rho_\x - \frac{\nu}{\sqrt{1-\eta_s}}\right).  \\
\end{align}
Therefore, as long as $\rho_x > \frac{\nu}{\sqrt{1-\eta_s}}$ (and $\w\neq\mathbf{0}$), $sign(f_{\D,\w}(\x)) = sign(f_{\D,\w}(\x+\v))$.
\end{proof}

\setcounter{section}{5}
\renewcommand{\thesection}{\arabic{section}}
\setcounter{theorem}{0}

\begin{theorem}[Robustness Certificate for multiclass supervised sparse coding] Let $\rho_x$ be the multiclass classifier margin of $f_{\D,\w}(\x)$, with $\rho_\x>0$, composed of an encoder with gap of $\tau_s(\x)$ and $\eta_s$-RIP dictionary $\D$. Furthermore, denote by $c_\W = \max_{i\neq j}\|\W_i-\W_j\|_2$ Then,
    \begin{equation}
        \arg\max_{j\in[K]} ~[\W^T f_{\D,\w}(\x)]\ = \arg\max_{j\in[K]}~ [\W^T f_{\D,\w}(\x+\v)],\quad \forall~ \v:\|\v\|_2\leq \nu 
    \end{equation}
    so long as $\nu \leq \min\{ \tau_s(\x)/2 , \rho_\x \sqrt{1-\eta_s} / c_\W \}.$
\end{theorem}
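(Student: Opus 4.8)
The plan is to reduce the multiclass certificate to the representation--stability bound of \autoref{lemma:stability_of_rep_adversarial} and then propagate that bound through the linear scoring layer by means of the margin. First, I would observe that since the hypothesis attains $\rho_\x>0$, the label $y_i$ is the (unique) maximizer of $\W_j^T\enc[\D](\x)$ over $j\in[K]$; hence the clean prediction equals $y_i$, and certifying robustness is equivalent to showing that $y_i$ remains the maximizer on every perturbed input $\x+\v$ with $\|\v\|_2\le\nu$.

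Second, I would invoke the first half of the hypothesis, $\nu\le\tau_s(\x)/2$, which is precisely the condition $\tau_s(\x)\ge 2\nu$ required by \autoref{lemma:stability_of_rep_adversarial}. This immediately gives control on how far the feature vector can move under the attack, namely $\|\enc[\D](\x)-\enc[\D](\x+\v)\|_2\le \nu/\sqrt{1-\eta_s}$.

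Third, I would fix an arbitrary competing class $j\neq y_i$ and expand the perturbed pairwise score gap as
\begin{align}
(\W_{y_i}-\W_j)^T\enc[\D](\x+\v) &= (\W_{y_i}-\W_j)^T\enc[\D](\x) \\
&\quad + (\W_{y_i}-\W_j)^T\big(\enc[\D](\x+\v)-\enc[\D](\x)\big).
\end{align}
The first term is at least $\rho_\x$ by the definition of the multiclass margin (which is the \emph{minimum} gap over all $j\neq y_i$). For the second term I would apply Cauchy--Schwarz together with the stability bound and the definition $c_\W=\max_{i\neq j}\|\W_i-\W_j\|_2$, giving an absolute value of at most $c_\W\,\nu/\sqrt{1-\eta_s}$. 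Combining the two estimates yields
\begin{align}
(\W_{y_i}-\W_j)^T\enc[\D](\x+\v)\ \ge\ \rho_\x-\frac{c_\W\,\nu}{\sqrt{1-\eta_s}},
\end{align}
which is nonnegative exactly when $\nu\le\rho_\x\sqrt{1-\eta_s}/c_\W$, the second half of the hypothesis. Since this holds simultaneously for every competitor $j\neq y_i$, the class $y_i$ stays strictly on top and the $\arg\max$ is unchanged.

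The individual steps are routine; the only points requiring care are (i) making the perturbation bound \emph{uniform} over all $K-1$ competing classes, which is exactly why the single constant $c_\W$ (the largest pairwise distance between classifier vectors) appears in place of per-class norms, and (ii) the boundary case $\nu=\rho_\x\sqrt{1-\eta_s}/c_\W$, where the gap can vanish and a tie could in principle arise --- here I would either keep the relevant inequality strict, as in the binary \autoref{thm:certificate}, or remark that ties are resolved by a fixed rule so the reported label remains stable.
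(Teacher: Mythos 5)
Your proposal is correct and follows essentially the same route as the paper's own proof: both lower-bound the perturbed pairwise gap $\min_{j\neq y}\langle \W_y-\W_j,\enc[\D](\x+\v)\rangle$ by $\rho_\x - c_\W\,\nu/\sqrt{1-\eta_s}$ via Cauchy--Schwarz and the feature-stability bound of \autoref{lemma:stability_of_rep_adversarial} (enabled by $\tau_s(\x)\geq 2\nu$). Your extra remark on the boundary case $\nu=\rho_\x\sqrt{1-\eta_s}/c_\W$ is a valid refinement the paper glosses over, since its proof actually requires the strict inequality $\rho_\x > c_\W\,\nu/\sqrt{1-\eta_s}$.
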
 

\setcounter{section}{3}
\renewcommand{\thesection}{\Alph{section}}

\begin{proof}
Consider a sample with a positive multiclass margin:
\[\rho_\x = \W^T_{y} \enc[\D](\x) - \max_{j\neq y} \W^T_{j} \enc[\D](\x) > 0, \]
and let us lower-bound the margin on the perturbed input $f_{\D,\w}(\x+\v)$ as follows:
\begin{align}
    \rho_{\x+\v} =& \W^T_{y} \enc[\D](\x+\v) - \max_{j\neq y} \W^T_{j} \enc[\D](\x+\v) \\
    =& \min_{j\neq y} \langle \W_y - \W_j , \enc[\D]{(\x+\v)} \rangle \\
    \geq& ~ \min_{j\neq y} \langle \W_y - \W_j , \enc[\D]{(\x)} \rangle - \| \W_y - \W_j\|_2 \|\enc[\D]{(\x+\v)} - \enc[\D]{(\x)}\|_2 \\
    \geq& ~ \rho_\x - c_\W~ \nu / \sqrt{1-\eta_s},
\end{align}
where the second-to-last inequality follows from hypothesis and \autoref{lemma:stability_of_rep_adversarial}.
Therefore, as long as $\rho_x > \frac{c_\W \nu}{\sqrt{1-\eta_s}}$, $\rho_{\x+\v}>0$.

\end{proof}

\section{Numerical Experiments Details}
\label{supp:numerical}

The models on images (MNIST and CIFAR10) were trained by minimizing the following regularized empirical risk
\begin{equation}\label{eq:training_supp}
    \min_{\W,\D}~ \frac{1}{m} \sum_{i=1}^m \ell\left(y_i,\langle \W,\enc[\D]{(\x_i)} \rangle\right) + \alpha \|\I - \D^T\D\|^2_F + \beta \|\W\|^2_F,
\end{equation}
over the training set with $m$ samples. We use the default training/testing split provided in the datasets. The difficulty in this optimization problem resides in computing the derivative of this loss w.r.t. the dictionary $\D$ via the solution of the encoder $\enc[\D]{(\x)}$. Our approach relies on using an approximate but differentiable solution for $\enc[\D]{(\x)}$: we compute the features (by solving the corresponding Lasso problem) via Fast Iterative Soft Thresholding \citep{beck2009fast}. This algorithm enjoys a fast convergence rate of $\mathcal O(1/T^2)$, and we use $T=25$ iterations within the optimization problem above.

We found it useful to pre-train the model, $\D$, in an unsupervised manner first. This is done by simply minimizing a regression problem of the form
\[\min_{\D}~ \frac{1}{m} \sum_{i=1}^m \|\x_i -\D \enc[\D]{(\x_i)} \|^2_2.\]

Additionally, when performing the supervised learning stage, if progressively increase the value of $\lambda$ through the iterations until the pre-specified target value (which where set to 0.2 and 0.3 in \autoref{fig:mnist_comparison_a} and \autoref{fig:mnist_comparison_a}, respectively). We employ Adam \citep{kingma2014adam} with a mini-batch size of 128, and train for 35 epochs. The dictionary is normalized after each weight-update. All other hyper-parameters are detailed in the accompanying code.

At deployment time, however, it is important that the solution computed by $\enc[\D]{(\x)}$ is exact, because the encoder gap $\tau_s$ is defined in terms of these optimality conditions. Therefore, we use FISTA to find the estimated support of the solution, and then compute the exact solution analytically given this support.

All experiments were coded in Python and employing pytorch for GPU acceleration. All other employed packages are detailed in the accompanying code.

\end{document}